\theoremstyle{plain}
\newtheorem{theorem}{Theorem}[section]
\theoremstyle{definition}
\theoremstyle{remark}
\DeclareMathOperator*{\argmin}{argmin}
\DeclareMathOperator*{\polylog}{polylog}
\newmdenv[innerlinewidth=0.5pt, roundcorner=4pt,innerleftmargin=6pt,
innerrightmargin=12pt,innertopmargin=6pt,innerbottommargin=6pt]{mybox}
\DeclareFontFamily{U}{jkpmia}{}
\DeclareFontShape{U}{jkpmia}{m}{it}{<->s*jkpmia}{}
\DeclareFontShape{U}{jkpmia}{bx}{it}{<->s*jkpbmia}{}
\DeclareMathAlphabet{\mathfrak}{U}{jkpmia}{m}{it}
\SetMathAlphabet{\mathfrak}{bold}{U}{jkpmia}{bx}{it}
\title{Self-Calibrating Conformal Prediction}
\author{%
 Lars van der Laan \\
    University of Washington \\
    \texttt{lvdlaan@uw.edu}\\
    \And
    Ahmed M. Alaa \\
    UC Berkeley and UCSF \\
    \texttt{amalaa@berkeley.edu}\\
}
\begin{document}

\maketitle

\begin{abstract}
In machine learning, model calibration and predictive inference are essential for producing reliable predictions and quantifying uncertainty to support decision-making. Recognizing the complementary roles of point and interval predictions, we introduce \textit{Self-Calibrating Conformal Prediction}, a method that combines Venn-Abers calibration and conformal prediction to deliver calibrated point predictions alongside prediction intervals with finite-sample validity conditional on these predictions. To achieve this, we extend the original Venn-Abers procedure from binary classification to regression. Our theoretical framework supports analyzing conformal prediction methods that involve calibrating model predictions and subsequently constructing conditionally valid prediction intervals on the same data, where the conditioning set or conformity scores may depend on the calibrated predictions. Real-data experiments show that our method improves interval efficiency through model calibration and offers a practical alternative to feature-conditional validity.


\end{abstract}

\section{Introduction}

Particularly in safety-critical sectors, such as healthcare, it is important to ensure decisions inferred from machine learning models are reliable, under minimal assumptions \citep{mandinach2006theoretical, veale2018fairness, char2018implementing, quest2018risks, vazquez2022conformal}. As a response, there is growing interest in predictive inference methods that quantify uncertainty in model predictions via prediction intervals \citep{patel1989prediction,heskes1996practical}. Conformal prediction (CP) is a popular, model-agnostic, and distribution-free framework for predictive inference, which can be applied post-hoc to any prediction pipeline \citep{vovk2005algorithmic, shafer2008tutorial, balasubramanian2014conformal, lei2018distribution}. Given a prediction issued by a black-box model, CP outputs a prediction interval that is guaranteed to contain the unseen outcome with a user-specified probability \citep{lei2018distribution}. However, a limitation of CP is that this prediction interval only provides valid coverage marginally, when averaged across all possible contexts -- with `context' referring to the information available for decision-making.  Constructing informative prediction intervals that offer context-conditional coverage is generally unattainable without making additional distributional assumptions \citep{vovk2012conditional, lei2014distribution, foygel2021limits}. Consequently, there has been an upsurge in research developing CP methods that offer weaker, yet practically useful, notions of conditional validity; see, e.g., \citep{papadopoulos2008normalized, johansson2014regression, romano2020malice, jung2022batch, guan2023localized, gibbs2023conformal}.

In prediction settings, \textit{model calibration} is a desirable property of machine learning predictors that ensures that the predicted outcomes accurately reflect the true outcomes \citep{mincer1969evaluation, zadrozny2001obtaining, zadrozny2002transforming, gneiting2007probabilistic}. Specifically, a predictor is calibrated for the outcome if the average outcome among individuals with identical predictions is close to their shared prediction value \citep{gupta2020distribution}. Such a predictor is more robust against the over-or-under estimation of the outcome in extremes of predicted values. It also has the property that the best prediction of the outcome conditional on the model's prediction is the prediction itself, which facilitates transparent decision-making \citep{van2023causal}. There is a rich literature studying post-hoc calibration of prediction algorithms using techniques such as Platt's scaling \citep{platt1999probabilistic, cox1958two}, histogram binning \citep{zadrozny2001obtaining, gupta2020distribution, gupta2021distribution}, isotonic calibration \citep{zadrozny2002transforming, niculescu2005obtaining, van2023causal}, and Venn-Abers calibration \citep{vovk2012venn}.

Given the roles of both point and interval predictions in decision-making, we introduce a dual calibration objective that aims to construct (i) calibrated point predictions and (ii) associated prediction intervals with valid coverage conditional on these point predictions. Marrying model calibration and predictive inference, we propose a solution to this objective that combines two post-hoc approaches --- Venn-Abers calibration \citep{vovk2003self, vovk2012venn} and CP \citep{vovk2005algorithmic} --- to simultaneously provide point predictions and prediction intervals that achieve our dual objective in finite samples. In doing so, we extend the original Venn-Abers procedure from binary classification to the regression setting. Our theoretical and experimental results support the integration of model calibration into predictive inference methods to improve interval efficiency and interpretability.

\section{Problem setup}

\subsection{Notation}
We consider a standard regression setup in which the input \( X \in \mathcal{X} \subset \mathbb{R}^d\) corresponds to contextual information available for decision-making, and the output \( Y \in \mathcal{Y} \subset \mathbb{R}\)~is~an outcome of interest. We assume that we have access to a calibration dataset $\mathcal{C}_n=\{(X_i, Y_i)\}_{i=1}^{n}$ comprising $n$ \textit{i.i.d.} data points drawn from an unknown distribution~\(P:= P_X P_{Y \mid X}\). We assume access to a black-box predictor $f: \mathcal{X} \mapsto \mathcal{Y}$,~obtained by training an ML model on a~dataset~that is independent of $\mathcal{C}_n$. Throughout this paper, we do not make any assumptions on the model $f$ or~the~distribution~$P$.  For a quantile level $\alpha \in (0,1)$, we denote the ``pinball" quantile loss function \( \ell_{\alpha} \) by \(\ell_{\alpha}(f(x), y) := 1(y \geq f(x)) \cdot \alpha (y - f(x)) + 1(y < f(x)) \cdot 
(1-\alpha) (f(x) - y).\)

\subsection{Conditional predictive inference and a curse of dimensionality} 
Let $(X_{n+1}, Y_{n+1})$ be a new data point drawn from $P$ independently of the calibration data $\mathcal{C}_n$. {\bf Our high-level aim} is to develop a {\it predictive inference} algorithm that constructs a prediction interval \(\widehat{C}_{n}(X_{n+1})\) around the point prediction issued by the black-box model, i.e., \(f(X_{n+1})\). For this prediction interval to be deemed {\it valid}, it should {\it cover} the true outcome $Y_{n+1}$ with a probability $1-\alpha$. Conformal prediction (CP) is a method for predictive inference that can be applied in a post-hoc fashion to any black-box model \citep{vovk2005algorithmic}. The vanilla CP procedure issues prediction intervals that satisfy the marginal coverage condition:
\begin{align}
    \mathbb{P}(Y_{n+1} \in \widehat{C}_{n}(X_{n+1})) \geq 1-\alpha,
    \label{marginal}
\end{align}
where the probability $\mathbb{P}$ is taken with respect to the randomness in $\mathcal{C}_n$ and $(X_{n+1}, Y_{n+1})$. However, marginal coverage might lack utility~in~decision-making scenarios where decisions are context-dependent. A prediction band \(\widehat{C}_{n}(x)\) achieving 95\% coverage may exhibit arbitrarily poor coverage for specific contexts \(x\). Ideally, we would like this coverage condition to hold for each context $x \in \mathcal{X}$, i.e., the conventional notion of ``conditional validity'' requires
\begin{align}
    \mathbb{P}(Y_{n+1} \in \widehat{C}_{n}(X_{n+1})|X_{n+1}=x) \geq 1-\alpha,
    \label{condval}
\end{align}
for all $x \in \mathcal{X}$. However, previous work has shown that it is impossible to achieve (\ref{condval}) without distributional assumptions \citep{vovk2012conditional, lei2014distribution, foygel2021limits}.

 While context-conditional validity as in (\ref{condval}) is generally unachievable, it is feasible to attain weaker forms of conditional validity. Given any finite set of groups $\mathcal{G}$ and a grouping function $G: \mathcal{X} \times \mathcal{Y} \rightarrow \mathcal{G}$, Mondrian-CP offers coverage conditional on group membership, that is, $\mathbb{P}(Y_{n+1} \in \widehat{C}_ n(X_{n+1}) | G(X_{n+1}, Y_{n+1}) = g) \geq 1- \alpha, \; \forall g \in \mathcal{G}$ \citep{vovk2005algorithmic, romano2020malice}. Expanding upon group- and context-conditional coverage, A \textit{multicalibration} objective was introduced in \cite{deng2023happymap} that seeks to satisfy, for all $h$ in an (infinite-dimensional) class $\mathcal{F}$ of weighting functions (i.e., `covariate shifts'), the property:
 \begin{equation}
  \mathbb{E}\big[h(X_{n+1}) \big\{(1-\alpha) - 1\{Y_{n+1} \in \widehat{C}_{n}(X_{n+1})\} \big\}\big] = 0.\label{eqn::multical} 
\end{equation}

\cite{gibbs2023conformal} proposed a regularized CP framework for (approximately) achieving \eqref{eqn::multical} that provides a means to trade off the efficiency (i.e., width) of prediction intervals and the degree of conditional coverage achieved. However, \cite{foygel2021limits} and \cite{gibbs2023conformal} establish the existence of a ``curse of dimensionality": as the dimension of the context increases, smaller classes of weighting functions must be considered to retain the same level of efficiency. For group-conditional coverage, this curse of dimensionality manifests in the size of the subgroup class $\mathcal{G}$ \citep{foygel2021limits} via its VC dimension \citep{vapnik1994measuring}. Thus, especially in data-rich contexts, prediction intervals with meaningful multicalibration guarantees over the context space may be too wide for decision-making.

\subsection{A dual calibration objective}

In decision-making, both point predictions and prediction intervals play a role. For example, in scenarios with a low signal-to-noise ratio, prediction intervals may be too wide to directly inform decision-making, as their width is typically of the order of the standard deviation of the outcome. Point predictions might be used to guide decisions, while prediction intervals help quantify deviations of point predictions from unseen outcomes and assess the risk associated with these decisions.

Viewing the black-box model \( f \) as a scalar dimension reduction of the context \( x \), a natural relaxation of the infeasible objective of context-conditional validity in \eqref{condval} is prediction-conditional validity, i.e., \( P(Y_{n+1} \in \widehat{C}_{n}(X_{n+1})|f(X_{n+1})) \geq 1-\alpha \). Prediction-conditional validity ensures that the interval widths adapt to the outputs of the model \( f(\cdot) \), so that the intervals can be reliably used to quantify the deviation of model predictions from unseen outcomes. Since prediction-conditional validity only requires coverage conditional on a one-dimensional random variable, it avoids the curse of dimensionality associated with context-conditional validity. In addition, as illustrated in our experiments in Section \ref{section::realdata} and Appendix \ref{sup:exp}, when the heteroscedasticity (e.g., variance) in the outcome is a function of its conditional mean, prediction-conditional validity can closely approximate context-conditional validity, so long as the predictor estimates the conditional mean of the outcome sufficiently well.

Given the roles of both point and interval predictions in decision-making, we introduce a novel dual calibration objective, \textit{self-calibration}, that aims to construct (i) calibrated point predictions and (ii) associated prediction intervals with valid coverage conditional on these point predictions. Formally, given the model $f$ and calibration data \mbox{\footnotesize $\mathcal{C}_n \cup \{X_{n+1}\}$}, \textbf{our objective} is to post-hoc construct a calibrated point prediction $f_{n+1}(X_{n+1})$ and a compatible prediction interval \mbox{\footnotesize$\widehat{C}_{n+1}(X_{n+1})$} centered around $f_{n+1}(X_{n+1})$ that satisfies the following desiderata:
\vspace{.025in}
\begin{mybox}
\begin{enumerate}[label={(\roman*}), ref={\roman*}]
    \item \textbf{Perfectly Calibrated Point Prediction:}  \mbox{\small$f_{n+1}(X_{n+1}) = \mathbb{E}[Y_{n+1} \mid f_{n+1}(X_{n+1})].$} 
    \item \textbf{Prediction-Conditional Validity:}  \mbox{\small $\mathbb{P}(Y_{n+1} \in \widehat{C}_{n+1}(X_{n+1})|f_{n+1}(X_{n+1})) \geq 1-\alpha.$}  \label{desiderata::region}  
\end{enumerate}
\end{mybox}

Desideratum (i) states that the point prediction $f_{n+1}(X_{n+1})$ should be \textit{perfectly calibrated} --- or self-consistent \citep{flury1996self} --- for the true outcome $Y_{n+1}$ \citep{lichtenstein1977calibration, gupta2020distribution, van2023causal}. It is widely recognized that the calibration of model predictions is important to ensure their reliability, trustworthiness, and interpretability in decision-making \citep{lichtenstein1977calibration, zadrozny2001obtaining, bella2010calibration, guo2017calibration, davis2017calibration}. Desideratum (i) also improves interval efficiency by ensuring \mbox{\footnotesize $\widehat{C}_{n+1}(X_{n+1})$} is centered around an unbiased prediction, meaning the interval's width is driven by outcome variation rather than by prediction bias. Desideratum (ii) is a prediction interval variant of (i) that ensures the prediction interval $\widehat{C}_{n+1}(X_{n+1})$ is calibrated with respect to the model $f_{n+1}$, providing valid coverage for $Y_{n+1}$ within contexts with the same \textit{calibrated} point prediction. We refer to a predictive inference algorithm simultaneously satisfying (i) and (ii) as \textit{self-calibrating}, as such a procedure is automatically able to adapt to miscalibration in the model $f(\cdot)$ due to, e.g., model misspecification or distribution shifts, ensuring that the interval is constructed from a calibrated predictive model.

Self-calibration can also be motivated by a decision-making scenario where point predictions determine actions and prediction intervals are used to apply these actions selectively. When point predictions are sufficient statistics for actions, self-calibration implies that the point and interval predictions are accurate, on average, within the subset of all contexts receiving the same prediction and, therefore, the same action.


\section{Self-Calibrating Conformal Prediction}
A key advantage of CP is that it can be applied post-hoc to any black-box model $f$ without~disrupting~its~point predictions. However, desideratum (i) introduces a perfect calibration requirement for the point predictions of $f$, thereby interfering with the underlying model specification. In this section, we introduce Self-Calibrating CP (SC-CP), a modified version of CP that is self-calibrating in that it satisfies (i) and (ii), while preserving all the favorable properties of CP, including its finite-sample validity and post-hoc applicability.~Before describing our complete procedure in Section \ref{Sec31}, we provide background on point calibration and propose Venn-Abers calibration for regression.


\subsection{Preliminaries on point calibration}
Following the framing of \cite{van2023causal} (see also \cite{gupta2020distribution}), a \textit{point calibrator} is a post-hoc procedure that aims to learn a transformation $\theta_n: \mathbb{R} \rightarrow \mathbb{R}$ of the black-box model $f$ such that: (1) $\theta_n(f(X_{n+1}))$ is well-calibrated for $Y_{n+1}$ in the sense of Desideratum (i); and (2) $\theta_n \circ f$ is comparably predictive to $f$. Condition (2) ensures that in the process of achieving (1), the quality of the model $f$ is not compromised, and excludes trivial calibrators such as $\theta_n(f(\cdot)) := \frac{1}{n}\sum_{i=1}^n Y_i$ \citep{gupta2021distribution}. To our knowledge, this notion of calibration traces back to \cite{mincer1969evaluation}, who introduced the idea of regressing outcomes on predictions to achieve calibration in forecasting. 

Commonly-employed point calibrators include Platt's scaling \citep{platt1999probabilistic, cox1958two}, histogram (or quantile) binning \citep{zadrozny2001obtaining}, and isotonic calibration \citep{zadrozny2002transforming, niculescu2005obtaining}. Mechanistically, these point calibrators learn $\theta_n$ by regressing the outcomes $\{Y_i\}_{i=1}^n$ on the model predictions $\{f(X_i)\}_{i=1}^n$. Importantly, however, point calibration fundamentally differs from the regression task of learning $E_P[Y | f(X)]$, as calibration can be achieved without smoothness assumptions, allowing for misspecification of the regression task \citep{gupta2020distribution}. Histogram binning is a simple and distribution-free calibration procedure \citep{gupta2020distribution, gupta2021distribution} that learns $\theta_n$ via a histogram regression over a finite (outcome-agnostic) binning of the output space $f(\mathcal{X})$. Isotonic calibration is an outcome-adaptive binning method that uses isotonic regression \citep{barlow1972isotonic} to learn $\theta_n$ by minimizing the empirical mean square error over all 1D piece-wise constant, monotone nondecreasing transformations. Isotonic calibration is distribution-free --- it does not rely on monotonicity assumptions --- and, in contrast with histogram binning, it is tuning parameter-free and naturally preserves the mean-square error of the original predictor (as the identity transform is monotonic) \citep{van2023causal}. A key limitation of histogram binning and isotonic calibration is that their calibration guarantees are only approximate, and desideratum (i) only holds asymptotically.  


\subsection{Venn-Abers calibration}
For binary classification, \cite{vovk2012venn} proposed Venn-Abers calibration, which iterates isotonic calibration over imputations $y \in \mathcal{Y}$ of the unseen outcome $Y_{n+1}$ to provide calibrated multi-probabilistic predictions in finite samples. In this section, we generalize the Venn-Abers calibration procedure to regression, offering finite-sample calibration guarantees for non-binary outcomes.

Let \(\Theta_{\text{iso}}\) consist of all univariate, piecewise constant functions that are monotonically nondecreasing. Our Venn-Abers calibration procedure, outlined in Alg.~\ref{alg::isocal2}, is derived from an \textit{oracle} variant of isotonic calibration that provides a perfectly calibrated point prediction \textit{in finite samples}, but requires knowledge of the true outcome \(Y_{n+1}\). Specifically, the Venn-Abers calibration algorithm iterates over imputed outcomes \(y \in \mathcal{Y}\) for \(Y_{n+1}\) and applies isotonic calibration to the augmented dataset \mbox{\footnotesize\(\mathcal{C}_n \cup \{(X_{n+1}, y)\}\)} to produce a set of point predictions \mbox{\footnotesize\(f_{n, X_{n+1}}(X_{n+1}) := \{f_{n}^{(X_{n+1},y)}(X_{n+1}) : y \in \mathcal{Y} \}\}\)}. When the outcome space \(\mathcal{Y}\) is non-discrete, Alg.~\ref{alg::isocal2} may be infeasible to compute exactly and can be approximated by discretizing \(\mathcal{Y}\). Nonetheless, the range of the Venn-Abers multi-prediction can be feasibly computed as \mbox{\footnotesize\([f_n^{(x, y_{\text{min}})}(x), f_n^{(x, y_{\text{max}})}(x)]\)} where \([y_{\text{min}}, y_{\text{max}}] := \text{range}(\mathcal{Y})\), in light of the min-max representation of isotonic regression \citep{lee1983min}.

Unlike point calibrators, Venn-Abers calibration generates a set of calibrated predictions for each context \(X_{n+1}\), indexed by \(y \in \mathcal{Y}\). As we demonstrate later, this set prediction is guaranteed \textit{in finite samples} to include a perfectly calibrated point prediction, namely, the oracle prediction \mbox{\footnotesize\( f_{n}^{(X_{n+1},Y_{n+1})}(X_{n+1})\)} corresponding to the true outcome \(Y_{n+1}\). Moreover, each prediction in the set, being obtained via isotonic calibration, still enjoys the same large-sample calibration guarantees as isotonic calibration \citep{van2023causal}. By the stability of isotonic regression, as the size of the calibration set \(n\) increases, the width of this set of predictions rapidly narrows, eventually converging to a single, perfectly calibrated point prediction \citep{vovk2012venn}. Venn-Abers calibration thus provides a measure of epistemic uncertainty by producing a range of values for a perfectly calibrated point prediction. In cases with small sample sizes, standard isotonic calibration can overfit, leading to poorly calibrated point predictions. When this overfitting occurs, the Venn-Abers set prediction widens, reflecting greater uncertainty in the perfectly calibrated point prediction within the set \citep{johansson2023well}.

For the binary classification case, \cite{vovk2012venn} derived a (large-sample) calibrated point prediction from the Venn-Abers multi-prediction using a shrinkage approach. We can similarly construct, for each $x \in \mathcal{X}$, a point prediction as follows:
\begin{equation}
  {  \widetilde{f}_{n+1, x}(x)  :=   f_{n+1,x}^{\text{mid}}(x) + \frac{f_n^{(x, y_{\max})}(x) - f_n^{(x,y_{\min})}(x)}{y_{\max} - y_{\min} } \left\{\overline{y}_n -  f_{n+1,x}^{\text{mid}}(x)\right\},} \label{eqn::derivedpointpred}
\end{equation}
where~\mbox{\footnotesize$f_{n+1, x}^{\text{mid}}(x) := \frac{1}{2}\{f_n^{(x, y_{\max})}(x) + f_n^{(x, y_{\min})}(x)\}$} is the midpoint of the multi-prediction and \mbox{\footnotesize $\overline{y}_n := \frac{1}{n}\sum_{i=1}^n Y_i$}. The behavior of $\widetilde{f}_{n+1,x}(x)$ is natural; it shrinks the point prediction $f_{n+1,x}^{\text{mid}}(x)$ towards the average outcome $\overline{y}_n$ (a well-calibrated prediction) proportional to how uncertain we are in the calibration of $ f_{n+1,x}^{\text{mid}}(x)$. The ratio \mbox{\footnotesize\(\frac{1}{y_{\max} - y_{\min}}(f_n^{(x, y_{\max})}(x) - f_n^{(x, y_{\min})}(x))\)} measures the sensitivity of isotonic regression to the addition of a single data point to \(\mathcal{C}_n\), and a value closer to 1 corresponds to a higher degree of overfitting. In the extreme case where the calibration dataset is very large, we have \mbox{\footnotesize $f_n^{(x, y_{\max})}(x) \approx f_n^{(x, y_{\min})}(x)$}, implying that $\widetilde{f}_{n+1,x}(x) \approx f_{n+1,x}^{\text{mid}}(x)$. Conversely, in the opposite extreme where the calibration dataset is very small and isotonic regression overfits, we have \mbox{\footnotesize $f_n^{(x, y_{\max})}(x) \approx y_{\max}$ and $f_n^{(x, y_{\min})}(x) \approx y_{\min}$}, in which case $\widetilde{f}_{n+1,x}(x) \approx \overline{y}_n$. We could replace \(\overline{y}_n\) in \eqref{eqn::derivedpointpred} with any reference predictor, such as one calibrated using Platt's scaling or quantile-binning.

\begin{figure}
\begin{minipage}{.53\linewidth}
    \vspace{-0.5cm}
\begin{algorithm}[H]
\caption{Venn-Abers Calibration}
\label{alg::isocal2}
\begin{algorithmic}[1]{\footnotesize
\REQUIRE Calibration data~\mbox{\footnotesize $\mathcal{C}_n= \{(X_i, Y_i)\}_{i=1}^n$}, model \mbox{\footnotesize $f$}, context \mbox{\footnotesize $x \in \mathcal{X}$}.

\vspace{.02in}
 
\vspace{.05in}
\FOR{each $y \in \mathcal{Y}$}
\vspace{.025in}
    \STATE \mbox{\footnotesize Set augmented dataset $\mathcal{C}_{n}^{(x,y)} := \mathcal{C}_n \cup \{(x, y)\}$};
    \STATE Apply isotonic calibration to \mbox{\footnotesize $f$} using \mbox{\footnotesize $\mathcal{C}_{n}^{(x,y)}$}:

    \vspace{.05in}
    \,\mbox{\footnotesize $\theta_{n}^{(x,y)} := \argmin_{\theta \in \Theta_{\text{iso}}} \sum_{i \in \mathcal{C}_{n}^{(x,y)}} \{Y_i - \theta \circ f(X_i)\}^2$}.

    $f_{n}^{(x,y)}: = \theta_{n}^{(x,y)} \circ f$.
    \vspace{.025in}
\ENDFOR
\ENSURE \mbox{\footnotesize Multi-prediction $\{f_{n}^{(x,y)}(x): y \in \mathcal{Y} \}$}}.
\end{algorithmic}
\end{algorithm}
\end{minipage}
\begin{minipage}{0.4\linewidth} 
\begin{figure}[H]
         \vspace{-0.5cm}
    \centering
    \includegraphics[width=\linewidth]{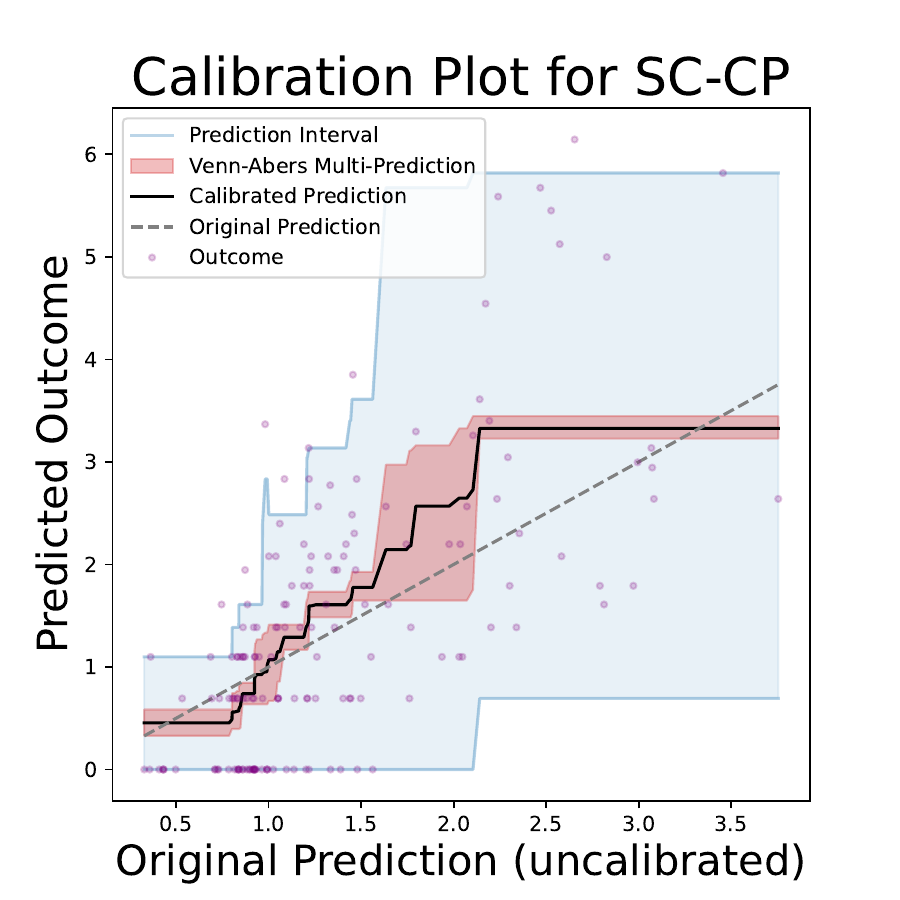}
     \captionsetup{font={scriptsize}}
      \vspace{-0.5cm}
 \caption{\tiny{Example SC-CP output with small $\mathcal{C}_n$ ($n = 200$)}.}
 \label{figure::VennAbers}
\end{figure}
\end{minipage}
\begin{minipage}{\linewidth}
\vspace{-0.4cm}
\begin{algorithm}[H]
\caption{Self-Calibrating Conformal Prediction}
\label{alg::conformal}
\begin{algorithmic}[1]{\footnotesize
\REQUIRE Calibration data $\mathcal{C}_n = \{(X_i, Y_i)\}_{i=1}^n$,  model $f$, context $x \in \mathcal{X}$, miscoverage level $\alpha \in (0,1)$

\vspace{.02in}
 

\vspace{.02in}
\FOR{each $y \in \mathcal{Y}$}
    \STATE Obtain calibrated model $f_n^{(x,y)}$ by isotonic calibrating $f$ on $\mathcal{C}_n \cup \{(x,y)\}$ as in Alg.~\ref{alg::isocal2};
    \STATE Set self-calibrating conformity scores $S_i^{(x,y)} = |Y_i - f_{n}^{(x,y)}(X_i)|, \forall i \in [n]$ and $S_{n+1}^{(x,y)} = |y - f_n^{(x,y)}(x)|$;
     \STATE Calculate $1-\alpha$ empirical quantile $\rho_{n}^{(x,y)}(x)$ of conformity scores with same calibrated prediction as $x$:
    \vspace{-.1in}
    \[
    \argmin_{q \in \mathbb{R}} \sum_{i=1}^{n}\boldsymbol{1}{\left\{f_n^{(x,y)}(X_{i}) = f_n^{(x,y)}(x) \right\}} \cdot \ell_{\alpha}(q, S_i^{(x,y)}) + \ell_{\alpha}(q, S_{n+1}^{(x,y)}) ; 
    \]
    \vspace{-.1in}

\ENDFOR

\vspace{.05in}
\STATE Set $f_{n+1}(x) := \{f_n^{(x,y)}(x) : y \in \mathcal{Y}\}$. 
\STATE Set $\widehat{C}_{n+1}(x) := \{ y \in \mathcal{Y} :    |y - f_n^{(x,y)}(x)| \leq   \rho_{n}^{(x,y)}(x) ]\}$. 
\vspace{.045in}
\ENSURE $f_{n+1}(x)  \subset \text{conv}(\mathcal{Y})$, $\widehat{C}_{n+1}(x) \subset \mathcal{Y}$}
\end{algorithmic}
\end{algorithm}
\end{minipage}

     \vspace{-0.25cm}
\end{figure}

\subsection{Conformalizing Venn-Abers Calibration}
\label{Sec31}

In this section, we propose Self-Calibrating Conformal Prediction, which conformalizes the Venn-Abers calibration procedure to provide prediction intervals centered around the Venn-Abers multi-prediction that are self-calibrated in the sense of desiderata (i) and (ii).


A simple, albeit naive, strategy for achieving (i) and (ii) without finite-sample guarantees involves using the dataset $\mathcal{C}_n$ to calibrate point predictions of $f(\cdot)$ through isotonic regression, and then constructing prediction intervals from the $1-\alpha$ empirical quantiles of prediction errors within subgroups defined by unique values of the calibrated point predictions. To motivate our SC-CP algorithm, we introduce an infeasible variant of this seemingly naive procedure that is valid in finite samples, but can only be computed by an oracle that knows the unseen outcome $Y_{n+1}$. In this oracle procedure, we compute a perfectly calibrated prediction {\footnotesize $f_{n+1}^*{(X_{n+1})} := \theta_{n+1}^*(f(X_{n+1}))$} by isotonic calibrating $f$ using the oracle-augmented calibration set $\left\{(X_i, Y_i) \right\}_{i=1}^{n+1} \), where $\theta_{n+1}^* \in \argmin_{\theta \in \Theta_{\text{iso}}} \sum_{i=1}^{n+1}\left\{Y_i - \theta(f(X_i)) \right\}^2.$ Next, we compute the conformity scores \( S_i^* := | Y_i - f_{n+1}^{*}(X_{i}) | \) as the absolute residuals of the calibrated predictions. An oracle prediction interval is then given by \(C_{n+1}^{*}(X_{n+1}) := f_{n+1}^{*}(X_{n+1}) \pm \rho_{n+1}^{*}(X_{n+1})\), where \(\rho_{n+1}^{*}(X_{n+1})\) is the empirical $1-\alpha$ quantile of conformity scores with calibrated predictions identical to \(X_{n+1}\), that is, scores in the set \(\{S_i^{*} : f_{n+1}^{*}(X_{i}) = f_{n+1}^{*}(X_{n+1}),\, i \in [n+1] \}\). Importantly, isotonic regression, which is an outcome-adaptive histogram binning method, ensures that the calibrated model $f_{n+1}^{*}$ is piece-wise constant, with a sufficiently large number of observations averaged within each constant segment---typically on the order of $n^{2/3}$ \citep{deng2021confidence}. Consequently, the empirical quantile $\rho_{n+1}^{*}(X_{n+1})$ is generally stable with relatively low variability across realizations of $\mathcal{C}_n$.~In our proofs, using the first-order conditions characterizing the optimizer $\theta_{n+1}$ and exchangeability, we show that $f_{n+1}^{*}(X_{n+1}) = \mathbb{E}[Y_{n+1} | f_{n+1}^{*}(X_{n+1})]$, so that desideratum (i) is satisfied. Furthermore, we establish that the interval $C_{n+1}^{*}(X_{n+1})$ achieves desideratum (ii), i.e., \( \mathbb{P}(Y_{n+1} \in C_{n+1}^{*}(X_{n+1}) \mid f_{n+1}^{*}(X_{n+1})) \geq  1 - \alpha \). To do so, our key insight is that \(\rho_{n+1}^{*}(X_{n+1})\) corresponds to the evaluation of the function $\rho_{n+1}^{*}$ computed via prediction-conditional quantile regression as:
\vspace{-0.1cm}
\[
\rho_{n+1}^{*} \in \argmin_{\theta \circ f_{n+1}^{*}; \theta: \mathbb{R} \to \mathbb{R}} \frac{1}{n+1} \sum_{i=1}^{n+1} \ell_{\alpha} \left( \theta \circ f_{n+1}^{*}(X_i), S_i \right).
\]
The first-order conditions characterizing the optimizer $\rho_{n+1}^{*}$ combined with the exchangeability between $\mathcal{C}_n$ and $(X_{n+1}, Y_{n+1})$ can be used to show that $C_{n+1}^{*}(X_{n+1})$ is {\it multi-calibrated} against the class of weighting functions \(\mathcal{F}_{n+1} := \left\{ \theta \circ f_{n+1}^{*};\ \theta:\mathbb{R} \to \mathbb{R} \right\}\) in the sense of \eqref{eqn::multical}.  Using first-order conditions to establish the theoretical validity of conformal prediction was also applied by \cite{gibbs2023conformal} to demonstrate the multi-calibration of oracle prediction intervals obtained from quantile regression over a fixed class $\mathcal{F}$. In our case, quantile regression is performed over a data-dependent function class $\mathcal{F}_{n+1}$, learned from the calibration data, which introduces additional challenges in our proofs.

Our SC-CP method, which is outlined in Alg.~\ref{alg::conformal}, follows a similar procedure to the above oracle procedure. Since the new outcome $Y_{n+1}$ is unobserved, we instead iterate the oracle procedure over all possible imputed values $y \in \mathcal{Y}$ for $Y_{n+1}$. As in Alg. 1., this yields a set of isotonic calibrated models \mbox{\footnotesize $f_{n, X_{n+1}}  := \{f_{n}^{(X_{n+1},y)}: y \in \mathcal{Y}\}$}, where $f_{n+1}(X_{n+1})$ is the Venn-Abers multi-prediction of $Y_{n+1}$. Then, for each \(y \in \mathcal{Y}\) and \(i \in [n]\), we define the \textit{self-calibrating conformity scores} \(S_i^{(X_{n+1}, y)} := |Y_i - f_n^{(X_{n+1}, y)}(X_{i})|\) and \(S_{n+1}^{(X_{n+1}, y)} := |y - f_n^{(X_{n+1}, y)}(X_{n+1})|\), where the dependency of our scores on the imputed outcome \(y \in \mathcal{Y}\) is akin to Full (or transductive) CP \citep{vovk2005algorithmic}. Our SC-CP interval is then given by $\widehat{C}_{n+1}(X_{n+1}) := \{ y \in \mathcal{Y} : S_{n+1}^{(X_{n+1}, y)} \leq \rho_{n}^{(X_{n+1}, y)}(X_{n+1}) \}$, where $\rho_{n}^{(X_{n+1}, y)}(X_{n+1})$ is the empirical $1-\alpha$ quantile of the level set $\{S_{i}^{(X_{n+1}, y)}: f_{n}^{(X_{n+1},y)}(X_{i}) = f_{n}^{(X_{n+1},y)}(X_{n+1}),\, i \in [n+1]\}$. By definition, \(\widehat{C}_{n+1}(X_{n+1})\) covers \(Y_{n+1}\) if, and only if, the oracle interval \(C_{n+1}^*(X_{n+1})\) covers \(Y_{n+1}\), thereby inheriting the self-calibration of \(C_{n+1}^*(X_{n+1})\). Formally, \(\widehat{C}_{n+1}(X_{n+1})\) is a set, but it can be converted to an interval by taking its range, with little efficiency loss.

 \subsection{Computational considerations}
 \label{sec::compute}

  The main computational cost of Alg.~\ref{alg::isocal2} and Alg.~\ref{alg::conformal} is in the isotonic calibration step, executed for each $y \in \mathcal{Y}$. Isotonic regression \citep{barlow1972isotonic} can be scalably and efficiently computed using implementations of \texttt{xgboost} \citep{xgboost} for univariate regression trees with monotonicity constraints. Similar to Full (or transductive) CP \citep{vovk2005algorithmic}, Alg.~\ref{alg::conformal} may be computationally infeasible for non-discrete outcomes, and can be approximated by iterating over a finite subset of $\mathcal{Y}$. In our implementation, we iterate over a grid of $\mathcal{Y}$ and use linear interpolation to impute the threshold ${ \rho_n^{(x,y)}(x)}$ and score ${ S_{n+1}^{(x,y)}}$ for each $y \in \mathcal{Y}$. As with Full CP and multicalibrated CP \citep{gibbs2023conformal}, Alg.~\ref{alg::isocal2} and Alg.~\ref{alg::conformal} must be separately applied for each context $x \in \mathcal{X}$. The algorithms depend solely on $x \in \mathcal{X}$ through its prediction $f(x)$, so we can approximate the outputs for all $x \in \mathcal{X}$ by running each algorithm for a finite number of $x \in \mathcal{X}$ corresponding to a finite grid of the 1D output space $f(\mathcal{X}) = \{f(x) : x \in \mathcal{X}\} \subset \mathbb{R}$. In addition, both algorithms are fully parallelizable across both the input context $x \in \mathcal{X}$ and the imputed outcome $y \in \mathcal{Y}$. In our implementation, we use nearest neighbor interpolation in the prediction space to impute outputs for each $x \in \mathcal{X}$. In our experiments with sample sizes ranging  from $n = 5000$ to $40000$, quantile binning of both $f(\mathcal{X})$ and $\mathcal{Y}$ into 200 equal-frequency bins enables execution of Alg.~\ref{alg::isocal2} and Alg.~\ref{alg::conformal} across all contexts in minutes with negligible approximation error.

\section{Theoretical guarantees}
In this section, under exchangeability of the data, we establish that the Venn-Abers multi-prediction {\footnotesize$f_{n, X_{n+1}}(X_{n+1}) := \{ f_{n}^{(X_{n+1}, y)}(X_{n+1}):y \in \mathcal{Y}\}$} and SC-CP interval $\widehat{C}_{n+1}(X_{n+1})$ output by Alg.~\ref{alg::conformal} satisfy desiderata (i) and (ii) in finite samples and without distributional assumptions. Under an \textit{iid} condition, we further establish that, asymptotically, the Venn-Abers calibration step within the SC-CP algorithm results in better point predictions and, consequently, more efficient prediction intervals.

The following theorem establishes that the Venn-Abers multi-prediction is perfectly calibrated in the sense of \cite{vovk2003self}, containing a perfectly calibrated point prediction of $Y_{n+1}$ in finite samples. 

   \begin{enumerate}[label=\bf{C\arabic*)}, ref={C\arabic*}, series=theorem]
  \item \textit{Exchangeability:} $\{(X_i, Y_i)\}_{i=1}^{n+1}$ are exchangeable. \label{cond::exchange}
   \item \textit{Finite second moment:} $E_P[Y^2] < \infty$.
  \label{cond::variance}
\end{enumerate}

\begin{theorem}[Perfect calibration of Venn-Abers multi-prediction]
   Under Conditions \ref{cond::exchange} and \ref{cond::variance}, the Venn-Abers multi-prediction $f_{n, X_{n+1}}(X_{n+1})$  almost surely satisfies the condition $f_n^{(X_{n+1}, Y_{n+1})}(X_{n+1}) = \mathbb{E}[Y_{n+1} \mid f_n^{(X_{n+1}, Y_{n+1})}(X_{n+1})]$.
   \label{theorem::point}
\end{theorem}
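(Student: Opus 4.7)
Write $f_{n+1}^{*} := f_n^{(X_{n+1}, Y_{n+1})}$ for the oracle Venn-Abers predictor fit on the augmented data $\mathcal{C}_{n+1}^{*} = \mathcal{C}_n \cup \{(X_{n+1}, Y_{n+1})\}$. The plan is to combine two ingredients: (a)~a level-set mean property of the isotonic calibration step in Algorithm~\ref{alg::isocal2}; and (b)~the permutation symmetry of that algorithm in its $n+1$ inputs, which under Condition~\ref{cond::exchange} makes the pairs $\{(Y_i, f_{n+1}^{*}(X_i))\}_{i=1}^{n+1}$ identically distributed. These two facts combine into the self-consistency identity via a simple counting calculation.

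For ingredient (a), I would show that on every level set of $f_{n+1}^{*}$ the fitted value equals the per-leaf empirical mean of the corresponding $Y_i$'s. Fix a value $v$ in the (finite) range of $f_{n+1}^{*}$ and let $L_v := \{i \in [n+1] : f_{n+1}^{*}(X_i) = v\}$. Because leaves of the fitted tree are maximal blocks of constant value, adjacent leaves have strictly different values at the optimum, so the monotonicity constraint is not locally active on $L_v$; invariance property (b) of $\Theta_{\text{iso}}$ under leaf relabeling then lets one perturb the value on $L_v$ infinitesimally while remaining inside $\Theta_{\text{iso}}$. The first-order optimality condition for the squared loss in Algorithm~\ref{alg::isocal2} forces $v = |L_v|^{-1}\sum_{i \in L_v} Y_i$, which rearranges to
\[
\sum_{i=1}^{n+1} Y_i \, \boldsymbol{1}\{f_{n+1}^{*}(X_i) = v\} \;=\; v \cdot \sum_{i=1}^{n+1} \boldsymbol{1}\{f_{n+1}^{*}(X_i) = v\}.
\]

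For ingredient (b), since Algorithm~\ref{alg::isocal2} treats its calibration points symmetrically, $\theta_{n+1}^{*}$ is a permutation-invariant functional of $\{(X_i, Y_i)\}_{i=1}^{n+1}$. Condition~\ref{cond::exchange} then implies that the pair $(Y_i, f_{n+1}^{*}(X_i))$ has the same joint distribution for every $i \in [n+1]$. Taking expectations in the displayed identity and applying this symmetry on each side gives
\[
(n+1)\,\mathbb{E}\bigl[Y_{n+1}\,\boldsymbol{1}\{f_{n+1}^{*}(X_{n+1}) = v\}\bigr] \;=\; v \,(n+1)\,\mathbb{P}\bigl(f_{n+1}^{*}(X_{n+1}) = v\bigr),
\]
so $\mathbb{E}[Y_{n+1} \mid f_{n+1}^{*}(X_{n+1}) = v] = v$ for each value $v$ in the range of $f_{n+1}^{*}$. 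As this range is finite almost surely, we obtain the claimed identity $f_{n+1}^{*}(X_{n+1}) = \mathbb{E}[Y_{n+1} \mid f_{n+1}^{*}(X_{n+1})]$ almost surely, i.e.\ the element $f_n^{(X_{n+1}, Y_{n+1})}(X_{n+1})$ of the Venn-Abers set prediction $f_{n}^{(X_{n+1}, \diamond)}(X_{n+1})$ is self-consistent.

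The hard part is the careful justification of step~(a): one must argue that the monotonicity constraint embedded in $\Theta_{\text{iso}}$ is not locally active at a fitted leaf, so that the leaf value is the unconstrained minimizer of the per-leaf squared loss. This is classical for standard isotonic regression via the pool-adjacent-violators algorithm, but one must verify it for the paper's generalized tree-based calibrator class; invariance property~(b) in the definition of $\Theta_{\text{iso}}$ is tailored precisely to enable the leaf-wise re-optimization argument. Everything downstream is routine bookkeeping with exchangeability and the finite-valuedness of $f_{n+1}^{*}$.
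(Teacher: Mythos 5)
Your overall strategy matches the paper's: both proofs exploit the first-order optimality of the isotonic fit on the augmented calibration set, then apply exchangeability to replace an average over $i\in[n+1]$ by the $(n+1)$-st term. Your ingredient (a) — that on each level set the fitted value is the per-leaf empirical mean, justified by the monotonicity constraint not being locally active — is a valid alternative to the paper's more global argument, which instead perturbs in the direction $(1+\varepsilon g)\circ\theta$ and invokes the closure of $\Theta_{\text{iso}}$ under post-composition.

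There is, however, a genuine gap in how you close out the argument. You take expectations of the per-leaf identity at a \emph{fixed} value $v$, obtaining
\[
\mathbb{E}\bigl[Y_{n+1}\,\boldsymbol{1}\{f_{n+1}^{*}(X_{n+1})=v\}\bigr] = v\,\mathbb{P}\bigl(f_{n+1}^{*}(X_{n+1})=v\bigr),
\]
and conclude $\mathbb{E}[Y_{n+1}\mid f_{n+1}^{*}(X_{n+1})=v]=v$. But $f_{n+1}^{*}(X_{n+1})$ is a leaf mean of the $Y_i$'s, so its \emph{marginal} distribution (over realizations of the calibration data) can be continuous even though $f_{n+1}^{*}$ is piecewise constant. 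For any fixed $v$ the displayed identity then reads $0=0$ and carries no information, and the family of indicators $\{\boldsymbol{1}\{Z=v\}: v\in\mathbb{R}\}$ does not generate the $\sigma$-algebra of $Z:=f_{n+1}^{*}(X_{n+1})$ when that distribution is non-atomic. (Indeed, for any $Z$ with continuous marginal and any $Y$, the identity $\mathbb{E}[Y\boldsymbol{1}\{Z=v\}]=v\,\mathbb{P}(Z=v)$ holds trivially for all $v$, so it cannot by itself imply $\mathbb{E}[Y\mid Z]=Z$.)

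The fix is exactly the paper's device: rather than freezing a single atom $v$, multiply your per-leaf identity by $g(v)$ for an arbitrary fixed bounded $g:\mathbb{R}\to\mathbb{R}$, sum over the (finite, random) range of $f_{n+1}^{*}$, and \emph{then} take expectations. This produces $\mathbb{E}\bigl[g(f_{n+1}^{*}(X_{n+1}))\{Y_{n+1}-f_{n+1}^{*}(X_{n+1})\}\bigr]=0$ for every $g$, which is precisely the characterizing property of the conditional expectation and yields the claim without any discreteness assumption on $Z$. (It is also the same expression as the paper's Equation~\eqref{eqn::isoscores}, so your ingredient (a) and the paper's score equation are equivalent; only your final passage from level sets to conditional expectations needs this strengthening.)
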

Theorem \ref{theorem::point} generalizes an analogous result by \cite{vovk2012venn} for the special case of binary classification. Even in this special case, our proof is novel and elucidates how Venn-Abers calibration uses exchangeability with the least-squares loss in a manner analogous to how CP uses exchangeability with the quantile loss \citep{gibbs2023conformal}.

The following theorem establishes desideratum (ii) for the interval $\widehat{C}_{n+1}(X_{n+1})$ with respect to the oracle prediction  {\footnotesize $f_n^{(X_{n+1}, Y_{n+1})}(X_{n+1})$} of Theorem \ref{theorem::point}. In what follows, let $\polylog(n)$ be a given sequence that grows polynomially logarithmically in $n$. 

 \begin{enumerate}[label=\bf{C\arabic*)}, ref={C\arabic*}, resume=theorem]
      \item The conformity scores $|Y_{i} - f_n^{(X_{n+1}, Y_{n+1})}(X_{i})|$, $\forall i  \in [n+1]$, are almost surely distinct. \label{cond::distinct}
       \item The number of constant segments for $f_{n}^{(X_{n+1}, Y_{n+1})}$ is at most $n^{1/3}\polylog(n)$.  \label{cond::depth}
\end{enumerate}

\begin{theorem}[Self-calibration of prediction interval]
    Under \ref{cond::exchange}, it holds almost surely that
    $$\mathbb{P}\left(Y_{n+1} \in  \widehat{C}_{n+1}(X_{n+1}) \mid f_{n}^{(X_{n+1}, Y_{n+1})}(X_{n+1}) \right) \geq 1 - \alpha.$$
    If also \ref{cond::distinct} and \ref{cond::depth} hold, then \mbox{\footnotesize $\mathbb{E}\left|\alpha - \mathbb{P}\left(Y_{n+1}\not \in  \widehat{C}_{n+1}(X_{n+1})  \mid  f_{n}^{(X_{n+1}, Y_{n+1})}(X_{n+1}) \right) \right | \leq   \frac{\polylog(n)}{n^{2/3}}.$}
    \label{theorem::coverage}
\end{theorem}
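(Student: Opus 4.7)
The plan is to reduce the claim to a Mondrian-style split conformal argument applied on the level sets of the oracle Venn-Abers predictor $f_{n+1}^* := f_n^{(X_{n+1}, Y_{n+1})}$. By construction of $\widehat{C}_n$, evaluating the prediction set at $y = Y_{n+1}$ gives $\{Y_{n+1} \in \widehat{C}_n(X_{n+1})\} = \{S_{n+1}^* \leq \rho_{n+1}^*(X_{n+1})\}$, where $S_i^* := |Y_i - f_{n+1}^*(X_i)|$ and $\rho_{n+1}^*(X_{n+1})$ is the empirical $(1-\alpha)$-quantile of $\{S_i^* : i \in \mathcal{I}^*\}$ with leaf $\mathcal{I}^* := \{i \in [n+1] : f_{n+1}^*(X_i) = f_{n+1}^*(X_{n+1})\}$. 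So the analysis reduces to a split-conformal calculation within a random Mondrian partition.

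For the lower bound, I use that $f_{n+1}^*$ and the induced partition $\{\mathcal{I}^*_k\}_k$ of $[n+1]$ into level sets are symmetric functions of the augmented data $\mathcal{C}_{n+1}^*$, as they are outputs of (uniquely defined) isotonic regression. Let $E$ denote the unordered multiset of $(X_i,Y_i)$ pairs. Conditional on $E$, exchangeability (\ref{cond::exchange}) makes the identity of ``$(X_{n+1},Y_{n+1})$'' uniform over the $n+1$ points; conditioning also on $f_{n+1}^*(X_{n+1})$ fixes the leaf $\mathcal{I}^*$ and renders that identity uniform over $\mathcal{I}^*$. Hence the rank of $S_{n+1}^*$ among $\{S_i^*:i\in \mathcal{I}^*\}$ is uniform on $\{1,\dots,|\mathcal{I}^*|\}$ (with arbitrary tiebreaking), and the standard split-CP quantile calculation yields $\mathbb{P}(S_{n+1}^*\leq \rho_{n+1}^*(X_{n+1}) \mid E,\, f_{n+1}^*(X_{n+1})) \geq 1 - \alpha$ almost surely. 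The first claim follows by averaging over $E$.

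For the deviation bound, condition \ref{cond::distinct} promotes the rank to exact uniformity, so the conditional miscoverage on each leaf equals $\lfloor \alpha |\mathcal{I}^*|\rfloor / |\mathcal{I}^*|$ and differs from $\alpha$ by at most $1/|\mathcal{I}^*|$. The tower property and Jensen's inequality then give
\begin{align*}
\mathbb{E}\bigl|\alpha - \mathbb{P}(Y_{n+1}\notin \widehat{C}_n(X_{n+1}) \mid f_{n+1}^*(X_{n+1}))\bigr| \leq \mathbb{E}\bigl[1/|\mathcal{I}^*|\bigr].
\end{align*}
The size-biased identity $\mathbb{E}[1/|\mathcal{I}^*|\mid E] = \sum_k (|\mathcal{I}^*_k|/(n+1))(1/|\mathcal{I}^*_k|) = K(E)/(n+1)$, where $K(E)$ is the number of constant segments of $f_{n+1}^*$, combined with \ref{cond::depth}'s bound $K(E) \leq n^{1/3}\polylog(n)$ a.s., delivers the claimed $\polylog(n)/n^{2/3}$ rate.

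The main delicacy is that the partition $\{\mathcal{I}^*_k\}_k$ itself depends on the augmented data, so one cannot directly invoke a fixed-Mondrian argument. Conditioning on the unordered multiset $E$ circumvents this: since $f_{n+1}^*$ is permutation-invariant in its training inputs, the randomness in which index plays the role of ``$n+1$'' decouples cleanly from the randomness of the partition, after which everything reduces to a standard finite-sample conformal rank calculation.
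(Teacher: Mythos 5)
Your proof is correct, and it takes a genuinely different route from the paper's. The paper follows the Gibbs--Cherian--Cand\`es machinery: it writes down the first-order (subgradient) optimality conditions for the pinball-loss minimizer $\rho_n^{(X_{n+1},Y_{n+1})}$, averages them over indices using exchangeability, and then chooses a clever test function $\theta$ (the indicator of the undercoverage event for the lower bound, a sign function for the deviation bound); the polylog factor emerges because at most one score per constant segment can tie with $\rho$, and there are at most $n^{1/3}\polylog(n)$ segments under \ref{cond::depth}. You instead condition on the unordered multiset $E$ of the $n+1$ points, observe that $f_n^{(X_{n+1},Y_{n+1})}$ and the induced partition into level sets are symmetric functions of $E$, and exploit the fact that after further conditioning on $f_n^{(X_{n+1},Y_{n+1})}(X_{n+1})$ the identity of the test index is uniform on the cell $\mathcal{I}^*$, reducing everything to the standard within-cell rank calculation for split conformal. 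Your size-biased identity $\mathbb{E}[1/|\mathcal{I}^*|\mid E]=K(E)/(n+1)$ is a particularly elegant way to recover the $\polylog(n)/n^{2/3}$ rate and makes transparent why \ref{cond::depth} is precisely the condition needed; the paper reaches the same rate by counting ties per leaf. The trade-off is that your argument is tailored to the indicator-kernel (Mondrian) structure that arises from regression trees, whereas the paper's subgradient argument inherits the generality of the conditional-coverage framework over arbitrary function classes of shifts. One small point worth making explicit in a full write-up: since isotonic regression under a restricted calibrator class may have multiple minimizers, the permutation-invariance of $f_n^{(X_{n+1},Y_{n+1})}$ as a function of the multiset $E$ requires either uniqueness (which holds for the maximal class) or a symmetric tie-breaking rule; and the pinball minimizer $\rho_n^{(X_{n+1},Y_{n+1})}$ may be an interval when $\alpha m$ is an integer, though both endpoints yield miscoverage within $1/m$ of $\alpha$, so your bound is unaffected.
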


Theorem \ref{theorem::coverage} says that $\widehat{C}_{n+1}(X_{n+1})$ satisfies desideratum (ii) with coverage that is, on average, nearly exact up to a factor \mbox{\footnotesize $\frac{\polylog(n)}{n^{2/3}}$}. Notably, the deviation error from exact coverage tends to zero at a fast dimensionless rate and, therefore, does not suffer from a ``curse of dimensionality". Condition \ref{cond::distinct} is only required to establish the upper coverage bound and is standard in CP - see, e.g., \citep{Lietal2018, gibbs2023conformal}. Although it may fail for non-continuous outcomes, this condition can be avoided by adding a small amount of noise to all outcomes \citep{Lietal2018}. The constant segment number of $n^{1/3}\polylog(n)$ in \ref{cond::depth} is motivated by the theoretical properties of isotonic regression; Assuming \ref{cond::variance} and continuous differentiability of \mbox{\footnotesize $t \mapsto E_P[Y \mid f(X) = t]$}, it is shown in \cite{deng2021confidence} that the number of observations in a given constant segment of an isotonic regression solution concentrates in probability around $n^{2/3}$. In general, without \ref{cond::depth}, our proof establishes a miscoverage upper bound of \mbox{\footnotesize $\frac{1}{n+1}\mathbb{E} [ N_{n +1}  ] $}, where \mbox{\footnotesize $\mathbb{E} [ N_{n +1}   ]$} is the expected number of constant segments of \mbox{\footnotesize $f_{n}^{(X_{n+1}, Y_{n+1})}$}.



The next theorem examines the interaction between calibration and CP within SC-CP in terms of efficiency of the self-calibrating conformity scores. In the following, let $x \in \mathcal{X}, y \in \mathcal{Y}$ be arbitrary. For each $\theta \in \Theta_{iso}$, define the $\theta$-transformed conformity scoring function $S_{\theta}:(x',y') \mapsto |y' - \theta \circ f(x')|$. Let $\theta_0 := \argmin_{\theta \in \Theta_{iso}}\int  \{S_{\theta}(x', y')\}^2 dP(x',y')$ be the optimal isotonic transformation of $f(\cdot)$ that minimizes the population mean-square error. Define the self-calibrating conformity scoring function as $S_{n}^{(x,y)}(x', y') := |y' - f_n^{(x,y)}(x')|$, where $f_{n}^{(x,y)}$ is obtained as in Alg.~\ref{alg::isocal2}.  

 \begin{enumerate}[label=\bf{C\arabic*)}, ref={C\arabic*}, resume=theorem]
     \item \textit{Independent data:} $\{(X_i,Y_i)\}_{i=1}^{n+1}$ are \textit{iid}. \label{cond::indep}
     \item \textit{Bounded outcomes:} $\mathcal{Y}$ is a uniformly bounded set.  \label{cond::bounded}
\end{enumerate}

\begin{theorem}
Under \ref{cond::indep} and \ref{cond::bounded}, we have $\int  \{S_{n}^{(x,y)}(x', y') - S_{\theta_0}(x', y')\}^2 dP(x',y')=   O_p(n^{-2/3})$.
\label{theorem::interaction}
\end{theorem}
 
The above theorem indicates that the self-calibrating scoring function $S_{n}^{(x, y)}$ used in Alg.~\ref{alg::conformal} asymptotically converges in mean-square error to the oracle scoring function $S_{\theta_0}$ at a rate of $n^{-2/3}$. Since the oracle scoring function $S_{\theta_0}$ corresponds to a model $\theta_0 \circ f$ with better mean square error than $f$, we heuristically expect that the Venn-Abers scoring function $S_{n}^{(x,y)}$ will translate to narrower CP intervals, at least asymptotically. We provide experimental evidence for this heuristic in Section \ref{section::realdata}.

\textbf{Limitations.} The perfectly calibrated prediction {\footnotesize$f_n^{(X_{n+1}, Y_{n+1})}(X_{n+1})$}, guaranteed to lie by Theorem \ref{theorem::point} in the Venn-Abers multi-prediction, typically cannot be determined precisely without knowledge of $Y_{n+1}$. However, the stability of isotonic regression implies that the width of multi-prediction $f_{n+1}(X_{n+1})$ shrinks towards zero very quickly as the size of the calibration set increases \citep{caponnetto2006stability}. Moreover, the large-sample theory for isotonic calibration in \cite{van2023causal} demonstrates that the $\ell^2$-calibration error of each model {\footnotesize$f_{n}^{(X_{n+1}, y)}$} with $y \in \mathcal{Y}$ is $O_p(n^{-2/3})$. One caveat of SC-CP intervals is that desideratum (ii) is satisfied with respect to the unknown, oracle point prediction \mbox{\footnotesize$f_n^{(X_{n+1}, Y_{n+1})}(X_{n+1})$}. However, we know that this oracle prediction lies within the Venn-Abers multi-prediction by Theorem \ref{theorem::point}, and its value can be determined with high precision with relatively small calibration sets \citep{vovk2012venn} --- see, e.g., Figure \ref{figure::VennAbers}. These limitations appear to be unavoidable as perfectly calibrated point predictions can generally not be constructed in finite samples without oracle knowledge \citep{vovk2003self, vovk2012venn}.



\subsection{Related work}

 \label{sec::relatedwork}

The work of \cite{nouretdinov2018inductive} proposes a regression extension of Venn-Abers calibration that differs from ours, both algorithmically and in its objective. While our extension constructs a calibrated point prediction $f(X)$ of $Y$ such that $f(X) = \mathbb{E}[Y \mid f(X)]$, their approach uses the original Venn-Abers calibration procedure of \cite{vovk2012venn} to construct a distributional prediction $f_t(X)$ of $1(Y \leq t)$ that satisfies $f_t(X) = \mathbb{P}(Y \leq t \mid f_t(X))$ for $t \in \mathcal{Y}$. 

The impossibility results of \cite{gupta2020distribution} imply that any universal procedure providing prediction-conditionally calibrated intervals must explicitly or implicitly discretize the output of the model $f(\cdot)$. The works of \cite{johansson2014regression} and \cite{johansson2018interpretable} apply Mondrian CP \citep{vovk2005algorithmic} within leaves of a regression tree $f$ to construct prediction intervals with prediction-conditional validity. However, this approach is restricted to tree-based predictors and does not guarantee calibrated point predictions and self-calibrated intervals. 
Mondrian conformal predictive distributions were applied within bins of model predictions in \cite{bostrom2021mondrian} to satisfy a coarser, distributional form of prediction-conditional validity. A limitation of Mondrian-CP approaches to prediction-conditional validity is that they require pre-specification of a binning scheme for the predictor $f(\cdot)$, which introduces a trade-off between model performance and the width of prediction intervals, and they do not perform point calibration (desideratum (i)) and, thereby, do not guarantee self-calibration. In contrast, SC-CP data-adaptively discretizes the predictor $f(\cdot)$ using isotonic calibration and, in doing so, provides calibrated predictions, improved conformity scores, and self-calibrated intervals. 

Other notions of conditional validity have been proposed that, like prediction-conditional validity and self-calibration, avoid the curse of dimensionality of context-conditional validity. In the multiclassification setup, \cite{shi2013applications} and \cite{ding2023class} use Mondrian CP to provide prediction intervals with valid coverage conditional on the class label (i.e., outcome). In \cite{bostrom2020mondrian}, Mondrian CP is applied within bins categorized by context-specific difficulty estimates, such as conditional variance estimates. Multivalid-CP \citep{jung2022batch, bastani2022practical} offers coverage based on a threshold defining the prediction interval. For multiclassification, \cite{noarov2023high} propose a procedure for attaining valid coverage conditional on the prediction set size \citep{angelopoulos2020uncertainty}.

\section{Real-Data Experiments: predicting utilization of medical services}  \label{section::realdata}
\subsection{Experimental setup}

In this experiment, we illustrate how prediction-conditional validity can approximate context-conditional validity when the heteroscedasticity in outcomes is strongly associated with model predictions, thereby ensuring validity across critical subgroups without their pre-specification. We analyze the Medical Expenditure Panel Survey (MEPS) dataset \citep{MEPS_Panel_21}, supplied by the Agency for Healthcare Research and Quality \citep{cohen2009medical}, which was used in \cite{romano2020malice} for Mondrian CP with fairness applications. We use the preprocessed dataset acquired using the Python package \texttt{cqr}, also associated with \citep{romano2020malice}. This dataset contains $n = 15,656$ observations and $d = 139$ features, and includes information such as age, marital status, race, and poverty status, alongside medical service utilization. Our objective is to predict each individual's healthcare system utilization, represented by a score that reflects visits to doctors' offices, hospital visits, etc. Following \citep{romano2020malice}, we designate race as the sensitive attribute $A$, aiming for \textit{equalized coverage}, where $A = 0$ represents non-white individuals ($n_0 = 9640$) and $A = 1$ represents white individuals ($n_1 = 6016$). The outcome variable $Y$ is transformed by $Y = \log(1 + \text{{utilization score}})$ to address the skewness of the raw score. In Appendox \ref{appendix::exps}, we present additional experimental results for the \textit{Concrete}, \textit{Community}, \textit{STAR}, \textit{Bike}, and \textit{Bio} datasets used in \cite{romano2019conformalized} and publicly available in the Python package \texttt{cqr}, associated with \cite{romano2019conformalized} and \citep{romano2020malice}.

We randomly partition the dataset into three segments: a training set (50\%) for model training, a calibration set (30\%) for CP, and a test set (20\%) for evaluation. For training the initial model $f(\cdot)$, we use the \texttt{xgboost} \citep{xgboost} implementation of gradient boosted regression trees \citep{BoostingFruend}, where maximum tree depth, boosting rounds, and learning rate are tuned using 5-fold cross-validation. We consider two settings for training the model. In \textbf{Setting A}, we train the initial model on the untransformed outcomes and then transform the predictions as $\hat{y} \mapsto \log(1 + \hat{y})$, which makes the model predictive but poorly calibrated because it overestimates the true outcomes, in light of Jensen's inequality. In \textbf{Setting B}, we train the initial model on the transformed outcomes, leading to fairly well-calibrated predictions. In both settings, calibration and evaluation are applied to the transformed outcomes.

For direct comparison, we compare \textbf{SC-CP} with baselines that leverage the standard absolute residual scoring function $S(x,y) := |y - f(x)|$ and target either marginal validity or prediction-conditional validity. The baselines are: \textbf{Marginal} CP \cite{lei2018distribution}, \textbf{Mondrian} CP with categories defined by bins of model predictions \cite{vovk2005algorithmic,bostrom2021mondrian}, \textbf{CQR} \citep{romano2019conformalized} with model predictions used as features, and the \textbf{Kernel}-smoothed conditional CP approach of \cite{gibbs2023conformal} with model predictions $\{f(X_i)\}_{i=1}^n$ used as features and bandwidth tuned with cross-validation. Due to the slow computing time of the implementation provided by \cite{gibbs2023conformal}, we apply \textbf{Kernel} on a subset of the calibration data of size $n_{cal} = 500$. \textbf{SC-CP} is implemented as described in Alg. \ref{alg::conformal}, using isotonic regression constrained to have at least 20 observations averaged within each constant segment to mitigate overfitting (via the minimum child weight argument of \texttt{xgboost}). The miscoverage level is taken to be $\alpha = 0.1$. \textbf{SC-CP} provides calibrated point predictions and self-calibrated intervals, while the \textbf{Mondrian} and \textbf{Kernel} baselines offer approximate prediction-conditional validity, and \textbf{Marginal} and \textbf{CQR} only guarantee marginal coverage. We report empirical coverage, average interval width, and calibration error of model predictions in the test set within the sensitive attribute. Calibration error is defined as the mean error of the point predictions, $\mathbb{E}[\widehat{Y} - Y \mid A]$ within the sensitive attribute $A$, which measures model over- or under-confidence. For \textbf{SC-CP}, we use the calibrated point predictions from \eqref{eqn::derivedpointpred}, while the original point predictions are used for \textbf{Marginal}, \textbf{Mondrian}, and \textbf{Kernel}. For \textbf{CQR}, we use an estimate of the conditional median, obtained from a separate \texttt{xgboost} quantile regression model, as the point prediction. We note that, since quantiles are preserved under monotone transformations of the outcome, we expect the conditional median model of \textbf{CQR} to be well-calibrated, at least in a median sense, in both \textbf{Setting A} and \textbf{Setting B}. We include the baseline \textbf{Mondrian$^*$} for direct comparison with SC-CP, in which \textbf{Mondrian} is applied with the same number of prediction bins as data-adaptively selected by SC-CP.

\subsection{Results and discussion}

  \begin{figure}[!htb]

       \vspace{-0.5cm}
       
    \centering
       \begin{subfigure}{0.5\linewidth}
       \includegraphics[width=0.5\linewidth]{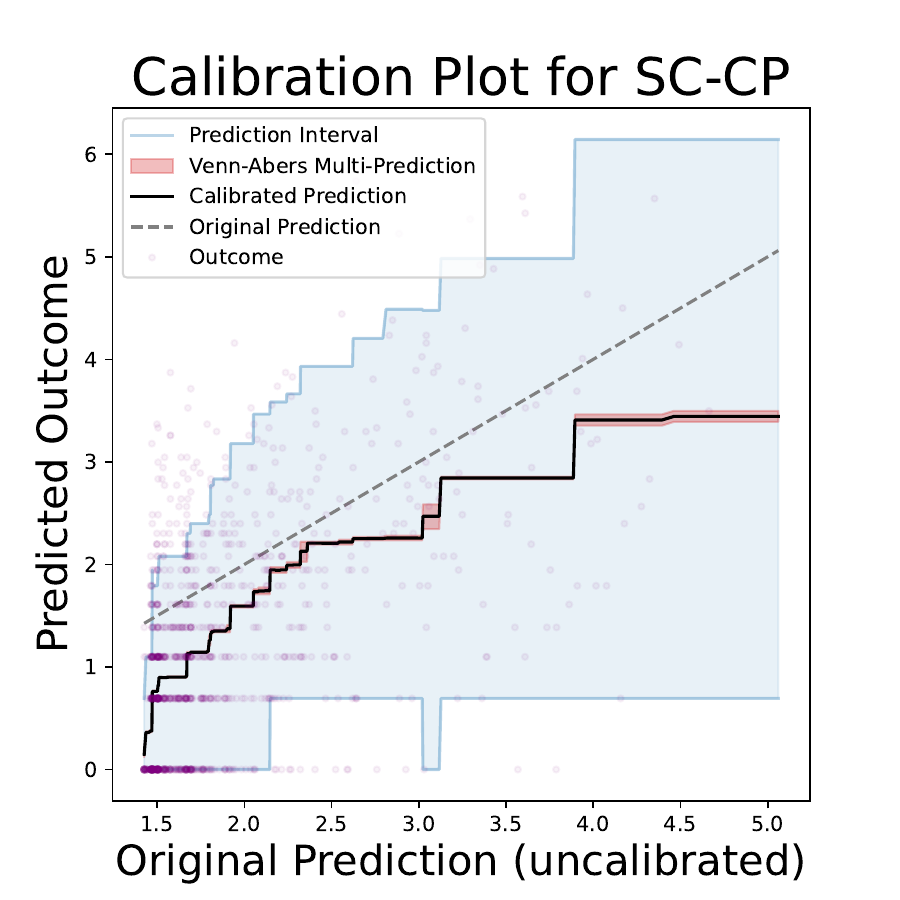}\includegraphics[width=0.5\linewidth]{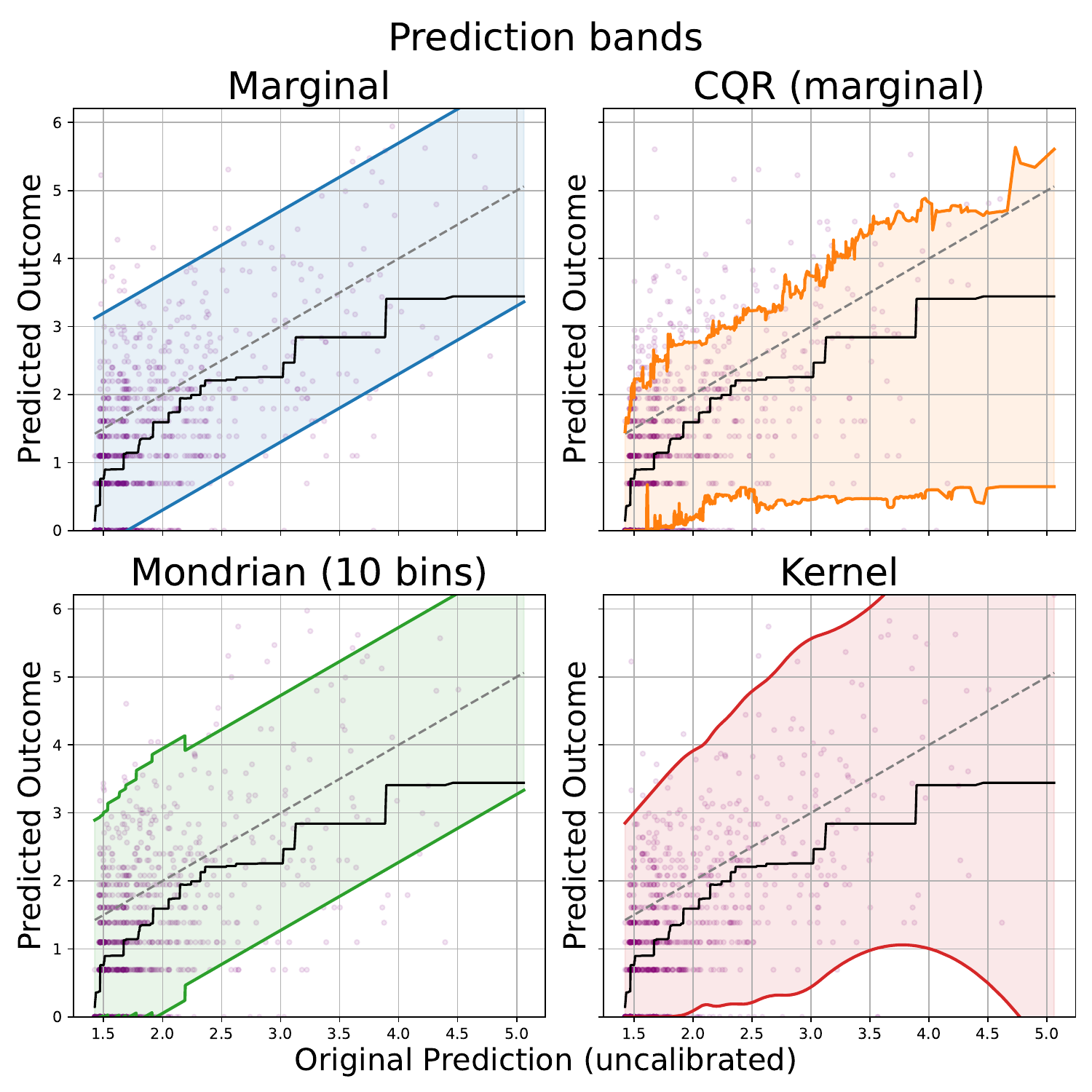}

          \small
        \resizebox{0.98\linewidth}{!}{\begin{tabular}{|l|cc|cc|ccc|}
    \hline
    \textbf{Method} & \multicolumn{2}{c|}{\textbf{Coverage} ($\alpha = 0.1$)} & \multicolumn{2}{c|}{\textbf{Average Width}} & \multicolumn{3}{c|}{\textbf{Cal. Error}} \\
     & $A=0$ & $A=1$ & $A=0$ & $A=1$ & $A=0$ & $A=1$ & \textbf{Difference} \\
    \hline
    Marginal & 0.933 & 0.865 & 3.40 & 3.40 & 0.690 & 0.513 & 0.177 \\
    CQR (marginal) & 0.908 & 0.881 & 2.25 & 2.68 & -0.0952 & -0.0854 & -0.0098 \\
    Mondrian (5 bins) & 0.888 & 0.893 & 3.24 & 3.63 & 0.690 & 0.513 & 0.177 \\
    Mondrian (10 bins) & 0.913 & 0.886 & 3.21 & 3.54 & 0.690 & 0.513 & 0.177 \\
    Mondrian (83 bins) & 0.932 & 0.925 & 3.32 & 3.74 & 0.690 & 0.513 & 0.177 \\
    Kernel & 0.895 & 0.913 & 3.38 & 3.93 & 0.690 & 0.513 & 0.177 \\
    SC-CP & 0.902 & 0.911 & 2.20 & 2.91 & -0.0119 & 0.000931 & -0.01283 \\
    \hline
\end{tabular}
}
        \subcaption{\textbf{Setting A} (poorly-calibrated $f(\cdot)$)}
         \label{fig::SCCPpoor}
    \end{subfigure}\begin{subfigure}{0.5\linewidth}
        \centering
        \includegraphics[width=0.5\linewidth]{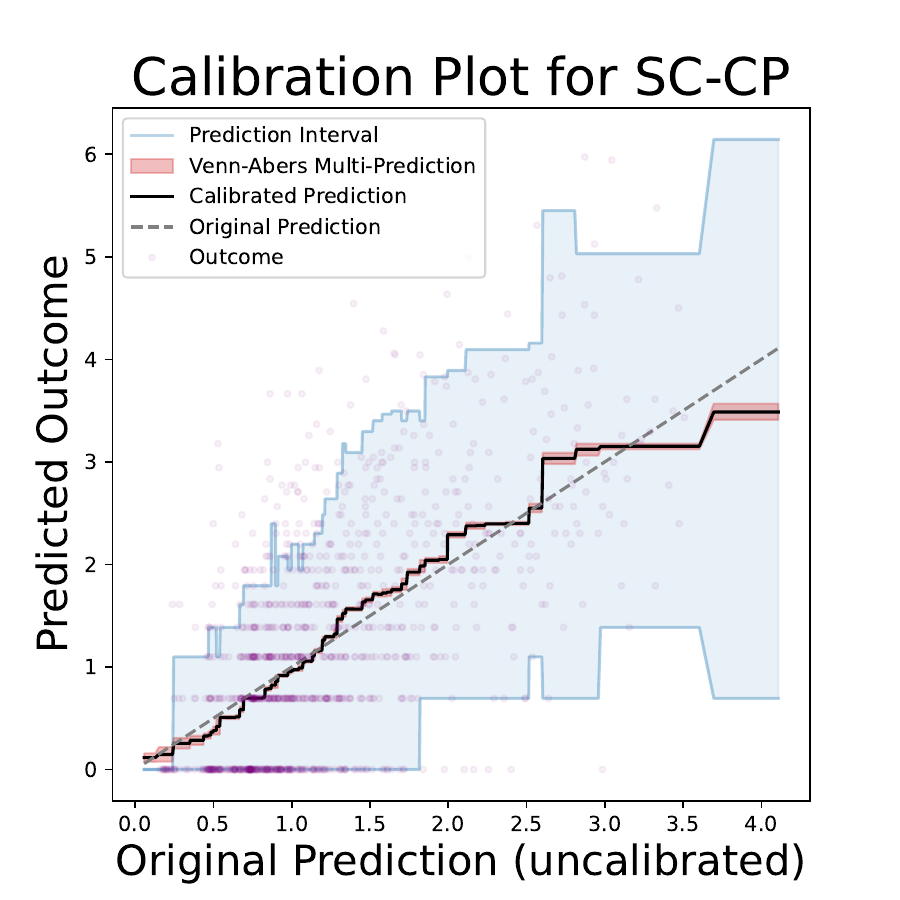}\includegraphics[width=0.5\linewidth]{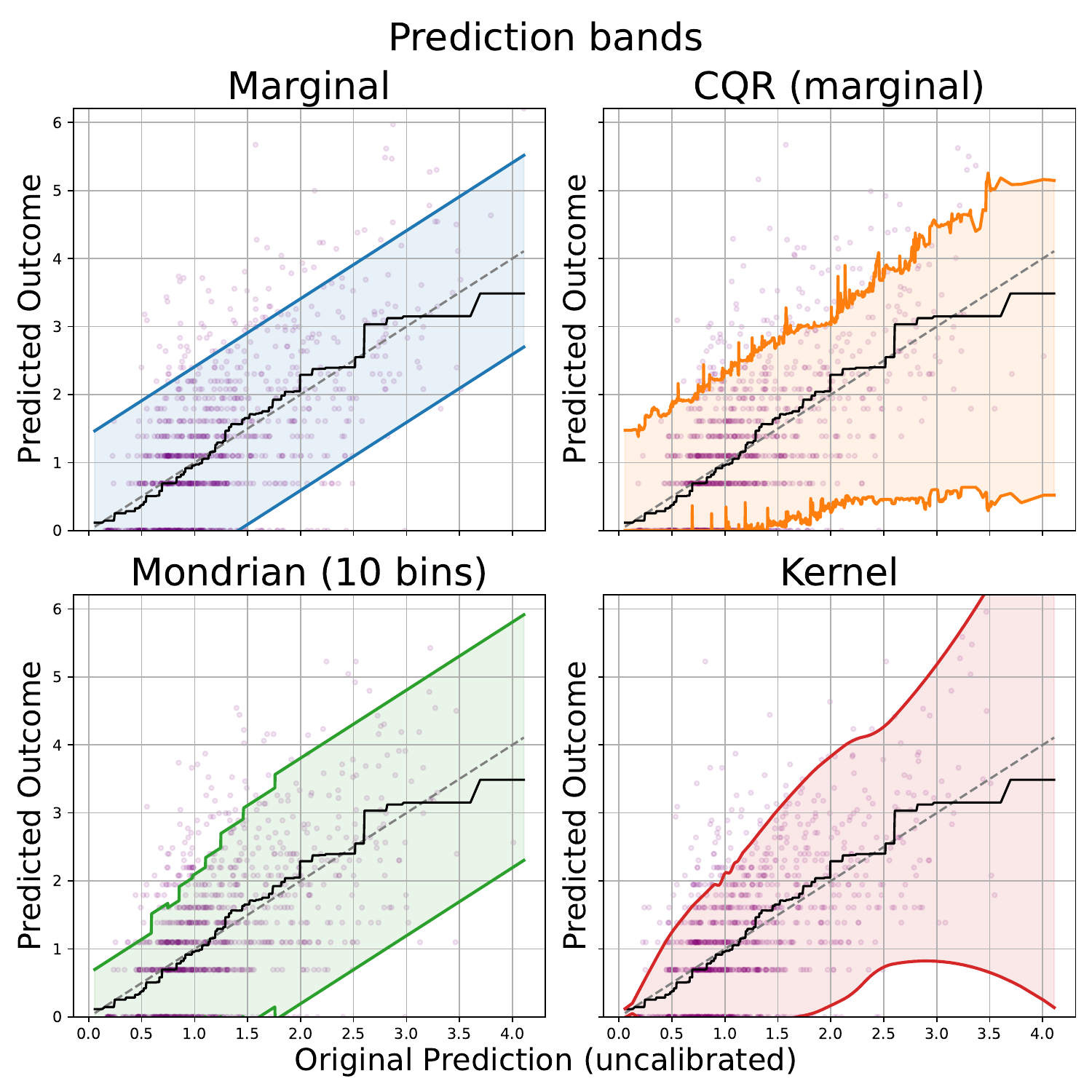}

        \small 
          \resizebox{0.98\linewidth}{!}{\begin{tabular}{|l|cc|cc|ccc|}
    \hline
    \textbf{Method} & \multicolumn{2}{c|}{\textbf{Coverage} ($\alpha = 0.1$)} & \multicolumn{2}{c|}{\textbf{Average Width}} & \multicolumn{3}{c|}{\textbf{Cal. Error}} \\
     & $A=0$ & $A=1$ & $A=0$ & $A=1$ & $A=0$ & $A=1$ & \textbf{Difference} \\
    \hline
    Marginal & 0.918 & 0.862 & 2.82 & 2.82 & -0.0136 & -0.0427 & 0.0291 \\
    CQR (marginal) & 0.908 & 0.884 & 2.28 & 2.71 & -0.108 & -0.0862 & -0.0218 \\
    Mondrian (5 bins) & 0.889 & 0.885 & 2.26 & 2.79 & -0.0136 & -0.0427 & 0.0291 \\
    Mondrian (10 bins) & 0.893 & 0.896 & 2.22 & 2.87 & -0.0136 & -0.0427 & 0.0291 \\
    Mondrian (101 bins) & 0.910 & 0.918 & 2.56 & 3.26 & -0.0136 & -0.0427 & 0.0291 \\
    Kernel & 0.893 & 0.901 & 2.26 & 2.94 & -0.0136 & -0.0427 & 0.0291 \\
    SC-CP & 0.909 & 0.924 & 2.14 & 2.86 & -0.0275 & 0.0231 & -0.0506 \\
    \hline
\end{tabular}
}
        
         \subcaption{\textbf{Setting B} (well-calibrated $f(\cdot)$)}
    \end{subfigure}
    \label{fig::SCCPgood}
    
     \vspace{-0.3cm}
     
   \caption{\textbf{\textit{MEPS-21} dataset:} Calibration plot for SC-CP, prediction bands for SC-CP and baselines, and empirical coverage, width, and calibration error within sensitive subgroup.}
     \label{fig::SCCP}

     \vspace{-0.3cm}
     
\end{figure}

The experimental results for each setting are depicted in Figure \ref{fig::SCCP}.~Each panel's left-most plot showcases a calibration plot \citep{vuk2006roc} for \textbf{SC-CP}, illustrating original and calibrated predictions alongside prediction bands. On the right, the panels display prediction bands of our baselines as a function of the original model predictions. ~Visually, as expected by Theorem \ref{theorem::coverage}, the \textbf{SC-CP} bands adapt to outcome heteroscedasticity within model predictions, while \textbf{Marginal} lacks adaptation, \textbf{Mondrian} under-adapts due to insufficient bins, and \textbf{Kernel} adapts but offers wider intervals for large predictions where observations are sparse. The bands of \textbf{CQR} appear adaptive and similar to those of \textbf{SC-CP}, however, do not gaurnatee finite-sample prediction-conditional validity. ~The calibration plots reveal that heteroscedasticity in outcomes is primarily driven by their mean, suggesting that prediction-conditional validity may approximate context-conditional validity. This heuristic is supported by the tables in Figure \ref{fig::SCCP}, which display empirical coverage, average interval width, and calibration error within the sensitive attribute ($A$) for all methods. In \textbf{Setting A}, the base regression model $f(\cdot)$ are poorly calibrated, i.e., $\mathbb{E}[f(X) - Y \mid A]$ is not close to $0$, resulting in wider intervals, overconfidence in point predictions, and decreased interpretability for the baselines, as their intervals center around biased point predictions. In contrast, being self-calibrated, \textbf{SC-CP} corrects the calibration error in $f$, achieving the smallest interval widths and well-calibrated point predictions in both settings, as guaranteed by Theorem \ref{theorem::point}. In both settings, the quantile regression model of \textbf{CQR} appears to have worse calibration than \textbf{SC-CP}, which may be due to the median differing from the mean because of the skewness of the outcomes. Additionally, \textbf{SC-CP} predictions achieves a smaller difference in calibration error between the two subgroups than \textbf{Marginal}, \textbf{Mondrian}, and \textbf{Kernel}, suggesting they are less discriminatory and more fair \citep{romano2020malice}. \textbf{SC-CP} and \textbf{Kernel} achieve the desired coverage level of $1-\alpha = 0.9$ in each subgroup and setting, whereas \textbf{Marginal} exhibits over- or under-coverage in each subgroup. \textbf{Mondrian} tends to under-cover with $5$ and $10$ bins and only attains good coverage when using the same binning number data-adaptively selected by \textbf{SC-CP}, highlighting its sensitivity to the pre-specified binning scheme. \textbf{CQR} attains good coverage in the $A=0$ group but undercovers in the $A=1$ group, which may be explained by \textbf{CQR} only guaranteeing marginal coverage in finite samples. Even with \textbf{SC-CP} having higher coverage, the intervals of \textbf{SC-CP} are narrower than those of \textbf{Kernel} and \textbf{Mondrian$^*$}. This provides experimental evidence that calibration improves conformity scores and translates into greater interval efficiency, as suggested by Theorem \ref{theorem::interaction}.

\section{Extensions}

Our theoretical techniques can be used to analyze conformal prediction methods that involve the calibration of model predictions followed by the construction of conditionally valid prediction intervals. Our analysis can be adapted to the general case where either the conformity score or the conditioning variable depends on the calibrated model prediction. While we use the absolute residual conformity score in our work, SC-CP can be applied to other conformity scores, such as the normalized absolute residual scoring function \citep{papadopoulos2008normalized}, allowing for the inclusion of context-specific difficulty estimates in the SC-CP procedure. Although we use Venn-Abers calibration in SC-CP, our analysis also applies to other binning calibration methods, such as Venn calibration \citep{vovk2003self, vovk2012venn}. Thus, we can replace the isotonic calibration step in Alg. \ref{alg::isocal2} and \ref{alg::conformal}, for example, with histogram binning \citep{gupta2020distribution}. Additionally, a group-valid form of SC-CP can be achieved by applying Alg. \ref{alg::conformal} separately within subgroups, similar to Multivalid CP \citep{jung2022batch}. Interesting areas for future work involve integrating point calibration with conformal prediction methods for predictive models beyond regression, such as the isotonic calibration of quantile predictions in conformal quantile regression \citep{romano2019conformalized}.

 \newpage 
\bibliographystyle{abbrvnat}
\bibliography{ref}

\newpage
\appendix
\onecolumn

\section{Code}
\label{appendix:code}

The methods implemented in this paper are not computationally intensive and were run in a Jupyter notebook environment on a MacBook Pro with 16GB RAM and an M1 chip. A Python implementation of SC-CP is provided in the package \texttt{SelfCalibratingConformal}, available via \texttt{pip}. Code implementing SC-CP and reproducing our experiments is available in the GitHub repository \texttt{SelfCalibratingConformal}, which can be accessed at the following link: \url{https://github.com/Larsvanderlaan/SelfCalibratingConformal}.

\section{Supplementary real data experiments}

\label{appendix::exps}
\subsection{Additional results}

In this section, we present the experimental results for the \textit{concrete}, \textit{STAR}, \textit{bike}, \textit{community}, and \textit{bio} datasets used in \cite{romano2019conformalized} and publicly available in the Python package \texttt{cqr}, associated with \cite{romano2019conformalized} and \citep{romano2020malice}. For the \textit{STAR} dataset, the sensitive attribute $A$ was set to ``gender." For the \textit{Bike} dataset, the sensitive attribute $A$ was set to ``workingday," and for \textit{Community}, it was set to ``race\_binary." For the remaining datasets, the sensitive attribute $A$ was set to a dichotomization of the final column in the feature matrix, as above or below its median value.

  \begin{figure}[!htb]

    \centering
       \begin{subfigure}{0.5\linewidth}
       \includegraphics[width=0.5\linewidth]{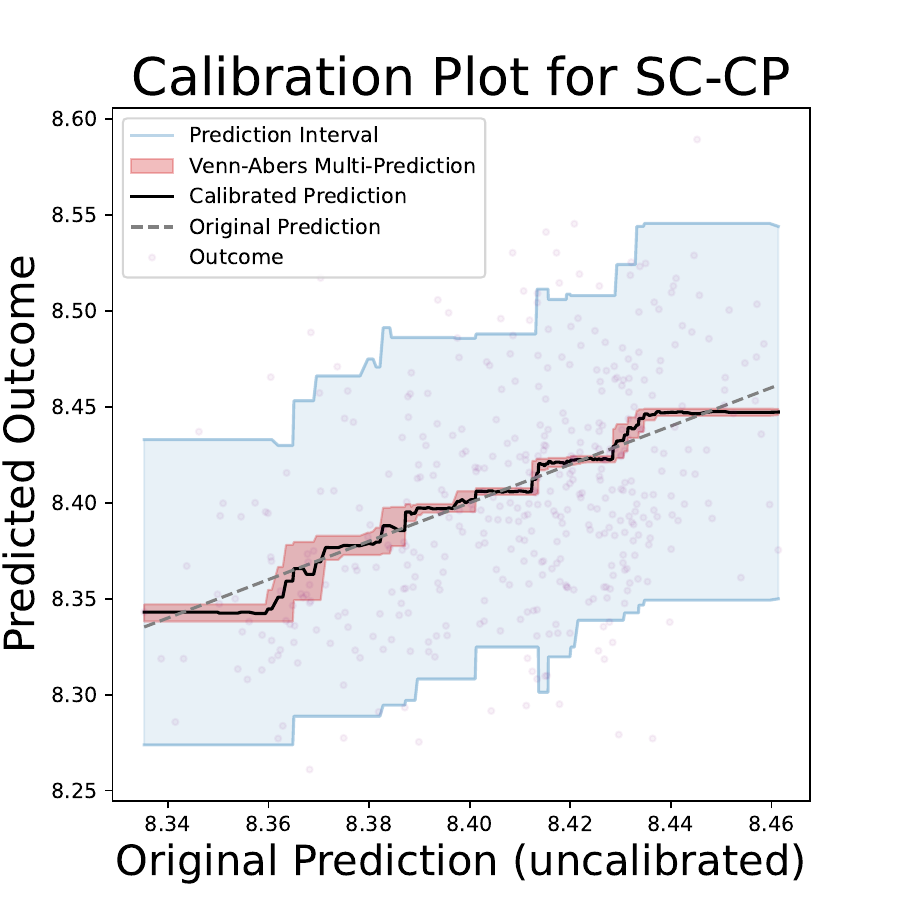}\includegraphics[width=0.5\linewidth]{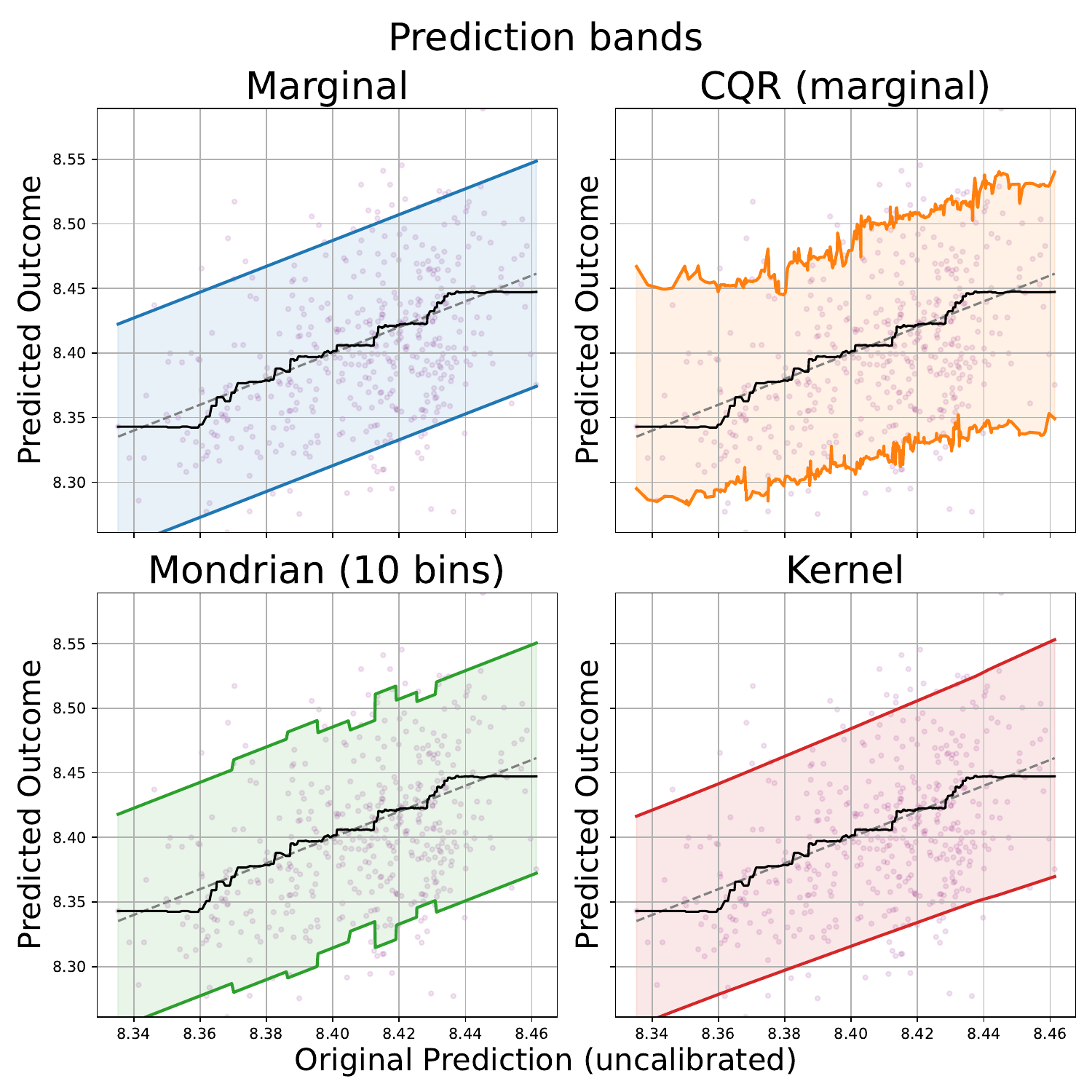}

          \small
        \resizebox{0.98\linewidth}{!}{\begin{tabular}{|l|cc|cc|ccc|}
    \hline
    \textbf{Method} & \multicolumn{2}{c|}{\textbf{Coverage} ($\alpha = 0.1$)} & \multicolumn{2}{c|}{\textbf{Average Width}} & \multicolumn{3}{c|}{\textbf{Cal. Error}} \\
     & $A=0$ & $A=1$ & $A=0$ & $A=1$ & $A=0$ & $A=1$ & \textbf{Difference} \\
    \hline
    Marginal & 0.892 & 0.896 & 0.174 & 0.174 & 0.00482 & 0.00961 & -0.00479 \\
    CQR (marginal) & 0.892 & 0.872 & 0.173 & 0.177 & 0.00302 & 0.00597 & -0.00295 \\
    Mondrian (5 bins) & 0.887 & 0.882 & 0.171 & 0.171 & 0.00482 & 0.00961 & -0.00479 \\
    Mondrian (10 bins) & 0.901 & 0.886 & 0.176 & 0.173 & 0.00482 & 0.00961 & -0.00479 \\
    Mondrian (99 bins) & 0.860 & 0.872 & 0.184 & 0.179 & 0.00482 & 0.00961 & -0.00479 \\
    Kernel & 0.887 & 0.891 & 0.170 & 0.170 & 0.00482 & 0.00961 & -0.00479 \\
    SC-CP & 0.905 & 0.919 & 0.180 & 0.178 & 0.00607 & 0.00983 & -0.00376 \\
    \hline
\end{tabular}
}
        \subcaption{\textbf{Setting A} (poorly-calibrated $f(\cdot)$)}
      
    \end{subfigure}\begin{subfigure}{0.5\linewidth}
        \centering
        \includegraphics[width=0.5\linewidth]{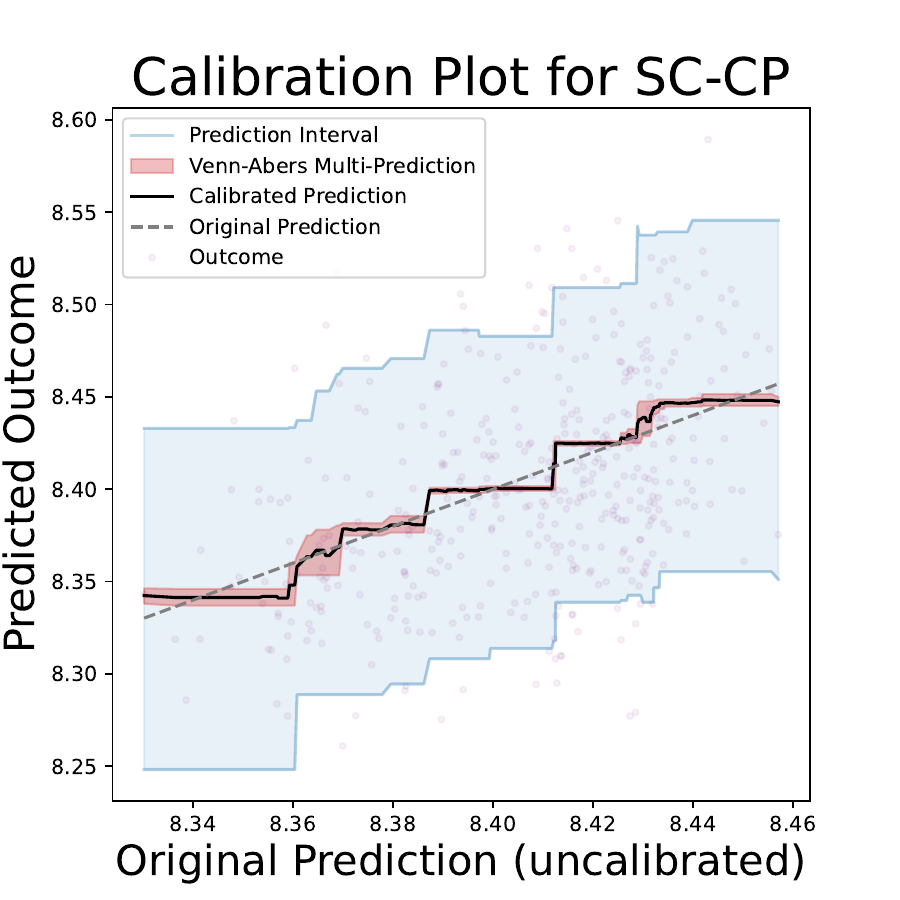}\includegraphics[width=0.5\linewidth]{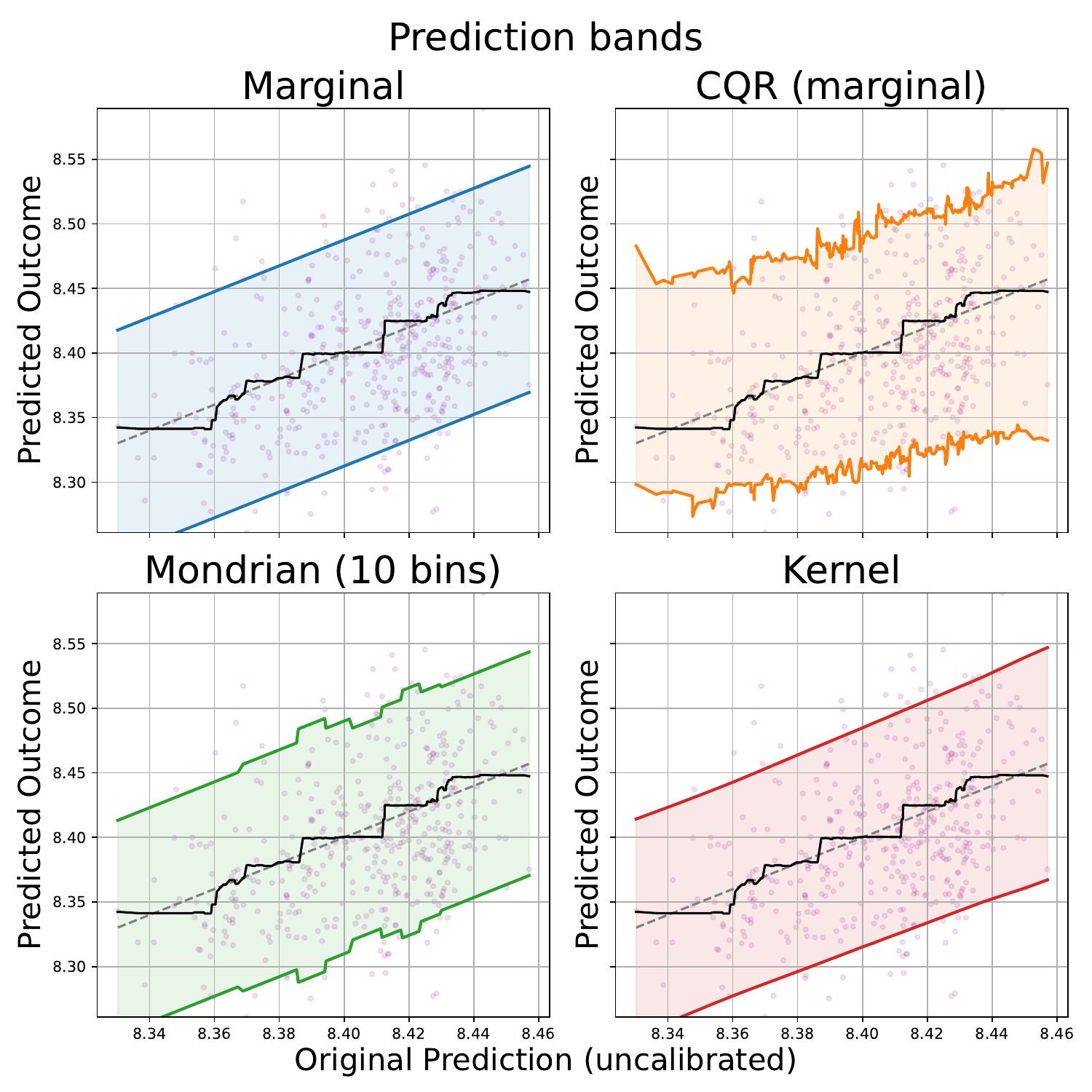}

        \small 
          \resizebox{0.98\linewidth}{!}{\begin{tabular}{|l|cc|cc|ccc|}
    \hline
    \textbf{Method} & \multicolumn{2}{c|}{\textbf{Coverage} ($\alpha = 0.1$)} & \multicolumn{2}{c|}{\textbf{Average Width}} & \multicolumn{3}{c|}{\textbf{Cal. Error}} \\
     & $A=0$ & $A=1$ & $A=0$ & $A=1$ & $A=0$ & $A=1$ & \textbf{Difference} \\
    \hline
    Marginal & 0.896 & 0.896 & 0.175 & 0.175 & 0.00338 & 0.00796 & -0.00458 \\
    CQR (marginal) & 0.896 & 0.910 & 0.180 & 0.182 & 0.00292 & 0.00454 & -0.00162 \\
    Mondrian (5 bins) & 0.892 & 0.900 & 0.174 & 0.174 & 0.00338 & 0.00796 & -0.00458 \\
    Mondrian (10 bins) & 0.892 & 0.891 & 0.178 & 0.176 & 0.00338 & 0.00796 & -0.00458 \\
    Mondrian (98 bins) & 0.860 & 0.863 & 0.176 & 0.175 & 0.00338 & 0.00796 & -0.00458 \\
    Kernel & 0.892 & 0.900 & 0.171 & 0.171 & 0.00338 & 0.00796 & -0.00458 \\
    SC-CP & 0.892 & 0.905 & 0.177 & 0.177 & 0.00624 & 0.01000 & -0.00376 \\
    \hline
\end{tabular}
}  
         \subcaption{\textbf{Setting B} (well-calibrated $f(\cdot)$)}
    \end{subfigure}

     \vspace{-0.3cm}
     
   \caption{\textbf{\textit{STAR} dataset:} Calibration plot for SC-CP, prediction bands for SC-CP and baselines, and empirical coverage, width, and calibration error within sensitive subgroup.}

\end{figure}

  \begin{figure}[!htb]

    \centering
       \begin{subfigure}{0.5\linewidth}
       \includegraphics[width=0.5\linewidth]{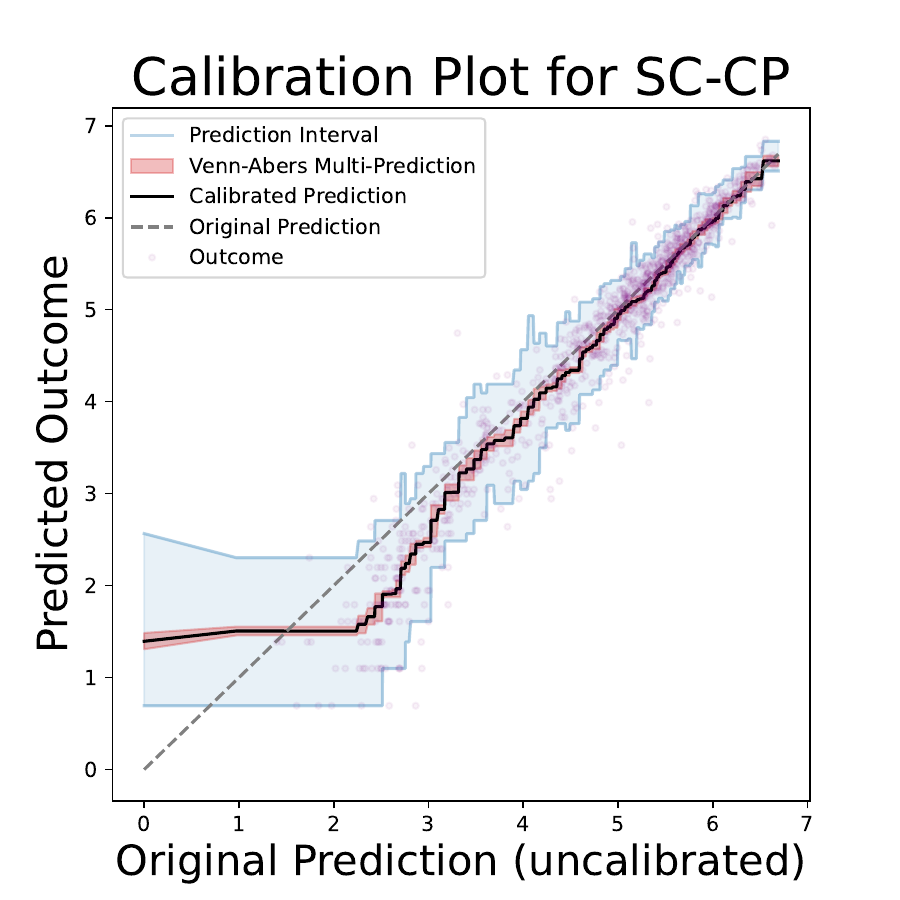}\includegraphics[width=0.5\linewidth]{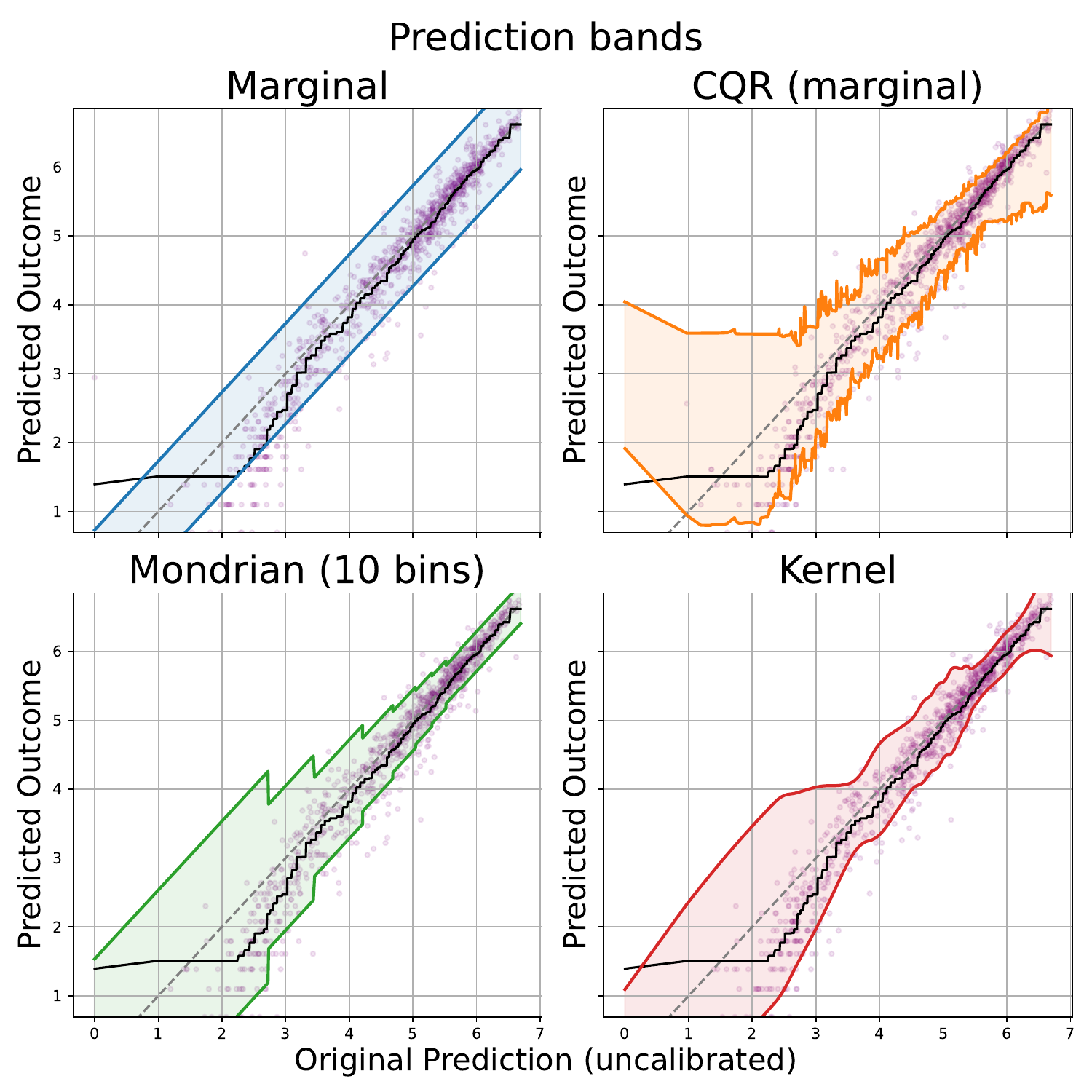}

          \small
        \resizebox{0.98\linewidth}{!}{\begin{tabular}{|l|cc|cc|ccc|}
    \hline
    \textbf{Method} & \multicolumn{2}{c|}{\textbf{Coverage} ($\alpha = 0.1$)} & \multicolumn{2}{c|}{\textbf{Average Width}} & \multicolumn{3}{c|}{\textbf{Cal. Error}} \\
     & $A=0$ & $A=1$ & $A=0$ & $A=1$ & $A=0$ & $A=1$ & \textbf{Difference} \\
    \hline
    Marginal & 0.909 & 0.914 & 1.46 & 1.46 & 0.155 & 0.147 & 0.008 \\
    CQR (marginal) & 0.884 & 0.894 & 1.31 & 1.10 & 0.055 & 0.0238 & 0.0312 \\
    Mondrian (5 bins) & 0.878 & 0.923 & 1.20 & 1.17 & 0.155 & 0.147 & 0.008 \\
    Mondrian (10 bins) & 0.863 & 0.927 & 1.15 & 1.15 & 0.155 & 0.147 & 0.008 \\
    Mondrian (101 bins) & 0.872 & 0.935 & 1.23 & 1.22 & 0.155 & 0.147 & 0.008 \\
    Kernel & 0.865 & 0.927 & 1.12 & 1.16 & 0.155 & 0.147 & 0.008 \\
    SC-CP & 0.863 & 0.933 & 0.966 & 0.935 & 0.00833 & -0.01100 & 0.01933 \\
    \hline
\end{tabular}
}
        \subcaption{\textbf{Setting A} (poorly-calibrated $f(\cdot)$)}
      
    \end{subfigure}\begin{subfigure}{0.5\linewidth}
        \centering
        \includegraphics[width=0.5\linewidth]{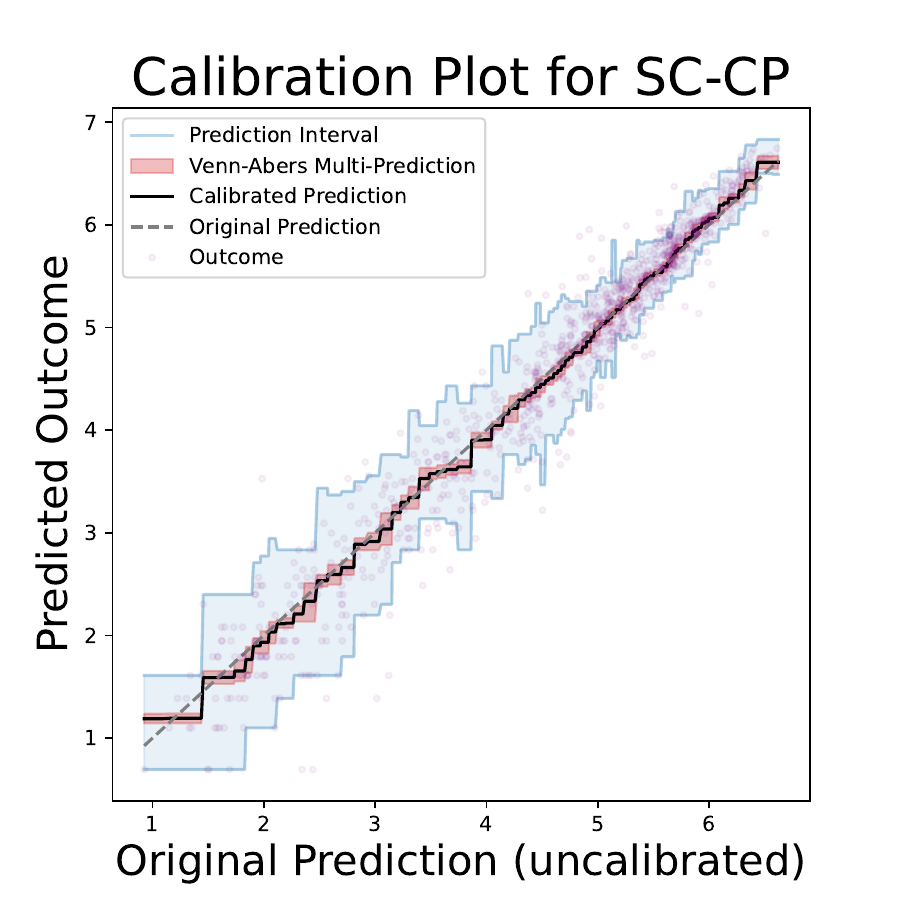}\includegraphics[width=0.5\linewidth]{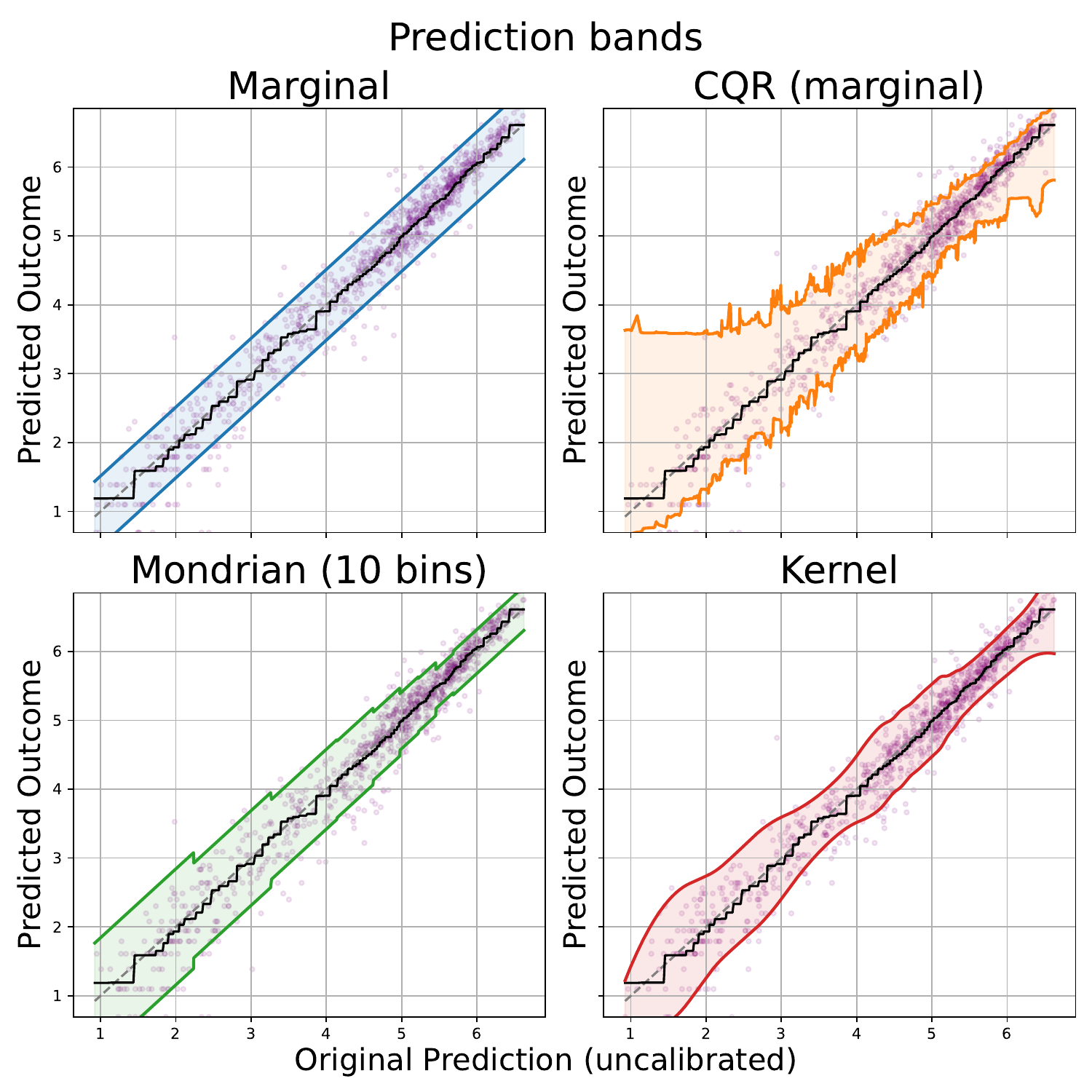}

        \small 
          \resizebox{0.98\linewidth}{!}{\begin{tabular}{|l|cc|cc|ccc|}
    \hline
    \textbf{Method} & \multicolumn{2}{c|}{\textbf{Coverage} ($\alpha = 0.1$)} & \multicolumn{2}{c|}{\textbf{Average Width}} & \multicolumn{3}{c|}{\textbf{Cal. Error}} \\
     & $A=0$ & $A=1$ & $A=0$ & $A=1$ & $A=0$ & $A=1$ & \textbf{Difference} \\
    \hline
    Marginal & 0.885 & 0.923 & 1.03 & 1.03 & 0.0215 & 0.00754 & 0.01396 \\
    CQR (marginal) & 0.885 & 0.900 & 1.31 & 1.16 & 0.0328 & 0.0292 & 0.0036 \\
    Mondrian (5 bins) & 0.872 & 0.925 & 0.992 & 0.970 & 0.0215 & 0.00754 & 0.01396 \\
    Mondrian (10 bins) & 0.863 & 0.929 & 0.978 & 0.955 & 0.0215 & 0.00754 & 0.01396 \\
    Mondrian (101 bins) & 0.883 & 0.937 & 1.12 & 1.07 & 0.0215 & 0.00754 & 0.01396 \\
    Kernel & 0.860 & 0.926 & 0.958 & 0.966 & 0.0215 & 0.00754 & 0.01396 \\
    SC-CP & 0.878 & 0.929 & 0.995 & 0.945 & 0.00797 & -0.00647 & 0.01444 \\
    \hline
\end{tabular}
}  
         \subcaption{\textbf{Setting B} (well-calibrated $f(\cdot)$)}
    \end{subfigure}

     \vspace{-0.3cm}
     
   \caption{\textbf{\textit{Bike} dataset:} Calibration plot for SC-CP, prediction bands for SC-CP and baselines, and empirical coverage, width, and calibration error within sensitive subgroup.}

\end{figure}

  \begin{figure}[!htb]

    \centering
       \begin{subfigure}{0.5\linewidth}
       \includegraphics[width=0.5\linewidth]{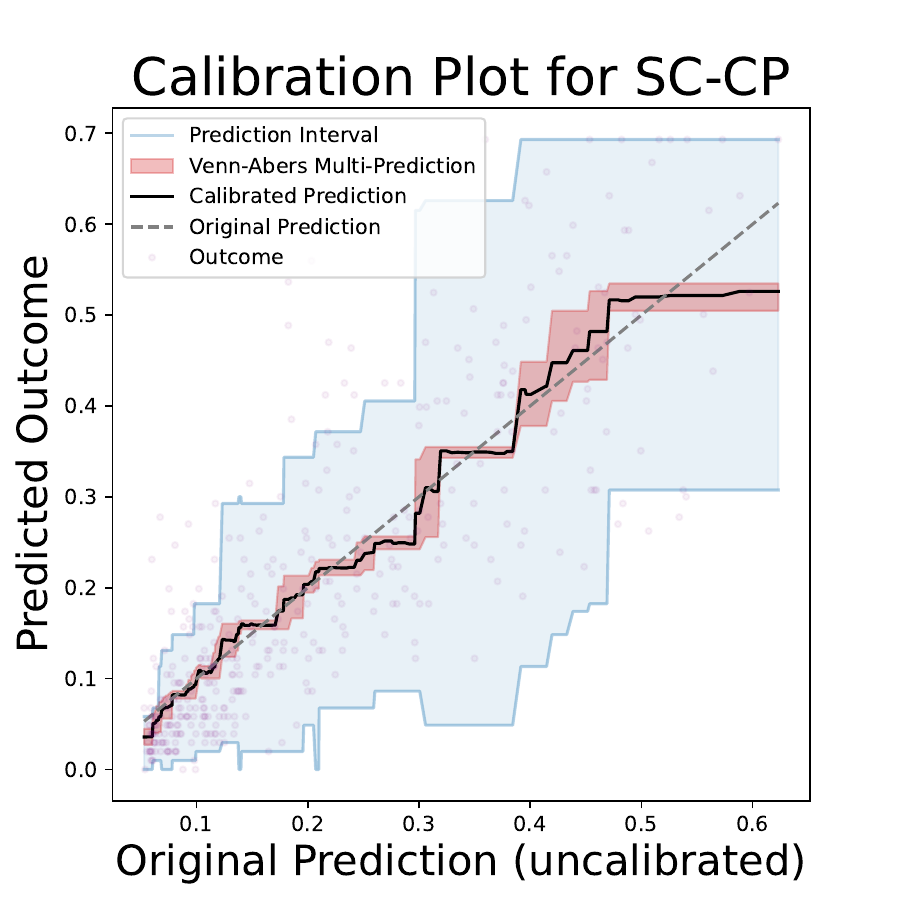}\includegraphics[width=0.5\linewidth]{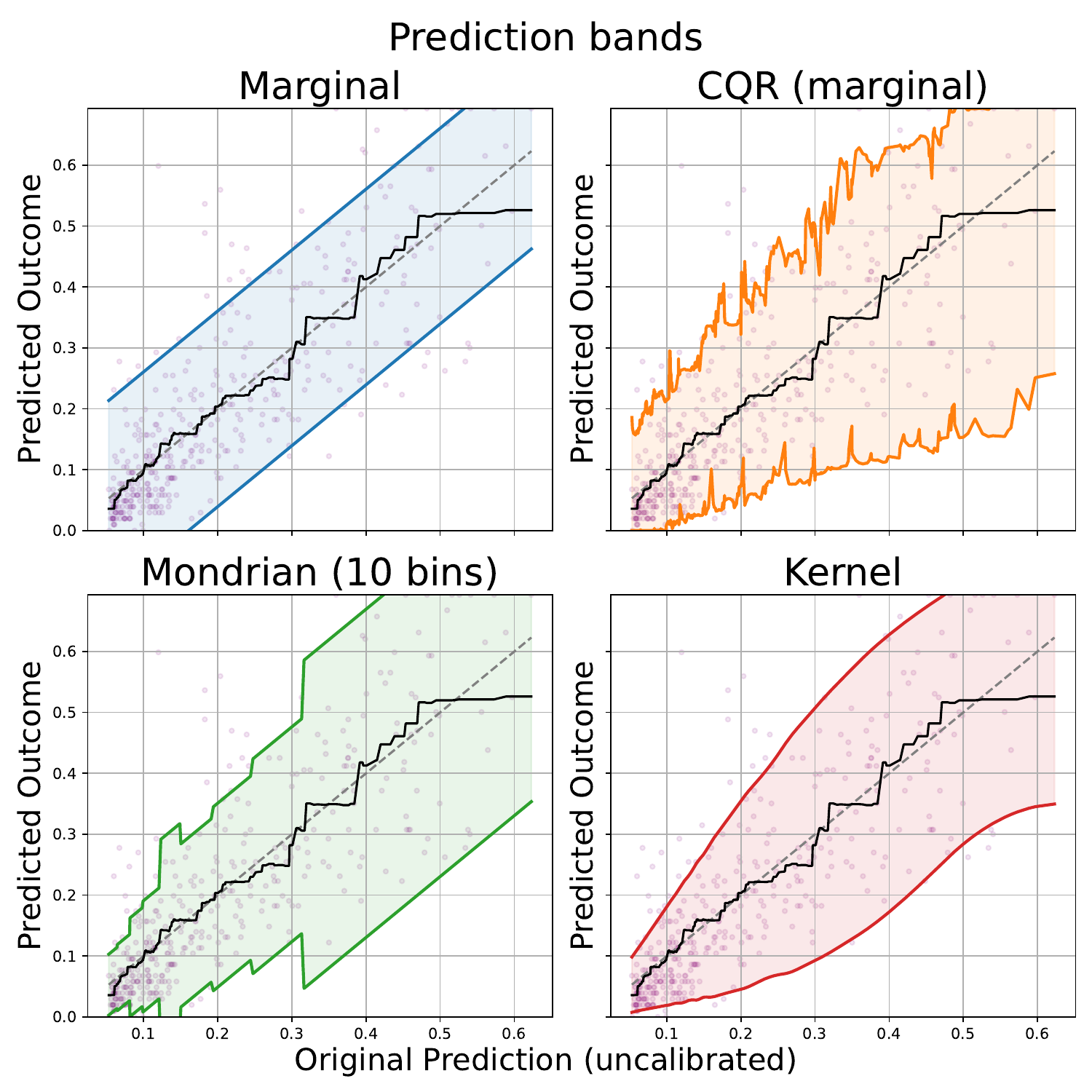}

          \small
        \resizebox{0.98\linewidth}{!}{\begin{tabular}{|l|cc|cc|ccc|}
    \hline
    \textbf{Method} & \multicolumn{2}{c|}{\textbf{Coverage} ($\alpha = 0.1$)} & \multicolumn{2}{c|}{\textbf{Average Width}} & \multicolumn{3}{c|}{\textbf{Cal. Error}} \\
     & $A=0$ & $A=1$ & $A=0$ & $A=1$ & $A=0$ & $A=1$ & \textbf{Difference} \\
    \hline
    Marginal & 0.870 & 0.949 & 0.321 & 0.321 & -0.00871 & 0.0126 & -0.02131 \\
    CQR (marginal) & 0.919 & 0.943 & 0.366 & 0.245 & -0.0181 & -0.00647 & -0.01163 \\
    Mondrian (5 bins) & 0.825 & 0.847 & 0.275 & 0.186 & -0.00871 & 0.0126 & -0.02131 \\
    Mondrian (10 bins) & 0.928 & 0.864 & 0.372 & 0.204 & -0.00871 & 0.0126 & -0.02131 \\
    Mondrian (98 bins) & 0.843 & 0.852 & 0.340 & 0.219 & -0.00871 & 0.0126 & -0.02131 \\
    Kernel & 0.901 & 0.886 & 0.337 & 0.189 & -0.00871 & 0.0126 & -0.02131 \\
    SC-CP & 0.901 & 0.892 & 0.362 & 0.187 & -0.01090 & 0.00859 & -0.01949 \\
    \hline
\end{tabular}
}
        \subcaption{\textbf{Setting A} (poorly-calibrated $f(\cdot)$)}
        
    \end{subfigure}\begin{subfigure}{0.5\linewidth}
        \centering
        \includegraphics[width=0.5\linewidth]{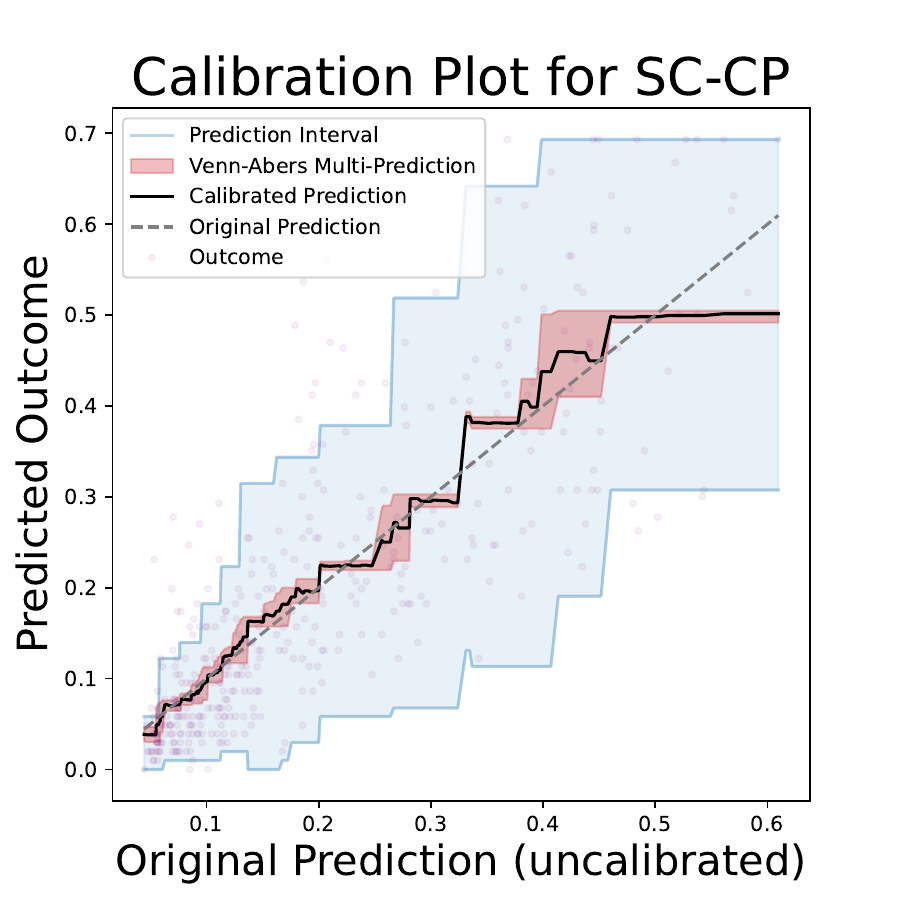}\includegraphics[width=0.5\linewidth]{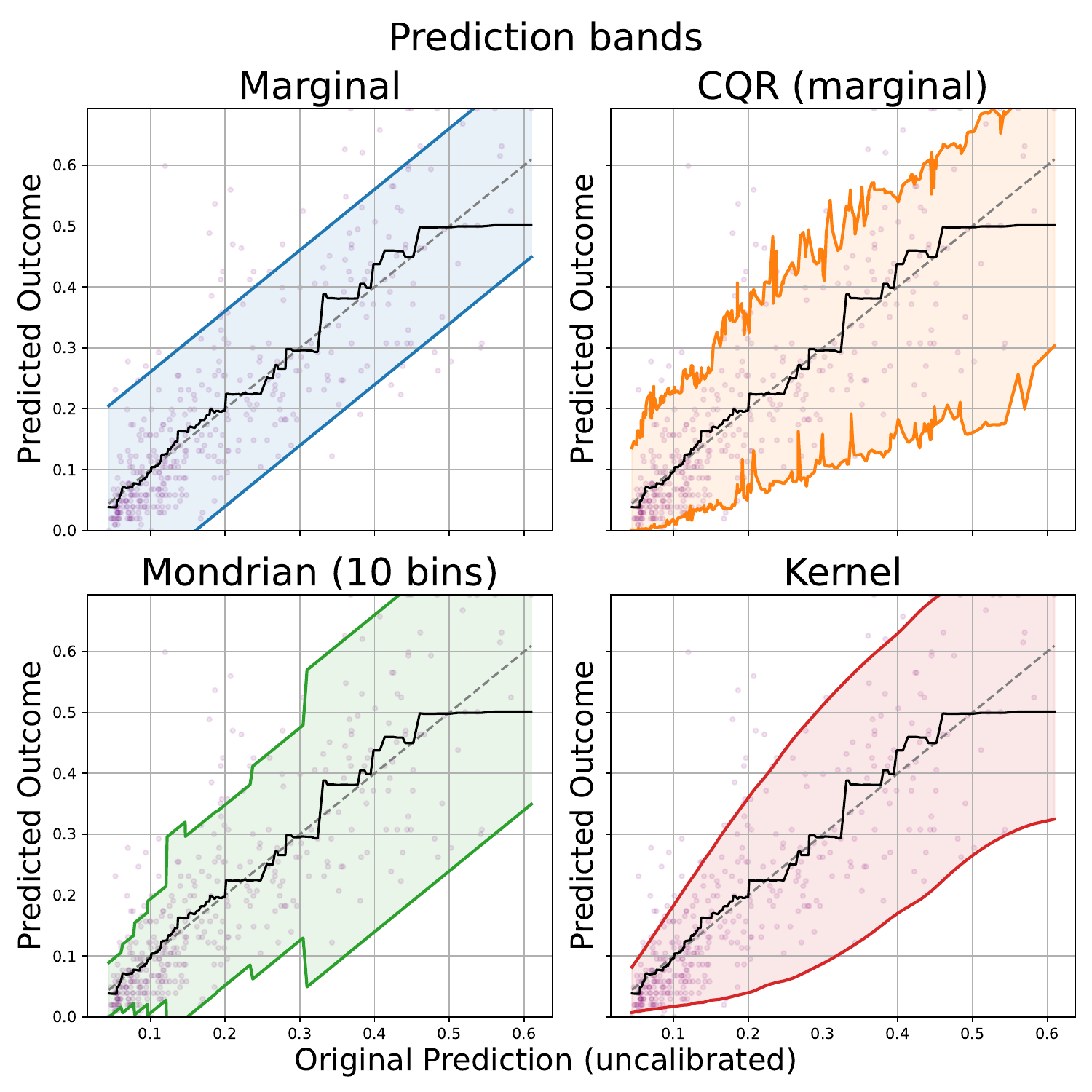}

        \small 
          \resizebox{0.98\linewidth}{!}{\begin{tabular}{|l|cc|cc|ccc|}
    \hline
    \textbf{Method} & \multicolumn{2}{c|}{\textbf{Coverage} ($\alpha = 0.1$)} & \multicolumn{2}{c|}{\textbf{Average Width}} & \multicolumn{3}{c|}{\textbf{Cal. Error}} \\
     & $A=0$ & $A=1$ & $A=0$ & $A=1$ & $A=0$ & $A=1$ & \textbf{Difference} \\
    \hline
    Marginal & 0.861 & 0.949 & 0.321 & 0.321 & -0.0153 & 0.00739 & -0.02269 \\
    CQR (marginal) & 0.888 & 0.915 & 0.333 & 0.227 & -0.0346 & -0.00193 & -0.03267 \\
    Mondrian (5 bins) & 0.852 & 0.875 & 0.293 & 0.188 & -0.0153 & 0.00739 & -0.02269 \\
    Mondrian (10 bins) & 0.906 & 0.903 & 0.367 & 0.201 & -0.0153 & 0.00739 & -0.02269 \\
    Mondrian (97 bins) & 0.821 & 0.858 & 0.366 & 0.214 & -0.0153 & 0.00739 & -0.02269 \\
    Kernel & 0.888 & 0.881 & 0.347 & 0.185 & -0.0153 & 0.00739 & -0.02269 \\
    SC-CP & 0.901 & 0.892 & 0.359 & 0.191 & -0.00977 & 0.00804 & -0.01781 \\
    \hline
\end{tabular}
}  
         \subcaption{\textbf{Setting B} (well-calibrated $f(\cdot)$)}
    \end{subfigure}

     \vspace{-0.3cm}
     
   \caption{\textbf{\textit{Community} dataset:} Calibration plot for SC-CP, prediction bands for SC-CP and baselines, and empirical coverage, width, and calibration error within sensitive subgroup.}

\end{figure}

  \begin{figure}[!htb]

    \centering
       \begin{subfigure}{0.5\linewidth}
       \includegraphics[width=0.5\linewidth]{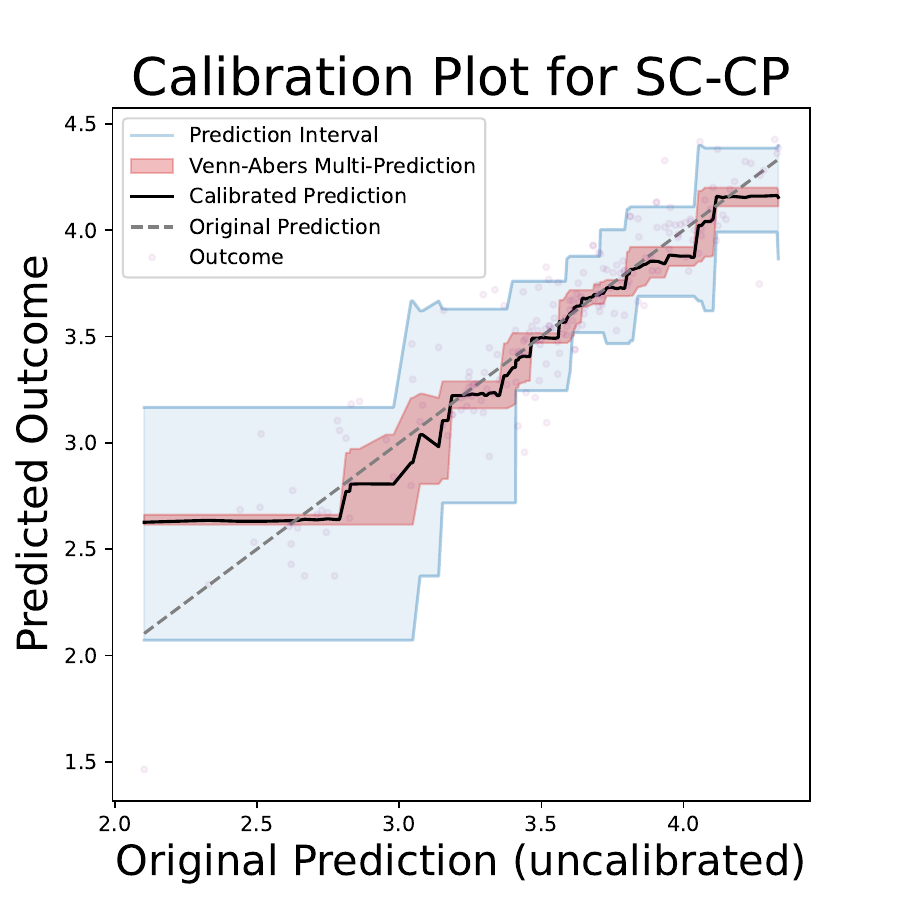}\includegraphics[width=0.5\linewidth]{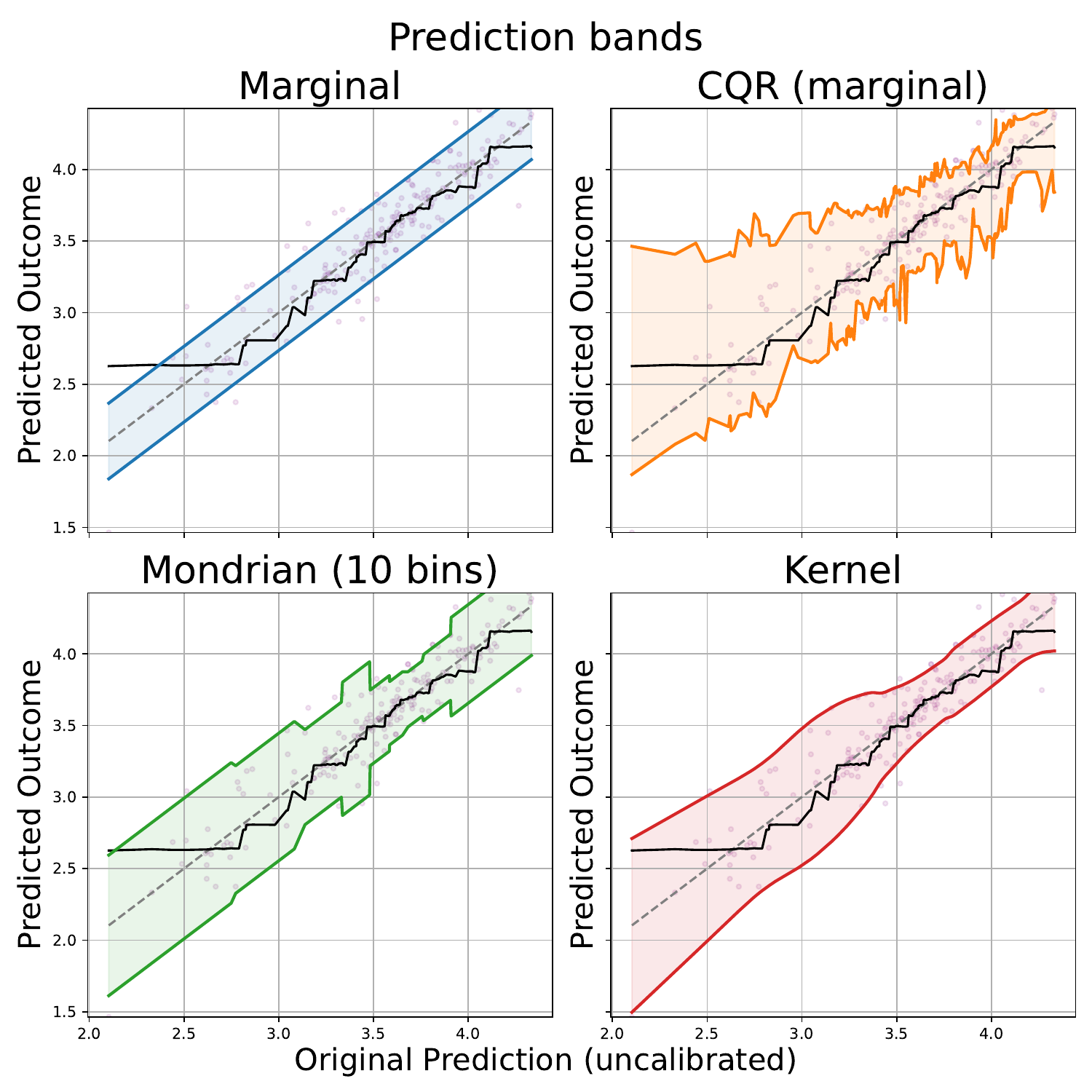}

          \small
        \resizebox{0.98\linewidth}{!}{\begin{tabular}{|l|cc|cc|ccc|}
    \hline
    \textbf{Method} & \multicolumn{2}{c|}{\textbf{Coverage} ($\alpha = 0.1$)} & \multicolumn{2}{c|}{\textbf{Average Width}} & \multicolumn{3}{c|}{\textbf{Cal. Error}} \\
     & $A=0$ & $A=1$ & $A=0$ & $A=1$ & $A=0$ & $A=1$ & \textbf{Difference} \\
    \hline
    Marginal & 0.839 & 0.903 & 0.529 & 0.529 & -0.0120 & -0.00892 & -0.00308 \\
    CQR (marginal) & 0.887 & 0.861 & 0.918 & 0.629 & -0.0143 & 0.00432 & -0.01862 \\
    Mondrian (5 bins) & 0.968 & 0.889 & 0.816 & 0.551 & -0.0120 & -0.00892 & -0.00308 \\
    Mondrian (10 bins) & 0.952 & 0.910 & 0.783 & 0.601 & -0.0120 & -0.00892 & -0.00308 \\
    Mondrian (87 bins) & 0.774 & 0.806 & 0.652 & 0.481 & -0.0120 & -0.00892 & -0.00308 \\
    Kernel & 0.952 & 0.917 & 0.791 & 0.532 & -0.0120 & -0.00892 & -0.00308 \\
    SC-CP & 0.887 & 0.861 & 0.879 & 0.563 & -0.0356 & -0.0462 & 0.0106 \\
    \hline
\end{tabular}
}
        \subcaption{\textbf{Setting A} (poorly-calibrated $f(\cdot)$)}
    
    \end{subfigure}\begin{subfigure}{0.5\linewidth}
        \centering
        \includegraphics[width=0.5\linewidth]{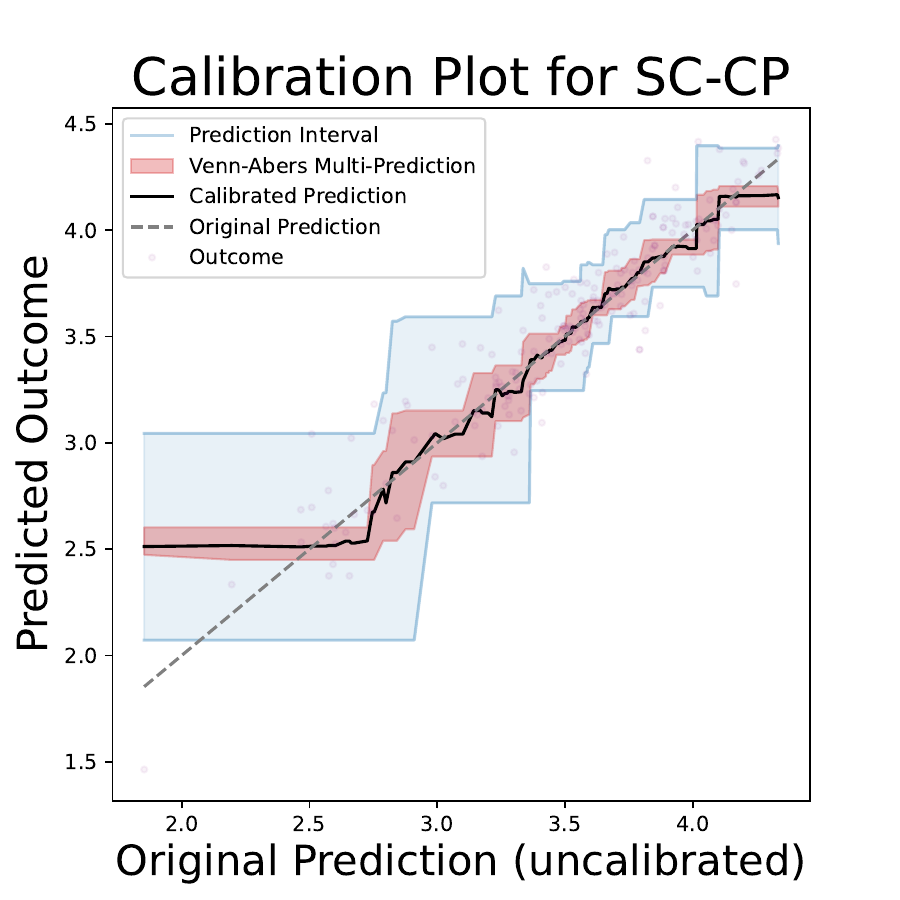}\includegraphics[width=0.5\linewidth]{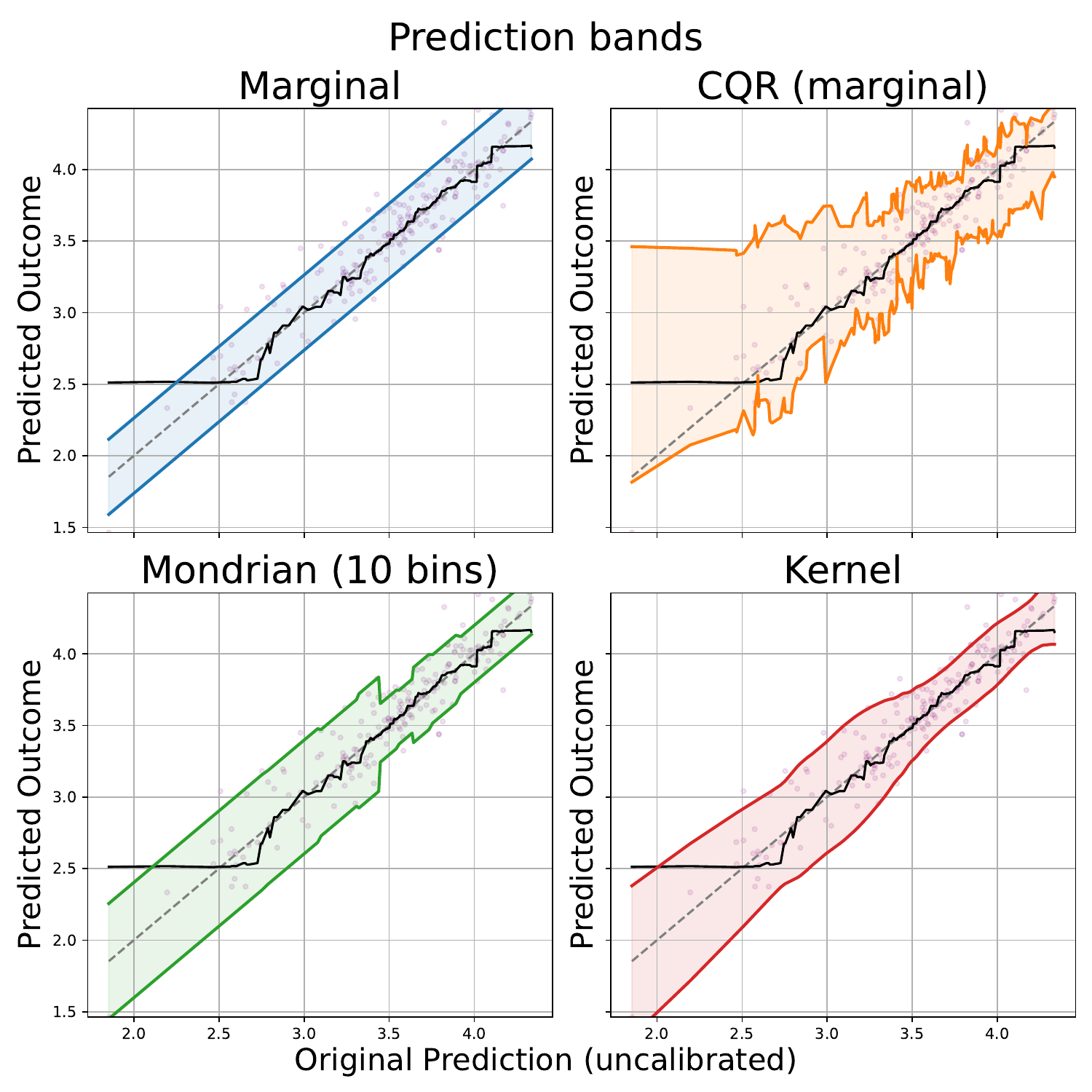}

        \small 
          \resizebox{0.98\linewidth}{!}{\begin{tabular}{|l|cc|cc|ccc|}
    \hline
    \textbf{Method} & \multicolumn{2}{c|}{\textbf{Coverage} ($\alpha = 0.1$)} & \multicolumn{2}{c|}{\textbf{Average Width}} & \multicolumn{3}{c|}{\textbf{Cal. Error}} \\
     & $A=0$ & $A=1$ & $A=0$ & $A=1$ & $A=0$ & $A=1$ & \textbf{Difference} \\
    \hline
    Marginal & 0.839 & 0.903 & 0.525 & 0.525 & -0.0540 & -0.0150 & -0.0390 \\
    CQR (marginal) & 0.871 & 0.889 & 0.901 & 0.617 & -0.00706 & 0.0119 & -0.01896 \\
    Mondrian (5 bins) & 0.935 & 0.903 & 0.698 & 0.534 & -0.0540 & -0.0150 & -0.0390 \\
    Mondrian (10 bins) & 0.935 & 0.903 & 0.699 & 0.504 & -0.0540 & -0.0150 & -0.0390 \\
    Mondrian (86 bins) & 0.726 & 0.792 & 0.602 & 0.453 & -0.0540 & -0.0150 & -0.0390 \\
    Kernel & 0.952 & 0.875 & 0.695 & 0.482 & -0.0540 & -0.0150 & -0.0390 \\
    SC-CP & 0.935 & 0.903 & 0.859 & 0.545 & -0.0587 & -0.0265 & -0.0322 \\
    \hline
\end{tabular}
}  
         \subcaption{\textbf{Setting B} (well-calibrated $f(\cdot)$)}
    \end{subfigure}

     \vspace{-0.3cm}
     
   \caption{\textbf{\textit{Concrete} dataset:} Calibration plot for SC-CP, prediction bands for SC-CP and baselines, and empirical coverage, width, and calibration error within sensitive subgroup.}

     \vspace{-0.3cm}
     
\end{figure}

  \begin{figure}[!htb]

    \centering
       \begin{subfigure}{0.5\linewidth}
       \includegraphics[width=0.5\linewidth]{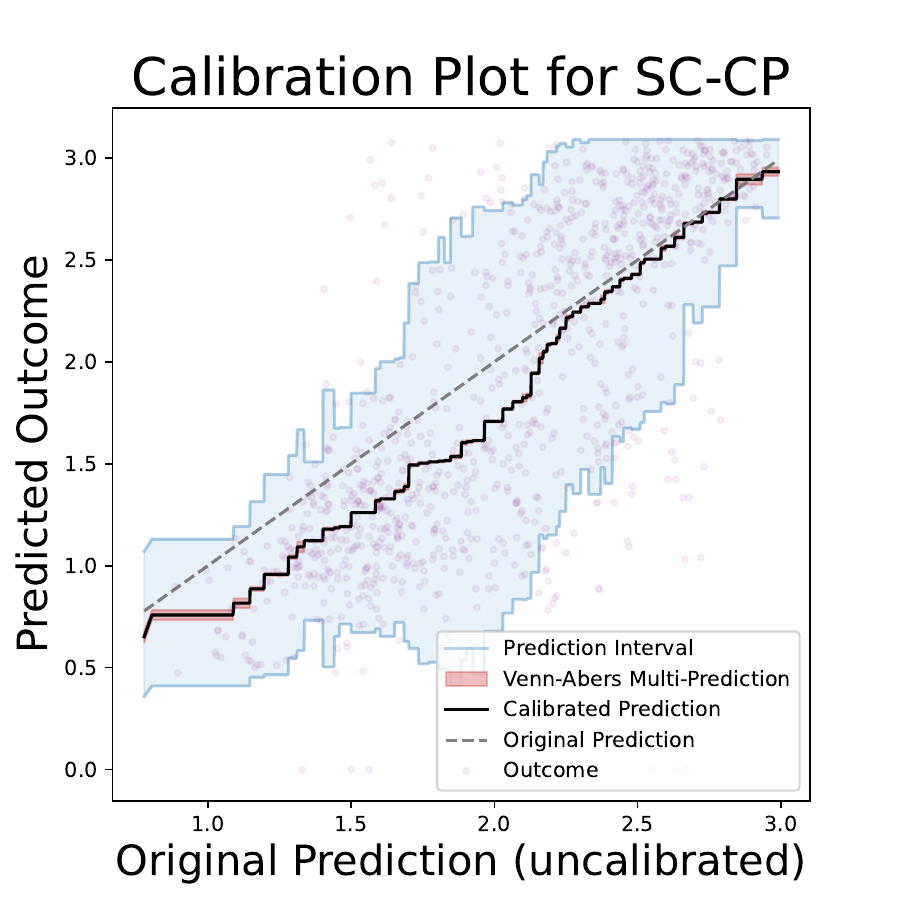}\includegraphics[width=0.5\linewidth]{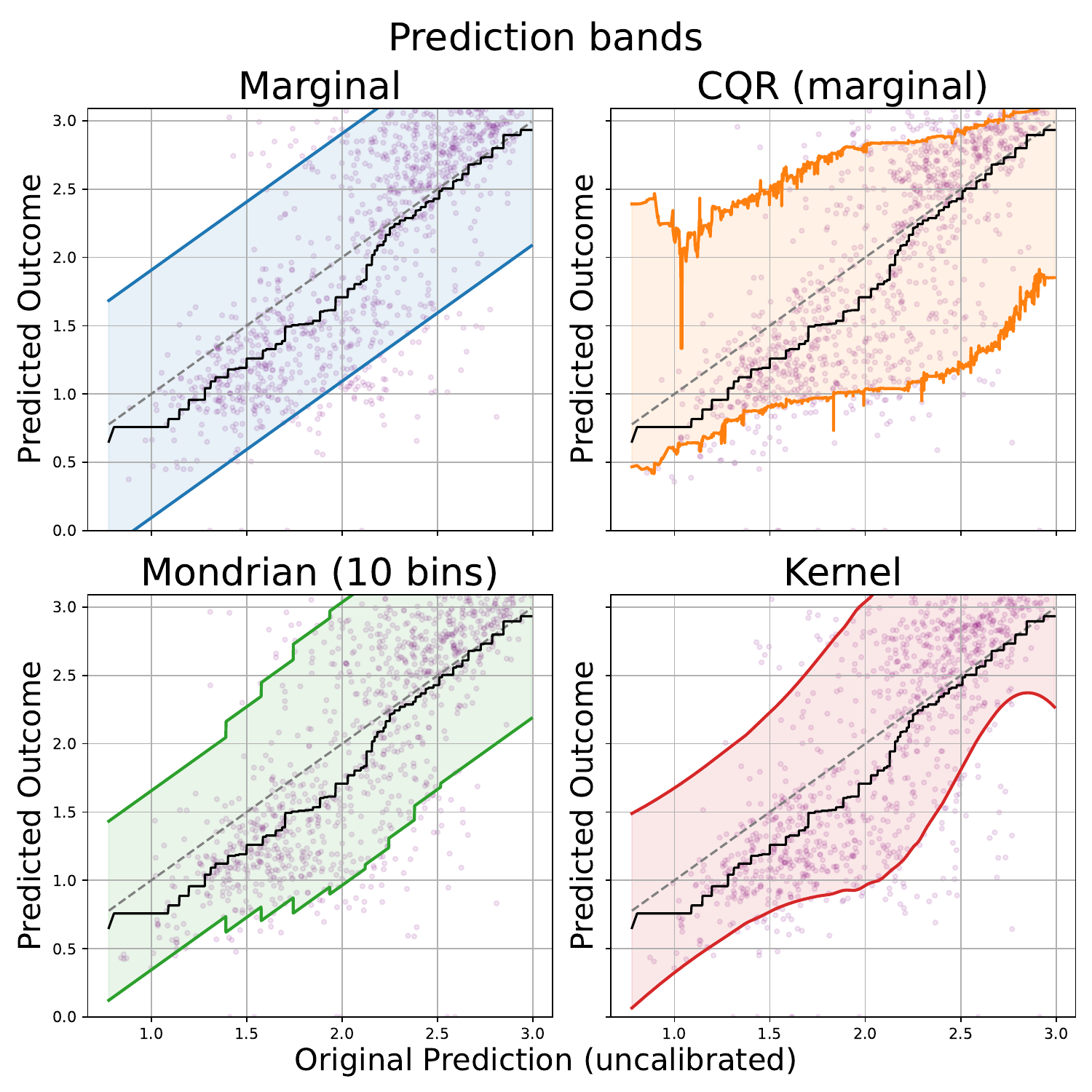}

          \small
        \resizebox{0.98\linewidth}{!}{\begin{tabular}{|l|cc|cc|ccc|}
    \hline
    \textbf{Method} & \multicolumn{2}{c|}{\textbf{Coverage} ($\alpha = 0.1$)} & \multicolumn{2}{c|}{\textbf{Average Width}} & \multicolumn{3}{c|}{\textbf{Cal. Error}} \\
     & $A=0$ & $A=1$ & $A=0$ & $A=1$ & $A=0$ & $A=1$ & \textbf{Difference} \\
    \hline
    Marginal & 0.907 & 0.904 & 1.81 & 1.81 & 0.168 & 0.153 & 0.015 \\
    CQR (marginal) & 0.895 & 0.897 & 1.64 & 1.73 & -0.0223 & -0.0062 & -0.0161 \\
    Mondrian (5 bins) & 0.908 & 0.915 & 1.73 & 1.84 & 0.168 & 0.153 & 0.015 \\
    Mondrian (10 bins) & 0.907 & 0.910 & 1.69 & 1.79 & 0.168 & 0.153 & 0.015 \\
    Mondrian (101 bins) & 0.903 & 0.914 & 1.62 & 1.79 & 0.168 & 0.153 & 0.015 \\
    Kernel & 0.892 & 0.901 & 1.52 & 1.67 & 0.168 & 0.153 & 0.015 \\
    SC-CP & 0.882 & 0.908 & 1.36 & 1.56 & -0.0191 & 0.003 & -0.0221 \\
    \hline
\end{tabular}

}
        \subcaption{\textbf{Setting A} (poorly-calibrated $f(\cdot)$)}
     
    \end{subfigure}\begin{subfigure}{0.5\linewidth}
        \centering
        \includegraphics[width=0.5\linewidth]{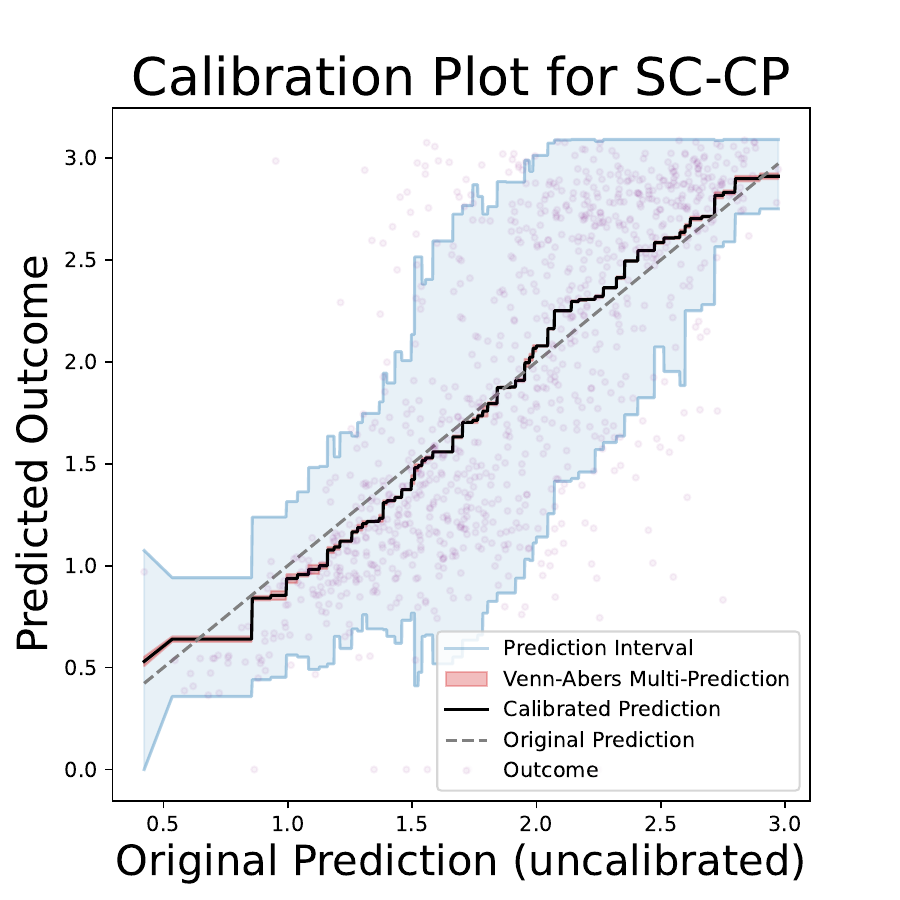}\includegraphics[width=0.5\linewidth]{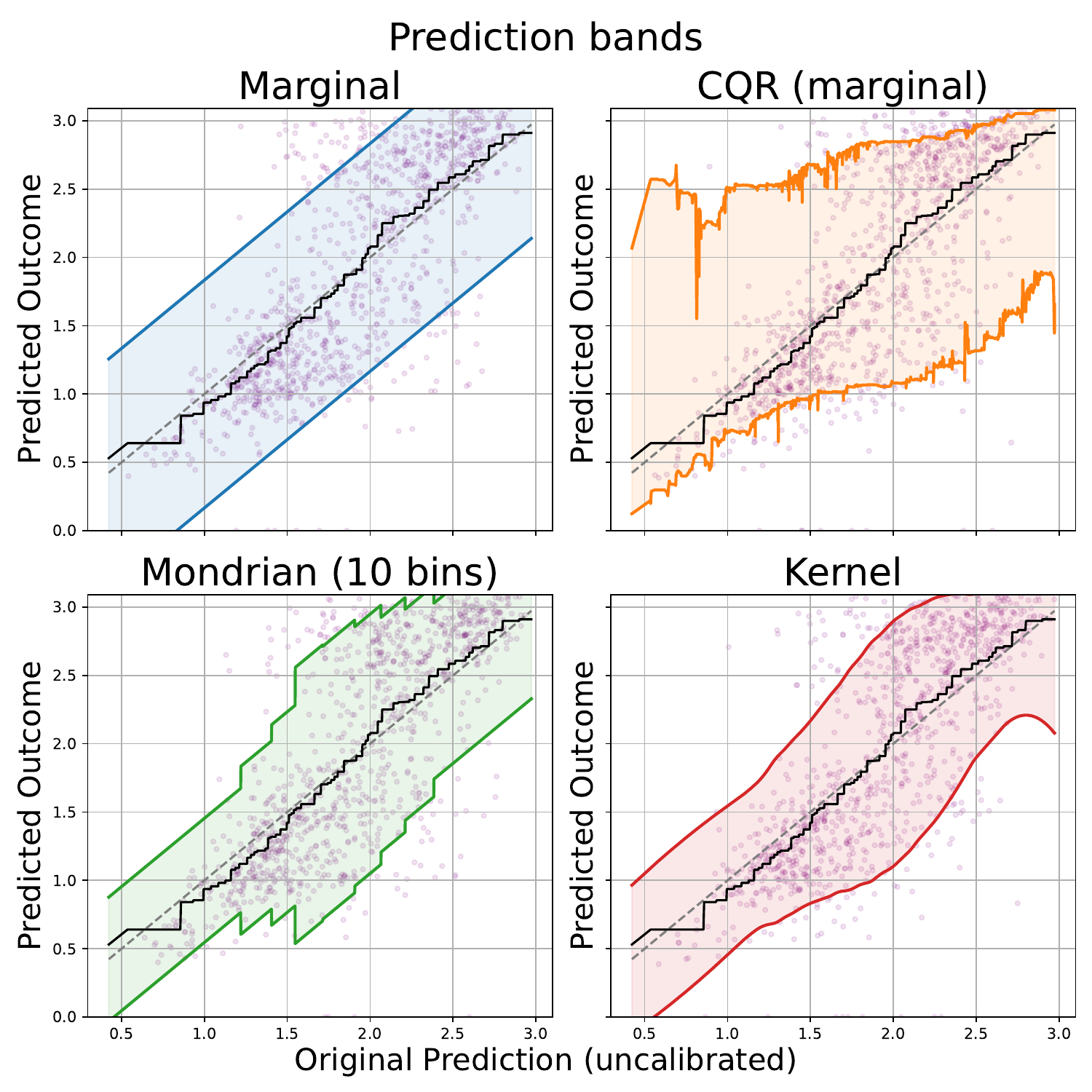}

        \small 
          \resizebox{0.98\linewidth}{!}{\begin{tabular}{|l|cc|cc|ccc|}
    \hline
    \textbf{Method} & \multicolumn{2}{c|}{\textbf{Coverage} ($\alpha = 0.1$)} & \multicolumn{2}{c|}{\textbf{Average Width}} & \multicolumn{3}{c|}{\textbf{Cal. Error}} \\
     & $A=0$ & $A=1$ & $A=0$ & $A=1$ & $A=0$ & $A=1$ & \textbf{Difference} \\
    \hline
    Marginal & 0.898 & 0.904 & 1.67 & 1.67 & 0.00227 & -0.0191 & 0.02137 \\
    CQR (marginal) & 0.891 & 0.889 & 1.66 & 1.70 & -0.0209 & -0.0124 & -0.0085 \\
    Mondrian (5 bins) & 0.902 & 0.922 & 1.58 & 1.68 & 0.00227 & -0.0191 & 0.02137 \\
    Mondrian (10 bins) & 0.893 & 0.923 & 1.49 & 1.59 & 0.00227 & -0.0191 & 0.02137 \\
    Mondrian (101 bins) & 0.894 & 0.925 & 1.49 & 1.61 & 0.00227 & -0.0191 & 0.02137 \\
    Kernel & 0.880 & 0.895 & 1.37 & 1.48 & 0.00227 & -0.0191 & 0.02137 \\
    SC-CP & 0.893 & 0.916 & 1.38 & 1.52 & -0.0174 & 0.00773 & -0.02513 \\
    \hline
\end{tabular}
}  
         \subcaption{\textbf{Setting B} (well-calibrated $f(\cdot)$)}
    \end{subfigure}

     \vspace{-0.3cm}
     
   \caption{\textbf{\textit{Bio} dataset:} Calibration plot for SC-CP, prediction bands for SC-CP and baselines, and empirical coverage, width, and calibration error within sensitive subgroup.}

     \vspace{-0.3cm}
     
\end{figure}

\newpage

\section{Supplementary synthetic data experiments}
\label{sup:exp}
\subsection{Experimental setup}

In this appendix, we perform additional synthetic experiments to evaluate the prediction-conditional validity of our method and how it translates to approximate context-conditional validity in certain cases.

\textbf{Synthetic datasets.} We construct synthetic training, calibration, and test datasets $\mathcal{D}_{train}$, $\mathcal{D}_{cal}$, $\mathcal{D}_{test}$ of sizes $n_{train}, n_{cal}, n_{test}$, which are respectively used to train $f$, apply CP, and evaluate performance. For parameters $d \in \mathbb{N}, \kappa > 0, a \geq 0, b \geq 0$, each dataset consists of \textit{iid} observations of $(X,Y)$ drawn as follows. The covariate vector $X := (X_1, \dots, X_d) \in [0,1]^d$ is coordinate-wise independently drawn from a $\text{Beta}(1, \kappa)$ distribution with shape parameter $\kappa$. Then, conditionally on $X = x$, the outcome $Y$ is drawn normally distributed with conditional mean $\mu(x) := d^{-1/2}\sum_{j=1}^d \{x_j + \sin(4 x_j)\}$ and conditional variance $\sigma^2(x) := \{0.035 - a  \log(0.5 + 0.5x_1) / 8 +  b \left(|\mu_0(x)|^6 / 20 - 0.02 \right)/2\}^2$. Here, $a$ and $b$ control the heteroscedasticity and mean-variance relationship for the outcomes. For $\mathcal{D}_{cal}$ and $\mathcal{D}_{test}$, we set $\kappa_{cal} = \kappa_{test} = 1$ and, for $\mathcal{D}_{train}$, we vary $\kappa_{train}$ to introduce distribution shift and, thereby, calibration error in $f$. The parameters $d$, $a$, and $b$ are fixed across the datasets.


 \textbf{Baselines.} To mitigate overfitting, we implement SC-CP so that each function in $\Theta_{iso}$ has at least 20 observations averaged within each constant segment (implemented using the minimum leaf node size of argument \texttt{xgboost}). When appropriate, we will consider the following baseline CP algorithms for comparison. Unless stated otherwise, for all baselines, we use the scoring function $S(x,y) := |y - f(x)|$. The first baseline, uncond-CP, is split-CP \citep{Lietal2018}, which provides only unconditional coverage guarantees. The second baseline, cond-CP, is adapted from \cite{gibbs2023conformal} and provides conditional coverage over distribution shifts within a specified reproducing kernel Hilbert space. Following Section 5.1 of \cite{gibbs2023conformal}, we use the Gaussian kernel $K(X_i, X_j) := \exp(-4 \|X_i - X_j \|_2^2)$ with euclidean norm $\|\cdot\|_2$ and select the regularization parameter $\lambda$ using 5-fold cross-validation. The third baseline, Mondrian CP, applies the Mondrian CP method \citep{vovk2005algorithmic,bostrom2020mondrian} to categories formed by dividing $f$'s predictions into 20 equal-frequency bins based on $\mathcal{D}_{cal}$. As an optimal benchmark, we consider the oracle satisfying \eqref{condval}.

\subsection{Experiment 1: Calibration and efficiency}
\label{section::experiment_cal}

In this experiment, we illustrate how calibration of the predictor $f$ can improve the efficiency (i.e., width) of the resulting prediction intervals. We consider the data-generating process of the previous section, with $n_{train} = n_{cal} = n_{test} = 1000$, $d = 5$, $a=0$, and $b = 0.6$. The predictor $f$ is trained on $\mathcal{D}_{train}$ using the \textit{ranger} \citep{ranger} implementation of random forests with default settings. To control the calibration error in $f$, we vary the distribution shift parameter $\kappa_{train}$ for $\mathcal{D}_{train}$ over $\{1, 1.5, 2, 2.5, 3\}$.


\textbf{Results.} Figure \ref{fig::calibration_shift} compares the average interval width across $\mathcal{D}_{test}$ for SC-CP and baselines as the $\ell^2$-calibration error in $f$ increases. Here, we estimate the calibration error using the approach of \cite{xu2022calibration}. As calibration error increases, the average interval width for SC-SP appears smaller than those of uncond-CP, Mondrian-CP, and cond-CP, especially in comparison to uncond-CP. The observed efficiency gains in SC-CP are consistent with Theorem \ref{theorem::interaction} and provides empirical evidence that self-calibrated conformity scores translate to tighter prediction intervals, given sufficient data. To test this hypothesis under controlled conditions, we compare the widths of prediction intervals obtained using vanilla (unconditional) CP for two scoring functions: \( S:(x,y) \mapsto |y- f(x)| \) and the Venn-Abers (worst-case) scoring function \( S_{\text{cal}}:(x,y) \mapsto  \max_{y \in \mathcal{Y}}|y- f_{n}^{(x,y)}(x)| \). 
For miscoverage levels $\alpha \in \{0.05, 0.1, 0.2\}$, the left panel of Figure \ref{fig::calibration_shift_2} illustrates the relative efficiency gain achieved by using \( S_{\text{cal}} \), which we define as the ratio of the average interval widths for \( S_{\text{cal}} \) relative to \( S \). The widths and calibration errors in Figure \ref{fig::calibration_shift_2} are averaged across 100 data replicates.

 \textbf{Role of calibration set size.} With too small calibration sets, the isotonic calibration step in Alg.~\ref{alg::conformal} can lead to overfitting. In such cases, the self-calibrated conformity scores could be larger than their uncalibrated counterparts, potentially resulting in less efficient prediction intervals. In our experiments, overfitting is mitigated by constraining the minimum size of the leaf node in the isotonic regression tree to $20$. For $\alpha \in \{0.05, 0.1, 0.2\}$, the right panel of Figure \ref{fig::calibration_shift_2} displays the relationship between $n_{cal}$ and the relative efficiency gain achieved by using \( S_{\text{cal}} \), holding calibration error fixed ($\kappa_{train} = 3$). We find that calibration leads to a noticeable reduction in interval width as soon as $n_{cal} \geq 50$.

\begin{figure}[h]
    \centering
       \begin{subfigure}{0.5\linewidth}
       \includegraphics[width=\linewidth]{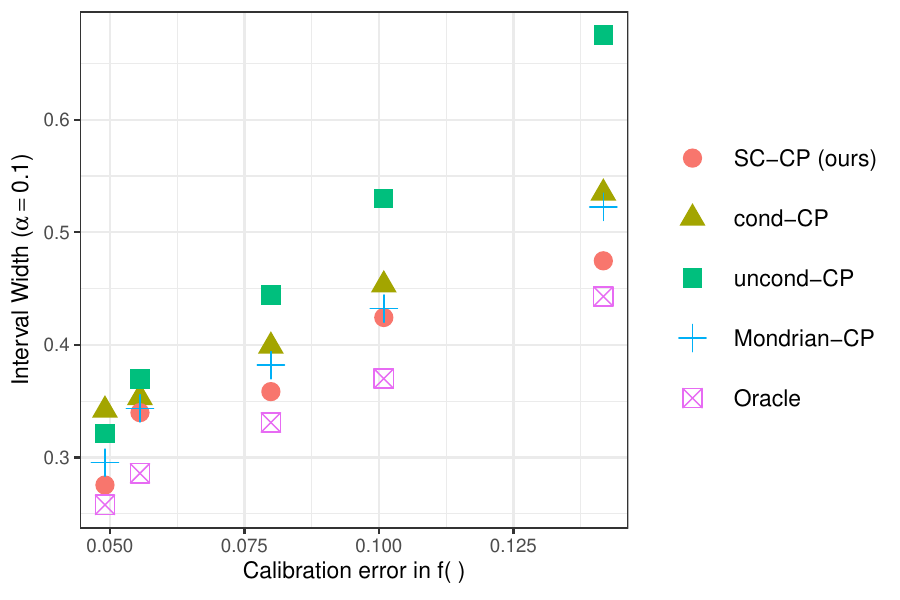}
        \subcaption{Avg. interval width with varying calibration error.}
         \label{fig::calibration_shift}
    \end{subfigure}\begin{subfigure}{0.5\linewidth}
        \centering
        \includegraphics[width=\linewidth]{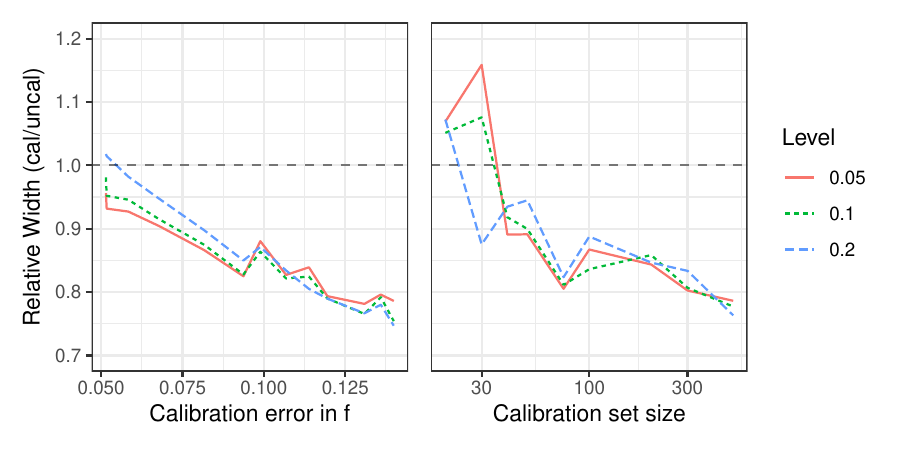}
        \subcaption{Relative efficiency change from calibration.}
        \label{fig::calibration_shift_2}
    \end{subfigure}
    \caption{Figure \ref{fig::calibration_shift} shows the average interval widths for varying $\ell^2$-calibration errors in $f$. Figure \ref{fig::calibration_shift_2} shows the relative change in average interval width using Venn-Abers calibrated versus uncalibrated predictions, with varying $\ell^2$-calibration error in $f$ (left), and calibration set size (right). Below the horizontal lines signifies efficiency gains for SC-CP.}
 
\end{figure}

\subsection{Experiment 2: Coverage and Adaptivity}

In this experiment, we illustrate how self-calibration of $\widehat{C}_{n+1}(X_{n+1})$ can, in some cases, translate to stronger conditional coverage guarantees. Here we take $n_{train} = n_{cal} = 1000$ and $n_{test} = 2500$, and no distribution shift ($\kappa_{train} = 1$) in $\mathcal{D}_{train}$. We consider three setups: \textbf{Setup A}  ($d =5, a=0, b= 0.6$) and  \textbf{Setup B} ($d =5, a=0, b= 0.6$) which have a strong mean-variance relationship in the outcome process; and \textbf{Setup C} ($d = 5, a = 0.6, b=0$) which has no such relationship. We obtain the predictor $f$ from $\mathcal{D}_{train}$ using a generalized additive model \citep{GAMhastie1987generalized}, so that it is well calibrated and accurately estimates $\mu$. To assess the adaptivity of SC-CP to heteroscedasticity, we report the coverage and the average interval width within subgroups of $\mathcal{D}_{test}$ defined by quintiles of the conditional variances $\{\sigma^2(X_i): (X_i, Y_i) \in \mathcal{D}_{test}\}$. 



\begin{figure}[htb!] 
    \centering
   \begin{subfigure}{0.5\linewidth}
        \includegraphics[width=\linewidth]{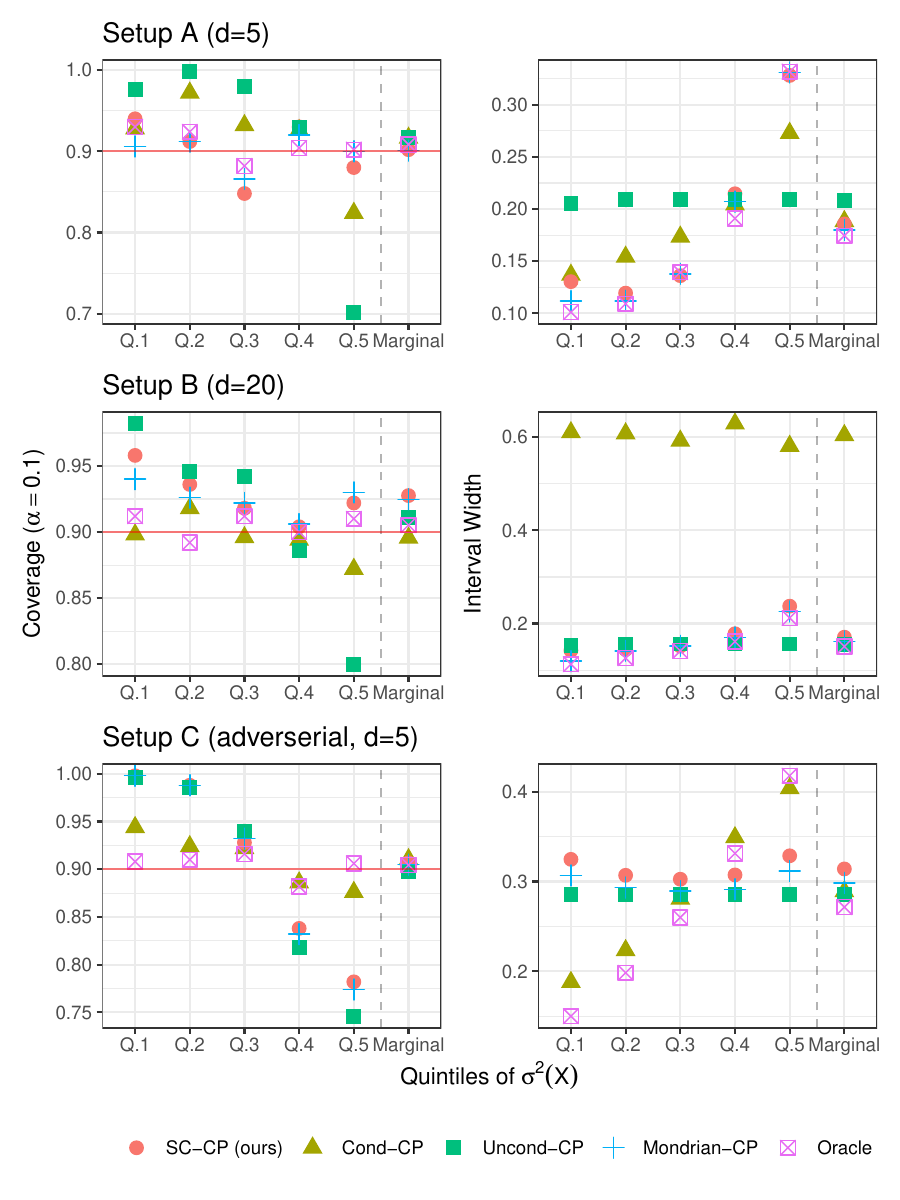}
        \subcaption{Coverage and interval width for $\sigma^2(X)$ quintiles}
            \label{fig::1}
    \end{subfigure}\begin{subfigure}{0.5\linewidth}
       \includegraphics[width=\linewidth]{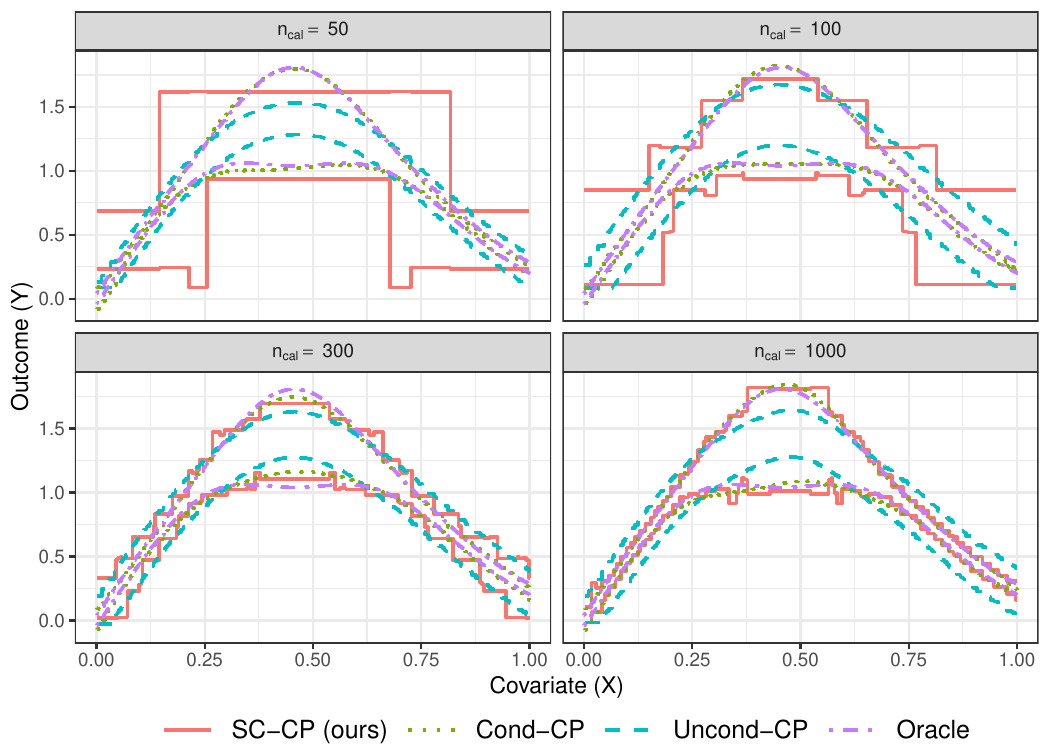}
        \subcaption{Adaptivity of SC-CP with $n_{cal}$}
         \label{fig::adaptive}
    \end{subfigure}
    \caption{Figure \ref{fig::1} displays the coverage and average interval width of SC-CP, marginally and within quintiles of the conditional outcome variance for setups A, B, and C. For a $d=1$ example, Figure \ref{fig::adaptive} shows the adaptivity of SC-CP prediction bands ($\alpha = 0.1$) across various calibration set sizes.}
 
\end{figure}

\textbf{Results.}
The top two panels in Figure \ref{fig::1} displays the coverage and average interval width results for \textbf{Setup A} and \textbf{Setup B}. In both setups, SC-CP, cond-CP, and Mondrian-CP exhibit satisfactory coverage both marginally and within the quintile subgroups. In contrast, while uncond-CP attains the nominal level of marginal coverage, it exhibits noticeable overcoverage within the first three quintiles and significant undercoverage within the fifth quintile. The satisfactory performance of SC-CP with respect to heteroscedasticity in \textbf{Setups A and B} can be attributed to the strong mean-variance relationship in the outcome process. Regarding efficiency, the average interval widths of SC-CP are competitive with those of prediction-binned Mondrian-CP and the oracle intervals. 
The interval widths for cond-CP are wider than those for SC-CP and Mondrian-CP, especially for \textbf{Setup B}. This difference can be explained by cond-CP aiming for conditional validity in a 5D and 20D space, whereas SC-CP and Mondrian-CP target the 1D output space. 

\textbf{Limitation.} 
If there is no mean-variance relationship in the outcomes, SC-CP is generally not expected to adapt to heteroscedasticity. The third (bottom) panel of Figure \ref{fig::1} displays SC-CP's performance in \textbf{Setup C}, where there is no such relationship. In this scenario, it is evident that the conditional coverage of both uncond-CP and SC-CP are poor, while cond-CP maintains adequate coverage.

\textbf{Adaptivity and calibration set size.} 
SC-CP can be derived by applying CP within subgroups defined by a data-dependent binning of the output space $f(\mathcal{X})$, learned via Venn-Abers calibration, where the number of bins can grow with $n_{cal}$. In particular, if \(t \mapsto E[Y \mid f(X) = t ]\) is asymptotically monotone, which is plausible when \(f\) consistently estimates the outcome regression, then the number of bins selected by isotonic calibration will generally increase with \(n_{cal}\), and the width of these bins will tend to zero. As a consequence, in such cases, the self-calibration result in Theorem \ref{theorem::coverage} translates to conditional guarantees over finer partitions of $f(\mathcal{X})$ as \( n_{\text{cal}} \) increases. For $d=1$ and a strong mean-variance relationship ($a =0, b = 0.6$), Figure \ref{fig::adaptive} demonstrates how the adaptivity of SC-CP bands improves as \( n_{\text{cal}} \) increases. In this case, for \( n_{\text{cal}} \) sufficiently large, we find that the SC-CP bands closely match those of cond-CP and the oracle.

\begin{figure}[h]
    \centering
       \begin{subfigure}{0.7\linewidth}
       \includegraphics[width=\linewidth]{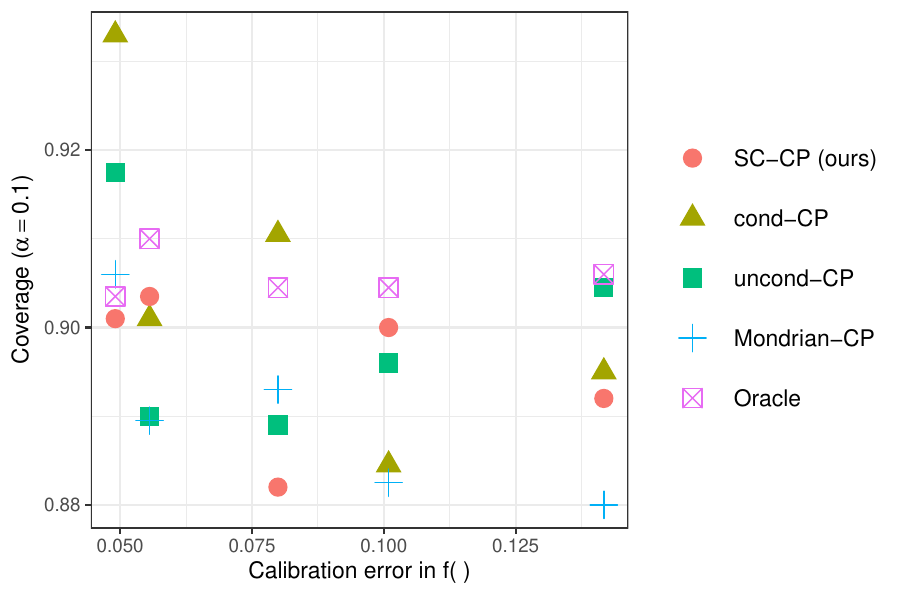}
        \subcaption{Avg. interval width with varying calibration error.}
         \label{fig::calibration_shift_cov}
    \end{subfigure} 
    \caption{Figure \ref{fig::calibration_shift_cov} shows the marginal coverage corresponding to the interval widths of Figure \ref{fig::calibration_shift} for varying $\ell^2$-calibration errors in $f$.  }
 
\end{figure}

 \section{Proofs}
 
 \begin{proof}[Proof of Theorem \ref{theorem::point}]

It can be verified that an \textit{isotonic calibrator class} \(\Theta_{\text{iso}} \) satisfies the following properties: (a) It consists of univariate regression trees (i.e., piecewise constant functions) that are monotonically nondecreasing; (b) For all elements \( \theta \in \Theta_{\text{iso}} \) and transformations \( g: \mathbb{R} \rightarrow \mathbb{R} \), it holds that \(  g \circ \theta \in \Theta_{\text{iso}} \). Property (a) holds by definition. Property (b) is satisfied since constraining the maximum number of constant segments by $k(n)$ does not constrain the possible values that the function may take in a given constant region. We note that the result of this theorem holds in general for any function class \(\Theta_{\text{iso}} \) that satisfies (a) and (b).

 Recall from Alg.~\ref{alg::isocal2} that $f_{n}^{(X_{n+1}, Y_{n+1})} = \theta_{n}^{(X_{n+1}, Y_{n+1})} \circ f$, where $\theta_{n}^{(X_{n+1}, Y_{n+1})} \in \Theta_{iso}$ is an isotonic calibrator satisfying $f_{n}^{(X_{n+1}, Y_{n+1})} = \theta_{n}^{(X_{n+1}, Y_{n+1})} \circ f$. By definition, recall that the isotonic calibrator class $\Theta_{iso}$ satisfies the invariance property that, for all $g:\mathbb{R} \rightarrow \mathbb{R}$, the inclusion $\theta \in \Theta_{iso}$ implies $g \circ \theta \in \Theta_{iso}$. Hence, for all $g:\mathbb{R} \rightarrow \mathbb{R}$ and $\varepsilon > 0$, it also holds that $(1 + \varepsilon g) \circ \theta_{n}^{(X_{n+1}, Y_{n+1})} $ lies in $\Theta_{iso}$. Now, we use that $(1 + \varepsilon g ) \circ \theta_{n}^{(X_{n+1}, Y_{n+1})}  \circ f = (1 + \varepsilon g ) \circ f_{n}^{(X_{n+1}, Y_{n+1})}$ and that $ f_{n}^{(X_{n+1}, Y_{n+1})}$ is an empirical risk minimizer over the class $\{g \circ f : g \in \Theta_{iso}\}$. Using these two observations, the first order derivative equations characterizing the empirical risk minimizer $ f_{n}^{(X_{n+1}, Y_{n+1})}$ imply, for all $g:\mathbb{R} \rightarrow \mathbb{R}$, that
\begin{align}
   \frac{1}{n+1} \sum_{i=1}^{n+1} (g\circ f_{n}^{(X_{n+1}, Y_{n+1})})(X_i)\left\{Y_i -  f_{n}^{(X_{n+1}, Y_{n+1})}(X_i)\right\} &=  \frac{d}{d\varepsilon} \left[ \frac{1}{n+1}\sum_{i=1}^{n+1}\left\{Y_i -  (1 + \varepsilon g ) \circ f_{n}^{(X_{n+1}, Y_{n+1})}\right\}^2 \right] \Bigg |_{\varepsilon = 0}  \nonumber \\
   &= 0. \label{eqn::isoscores}
\end{align}
     Taking expectations of both sides of the above display, we conclude
     \begin{align}
         \frac{1}{n+1} \sum_{i=1}^{n+1} \mathbb{E}\left[(g\circ f_{n}^{(X_{n+1}, Y_{n+1})})(X_i)\left\{Y_i -  f_{n}^{(X_{n+1}, Y_{n+1})}(X_i)\right\} \right]  =0.
         \label{eqn::score1}
     \end{align} 
     We now use the fact that $\{(X_i, Y_i, f_{n}^{(X_{n+1}, Y_{n+1})}(X_i)): i \in [n+1]\}$ are exchangeable, since $\{(X_i, Y_i)\}_{i=1}^{n+1}$ are exchangeable by \ref{cond::exchange} and the function $f_{n}^{(X_{n+1}, Y_{n+1})}$ is invariant under permutations of $\{(X_i, Y_i)\}_{i=1}^{n+1}$. Consequently, Equation \eqref{eqn::score1} remains true if we replace each $(X_i, Y_i, f_{n}^{(X_{n+1}, Y_{n+1})}(X_i))$ with $i \in [n]$ by $(X_{n+1}, Y_{n+1}, f_{n}^{(X_{n+1}, Y_{n+1})}(X_{n+1}))$. That is,
 \begin{align*}
&  \mathbb{E}\left[(g\circ f_{n}^{(X_{n+1}, Y_{n+1})})(X_{n+1})\left\{Y_{n+1} -  f_{n}^{(X_{n+1}, Y_{n+1})}(X_{n+1})\right\} \right]  \\
  & \hspace{2cm} = \frac{1}{n+1} \sum_{i=1}^{n+1} \mathbb{E}\left[(g\circ f_{n}^{(X_{n+1}, Y_{n+1})})(X_{n+1})\left\{Y_{n+1} -  f_{n}^{(X_{n+1}, Y_{n+1})}(X_{n+1})\right\} \right]\\
     &  \hspace{2cm}  = 
       \frac{1}{n+1} \sum_{i=1}^{n+1} \mathbb{E}\left[(g\circ f_{n}^{(X_{n+1}, Y_{n+1})})(X_i)\left\{Y_i -  f_{n}^{(X_{n+1}, Y_{n+1})}(X_i)\right\} \right] \\
       &  \hspace{2cm} = 0.
     \end{align*} 
     By the law of iterated conditional expectations, the preceding display further implies
     \begin{align*}
         \mathbb{E}\left[(g\circ f_{n}^{(X_{n+1}, Y_{n+1})})(X_{n+1})\left\{\mathbb{E}[Y_{n+1} \mid  f_{n}^{(X_{n+1}, Y_{n+1})}(X_{n+1})] -  f_{n}^{(X_{n+1}, Y_{n+1})}(X_{n+1})\right\} \right] = 0 .
     \end{align*}
     Taking $g:\mathbb{R} \rightarrow \mathbb{R}$ to be defined by $(g\circ f_{n}^{(X_{n+1}, Y_{n+1})})(X_{n+1}) := \mathbb{E}[Y_{n+1} \mid  f_{n}^{(X_{n+1}, Y_{n+1})}(X_{n+1})] -  f_{n}^{(X_{n+1}, Y_{n+1})}(X_{n+1})$, we  find
      \begin{align*}
         \mathbb{E}\left[ \left\{\mathbb{E}[Y_{n+1} \mid  f_{n}^{(X_{n+1}, Y_{n+1})}(X_{n+1})] -  f_{n}^{(X_{n+1}, Y_{n+1})}(X_{n+1})\right\}^2 \right] = 0.
     \end{align*}
     The above equality implies $\mathbb{E}[Y_{n+1} \mid  f_{n}^{(X_{n+1}, Y_{n+1})}(X_{n+1})] =  f_{n}^{(X_{n+1}, Y_{n+1})}(X_{n+1}) $ almost surely, as desired.

\end{proof}

 \begin{proof}[Proof of Theorem \ref{theorem::coverage}]

 Recall, for a quantile level $\alpha \in (0,1)$, the ``pinball" quantile loss function \( \ell_{\alpha} \) is given by
\[
\ell_{\alpha}(f(x), s) := 
\begin{cases} 
\alpha (s - f(x)) & \text{if } s \geq f(x), \\
(1-\alpha) (f(x) - s) & \text{if } s < f(x).
\end{cases}
\]
As established in \cite{gibbs2023conformal}, each subgradient of $\ell_{\alpha}(\cdot,x)$ at $f$ in the direction $g$, for some $\beta \in [\alpha - 1, \alpha]$, given by:
\begin{align*}
    & \partial_{\varepsilon[\beta]} \left\{\ell_{\alpha}(f(x) + \varepsilon g(x), s )\right\} \big |_{\varepsilon = 0} := 1(f(x) \neq s)g(x)\{\alpha - 1(f < s)\} + 1(f(x) = s) \beta g(x).
\end{align*}

For $(x,y) \in \mathcal{X} \times \mathcal{Y}$, define the empirical risk minimizer:
\[
\rho_{n}^{(x,y)} \in \argmin_{\theta \circ f_{n}^{(x,y)}; \theta: \mathbb{R} \to \mathbb{R}}  \sum_{i=1}^{n} \ell_{\alpha} \left( \theta \circ f_{n}^{(x,y)}(X_i), S_i^{(x,y)} \right) +  \ell_{\alpha} \left( \theta \circ f_{n}^{(x,y)}(x), S_{n+1}^{(x,y)} \right).
\]
Then, since the isotonic calibrated predictor $f_{n}^{(x,y)}$ is piece-wise constant and the above optimization problem is unconstrained in the map $\theta: \mathbb{R} \rightarrow \mathbb{R}$, it holds that the evaluation $\rho_{n}^{(x,y)}(x)$ lies in the solution set:
    \[
    \argmin_{q \in \mathbb{R}} \sum_{i=1}^{n} K_i(f_n^{(x,y)}, x) \cdot \ell_{\alpha}(q, S_i^{(x,y)}) + \ell_{\alpha}(q, S_{n+1}^{(x,y)}) ,
    \]
where $K_i(f_n^{(x,y)}, x) = \boldsymbol{1}{\left\{f_n^{(x,y)}(X_{i}) = f_n^{(x,y)}(x) \right\}}$. Consequently, we see that the empirical quantile $\rho_{n}^{(x,y)}(x)$ defined in Alg.~\ref{alg::conformal} coincides with the evaluation of the empirical risk minimizer $\rho_{n}^{(x,y)}(\cdot)$ at $x$, as suggested by our notation.

We will now theoretically analyze Alg.~\ref{alg::conformal} by studying the empirical risk minimizer $x' \mapsto  \rho_{n}^{(X_{n+1},Y_{n+1})}(x')$. To do so, we modify the arguments used to establish Theorem 2 of \cite{gibbs2023conformal}. As in \cite{gibbs2023conformal}, we begin by examining the first-order equations of the convex optimization problem defining $x' \mapsto \rho_{n}^{(X_{n+1},Y_{n+1})}(x')$.

\textbf{Studying first-order equations of convex problem:}  Given any transformation $\theta: \mathbb{R} \rightarrow \mathbb{R}$, each subgradient of the map
$$\varepsilon \mapsto \ell_{\alpha}\left(\rho_{n}^{(X_{n+1},Y_{n+1})}(X_{i}) + \varepsilon \theta \circ f_{n}^{(X_{n+1}, Y_{n+1})}(X_{i}), S_i^{(X_{n+1}, Y_{n+1})} \right)$$
is, for some $\beta \in [\alpha - 1, \alpha]^{n+1}$, of the following form: 
    \begin{align*}
    &   \hspace{-0.5cm}   \partial_{\varepsilon[\beta]} \left\{ \ell_{\alpha}\left(\rho_{n}^{(X_{n+1},Y_{n+1})}(X_{i}) + \varepsilon \theta \circ f_{n}^{(X_{n+1}, Y_{n+1})}(X_{i}), S_i^{(X_{n+1}, Y_{n+1})} \right)  \right\}  \big |_{\varepsilon = 0}\\
&   \hspace{1cm} :=  \begin{cases}
 \theta \circ f_{n}^{(X_{n+1}, Y_{n+1})}(X_{i})\{\alpha - 1(\rho_{n}^{(X_{n+1},Y_{n+1})}(X_i) < S_i^{(X_{n+1}, Y_{n+1})})\}   & \text{if }  S_i^{(X_{n+1}, Y_{n+1})} \neq \rho_{n}^{(X_{n+1},Y_{n+1})}(X_i), \\
  \beta_i (\theta \circ f_{n}^{(X_{n+1}, Y_{n+1})})(X_i)   & \text{if }  S_i^{(X_{n+1}, Y_{n+1})} = \rho_{n}^{(X_{n+1},Y_{n+1})}(X_i). \\
\end{cases}
\end{align*}
Now, since $\rho_{n}^{(X_{n+1},Y_{n+1})}$ is an empirical risk minimizer of the quantile loss over the class $\{\theta \circ f_n^{(X_{n+1}, Y_{n+1})};\, \theta:  \mathbb{R} \rightarrow \mathbb{R}\}$, there exists some vector $\beta^* = (\beta_1^*, \dots, \beta_{n+1}^*) \in [\alpha - 1, \alpha]^{n+1}$ such that 
\begin{align*}
 0 & =  \frac{1}{n+1} \sum_{i=1}^{n+1}1\{ S_i^{(X_{n+1}, Y_{n+1})} \neq \rho_{n}^{(X_{n+1},Y_{n+1})}(X_i)\} \left[ \theta \circ f_{n}^{(X_{n+1}, Y_{n+1})}(X_{i})\{\alpha - 1(\rho_{n}^{(X_{n+1},Y_{n+1})}(X_i) < S_i^{(X_{n+1}, Y_{n+1})})\} \right] \\
 & \quad  +  \frac{1}{n+1} \sum_{i=1}^{n+1}1\{ S_i^{(X_{n+1}, Y_{n+1})} = \rho_{n}^{(X_{n+1},Y_{n+1})}(X_i)\} \left[  \beta_i^* (\theta \circ f_{n}^{(X_{n+1}, Y_{n+1})})(X_i) \right] 
\end{align*}
The above display can be rewritten as:
\begin{align}
& \hspace{-2cm} \frac{1}{n+1} \sum_{i=1}^{n+1}\left[ \theta \circ f_{n}^{(X_{n+1}, Y_{n+1})}(X_{i})\{\alpha - 1(\rho_{n}^{(X_{n+1},Y_{n+1})}(X_i) < S_i^{(X_{n+1}, Y_{n+1})})\} \right] \nonumber \\
& \hspace{1cm} =  \frac{1}{n+1} \sum_{i=1}^{n+1}1\{ S_i^{(X_{n+1}, Y_{n+1})} = \rho_{n}^{(X_{n+1},Y_{n+1})}(X_i)\} \left[ (1-\beta_i^*) (\theta \circ f_{n}^{(X_{n+1}, Y_{n+1})})(X_i) \right].   \label{eqn::quantilescore}
\end{align}
Now, observe that the collection of random variables 
$$\left\{(f_{n}^{(X_{n+1}, Y_{n+1})}(X_{i}), S_i^{(X_{n+1}, Y_{n+1})}, \rho_{n}^{(X_{n+1},Y_{n+1})}(X_i) ): i \in [n+1]\right\}$$
are exchangeable, since $\{((X_i, Y_i): i \in [n+1]\}$ are exchangeable by \ref{cond::exchange} and the functions $f_{n}^{(X_{n+1}, Y_{n+1})}(\cdot)$ and $\rho_{n}^{(X_{n+1},Y_{n+1})}(\cdot)$ are unchanged under permutations of the training data $\{(X_i, Y_i)\}_{i \in [n+1]}$. Thus, the expectation of the left-hand side of Equation \eqref{eqn::quantilescore} can be expressed as:
\begin{align*}
\hspace{-0.65cm} &\mathbb{E}\left[\frac{1}{n+1} \sum_{i=1}^{n+1}\left[ \theta \circ f_{n}^{(X_{n+1}, Y_{n+1})}(X_{i})\{\alpha - 1(\rho_{n}^{(X_{n+1},Y_{n+1})}(X_i) < S_i^{(X_{n+1}, Y_{n+1})})\} \right]\right] \\
& \hspace{1cm} = \frac{1}{n+1} \sum_{i=1}^{n+1}\mathbb{E}\left[  \theta \circ f_{n}^{(X_{n+1}, Y_{n+1})}(X_{i})\{\alpha - 1(\rho_{n}^{(X_{n+1},Y_{n+1})}(X_i) < S_i^{(X_{n+1}, Y_{n+1})})\} \right]  \\
& \hspace{1cm}  = \mathbb{E}\left[  \theta \circ f_{n}^{(X_{n+1}, Y_{n+1})}(X_{n+1})\{\alpha - 1(\rho_{n}^{(X_{n+1},Y_{n+1})}(X_{n+1}) < S_{n+1}^{(X_{n+1}, Y_{n+1})})\} \right]  ,
\end{align*}
where the final inequality follows from exchangeability. Combining this with Equation \eqref{eqn::quantilescore}, we find
\begin{align}
& \mathbb{E}\left[  \theta \circ f_{n}^{(X_{n+1}, Y_{n+1})}(X_{n+1})\{\alpha - 1(\rho_{n}^{(X_{n+1},Y_{n+1})}(X_{n+1}) < S_{n+1}^{(X_{n+1}, Y_{n+1})})\} \right] \\
& \hspace{1cm} = \mathbb{E}\left[1\{ S_i^{(X_{n+1}, Y_{n+1})} = \rho_{n}^{(X_{n+1},Y_{n+1})}(X_i)\}  \left[ (1-\beta_i^*) (\theta \circ f_{n}^{(X_{n+1}, Y_{n+1})})(X_i) \right]\right|  \label{eqn::quantilescore2}
\end{align}

\textbf{Lower bound on coverage:} We first obtain the lower coverage bound in the theorem statement. Note, for any nonnegative $\theta:\mathbb{R}\rightarrow \mathbb{R}$, that \eqref{eqn::quantilescore2} implies:
 \begin{align*}
\mathbb{E}\left[  \theta \circ f_{n}^{(X_{n+1}, Y_{n+1})}(X_{n+1})\{\alpha - 1(\rho_{n}^{(X_{n+1},Y_{n+1})}(X_{n+1}) < S_{n+1}^{(X_{n+1}, Y_{n+1})})\} \right]   \geq 0.
\end{align*}
This inequality holds since $(1- \beta^*_i) \geq 0$ and $(\theta \circ f_{n}^{(X_{n+1}, Y_{n+1})})(X_i) \geq 0$ almost surely, for each $i \in [n+1]$.

By the law of iterated expectations, we then have 
 \begin{align*}
\mathbb{E}\left[  \theta \circ f_{n}^{(X_{n+1}, Y_{n+1})}(X_{n+1})\left\{\alpha - \mathbb{P}\left(\rho_{n}^{(X_{n+1},Y_{n+1})}(X_{n+1}) < S_{n+1}^{(X_{n+1}, Y_{n+1})} \mid f_{n}^{(X_{n+1}, Y_{n+1})}(X_{n+1})  \right) \right\} \right]   \geq 0.
\end{align*} 
Taking $\theta: \mathbb{R}\rightarrow \mathbb{R}$ as a nonnegative map that almost surely satisfies 
$$\theta \circ f_{n}^{(X_{n+1}, Y_{n+1})}(X_{n+1}) = 1\left\{\alpha \leq \mathbb{P}\left(\rho_{n}^{(X_{n+1},Y_{n+1})}(X_{n+1}) < S_{n+1}^{(X_{n+1}, Y_{n+1})} \mid f_{n}^{(X_{n+1}, Y_{n+1})}(X_{n+1})  \right) \right\},$$ 
we find
 \begin{align*}
-\mathbb{E}\left[\left\{\alpha - \mathbb{P}\left(\rho_{n}^{(X_{n+1},Y_{n+1})}(X_{n+1}) < S_{n+1}^{(X_{n+1}, Y_{n+1})} \mid f_{n}^{(X_{n+1}, Y_{n+1})}(X_{n+1})  \right) \right\}_{-} \right]   \geq 0,
\end{align*} 
where the map $t\mapsto \{t\}_- := |t| 1(t \leq 0)$ extracts the negative part of its input. Multiplying both sides of the previous inequality by $-1$, we obtain
 \begin{align*}
0\leq \mathbb{E}\left[\left\{\alpha - \mathbb{P}\left(\rho_{n}^{(X_{n+1},Y_{n+1})}(X_{n+1}) < S_{n+1}^{(X_{n+1}, Y_{n+1})} \mid f_{n}^{(X_{n+1}, Y_{n+1})}(X_{n+1})  \right) \right\}_{-} \right]   \leq 0.
\end{align*} 
We conclude that the negative part of $\left\{\alpha - \mathbb{P}\left(\rho_{n}^{(X_{n+1},Y_{n+1})}(X_{n+1}) < S_{n+1}^{(X_{n+1}, Y_{n+1})} \mid f_{n}^{(X_{n+1}, Y_{n+1})}(X_{n+1})  \right) \right\}$ is almost surely zero. Thus, it must be almost surely true that 
$$\alpha \geq  \mathbb{P}\left(\rho_{n}^{(X_{n+1},Y_{n+1})}(X_{n+1}) < S_{n+1}^{(X_{n+1}, Y_{n+1})} \mid f_{n}^{(X_{n+1}, Y_{n+1})}(X_{n+1})  \right).$$
Note that the event $Y_{n+1} \in \widehat{C}_{n+1}(X_{n+1}) = \{y \in \mathcal{Y}: S_{n+1}^{(X_{n+1}, y)} \leq \rho_n^{(X_{n+1}, y)}(X_{n+1})\}$ occurs if, and only if, $\rho_{n}^{(X_{n+1},Y_{n+1})}(X_{n+1}) \geq S_{n+1}^{(X_{n+1}, Y_{n+1})}$. As a result, we obtain the desired lower coverage bound:
$$1 - \alpha \leq  \mathbb{P}\left(Y_{n+1} \in \widehat{C}_{n+1}(X_{n+1}) \mid f_{n}^{(X_{n+1}, Y_{n+1})}(X_{n+1})  \right).$$

\textbf{Deviation bound for the coverage:} We now bound the deviation of the coverage of SC-CP from the lower bound. Note, for any $f:\mathbb{R}\rightarrow \mathbb{R}$, that \eqref{eqn::quantilescore2} implies:
\begin{align}
& \mathbb{E}\left[  \theta \circ f_{n}^{(X_{n+1}, Y_{n+1})}(X_{n+1})\{\alpha - 1(\rho_{n}^{(X_{n+1},Y_{n+1})}(X_{n+1}) < S_{n+1}^{(X_{n+1}, Y_{n+1})})\} \right]   \nonumber \\
& \hspace{1cm} \leq \left|  \mathbb{E}\left[1\{ S_i^{(X_{n+1}, Y_{n+1})} = \rho_{n}^{(X_{n+1},Y_{n+1})}(X_i)\}  \left[ (1-\beta_i^*) (\theta \circ f_{n}^{(X_{n+1}, Y_{n+1})})(X_i) \right]\right|  \right|. \label{eqn::quantilescore3}
\end{align}
Using that $(1-\beta_i^*)\in [0,1]$ and exchangeability, we can bound the right-hand side of the above as
\begin{align*}
    &\left|\mathbb{E} \left[  \frac{1}{n+1} \sum_{i=1}^{n+1}1\{ S_i^{(X_{n+1}, Y_{n+1})} = \rho_{n}^{(X_{n+1},Y_{n+1})}(X_i)\} \left[ (1-\beta_i^*) (\theta \circ f_{n}^{(X_{n+1}, Y_{n+1})})(X_i) \right]   \right]\right|\\
    & \leq \frac{1}{n+1} \mathbb{E} \left[\left\{ \max_{i \in [n+1]} \left|(\theta \circ f_{n}^{(X_{n+1}, Y_{n+1})})(X_i) \right|  \right\}\sum_{i=1}^{n+1}  1\{ S_i^{(X_{n+1}, Y_{n+1})} = \rho_{n}^{(X_{n+1},Y_{n+1})}(X_i)\}     \right],
\end{align*} 
Next, since there are no ties by \ref{cond::distinct}, the event $ 1\{ S_i^{(X_{n+1}, Y_{n+1})} = \rho_{n}^{(X_{n+1},Y_{n+1})}(X_i)\} $ for some index $i \in [n+1]$ can only occur once per piecewise constant segment of $ \rho_{n}^{(X_{n+1},Y_{n+1})}$, since, otherwise, $S_i^{(X_{n+1}, Y_{n+1})} = \rho_{n}^{(X_{n+1},Y_{n+1})}(X_i) = \rho_{n}^{(X_{n+1},Y_{n+1})}(X_j)  = S_j^{(X_{n+1},Y_{n+1})} $ for some $i \neq j$. However, $\rho_{n}^{(X_{n+1},Y_{n+1})}$ is a transformation of $f_n^{(X_{n+1}, Y_{n+1})}$ and, therefore, has the same number of constant segments as $f_n^{(X_{n+1}, Y_{n+1})}$. Thus, it holds that
$$  \sum_{i=1}^{n+1}  1\{ S_i^{(X_{n+1}, Y_{n+1})} = \rho_{n}^{(X_{n+1},Y_{n+1})}(X_i)\} \leq   N^{(X_{n+1}, Y_{n+1})},$$
where $N^{(X_{n+1}, Y_{n+1})}$ is the (random) number of constant segments of $f_n^{(X_{n+1}, Y_{n+1})}$. This implies that
\begin{align*}
    &\frac{1}{n+1} \mathbb{E} \left[ \max_{i \in [n+1]} |(\theta \circ f_{n}^{(X_{n+1}, Y_{n+1})})(X_i) |  \sum_{i=1}^{n+1}  1\{ S_i^{(X_{n+1}, Y_{n+1})} = \rho_{n}^{(X_{n+1},Y_{n+1})}(X_i)\}     \right] \\
    & \hspace{1.5cm}\leq   \frac{1}{n+1}\mathbb{E} \left[N^{(X_{n+1}, Y_{n+1})}  \max_{i \in [n+1]} |(\theta \circ f_{n}^{(X_{n+1}, Y_{n+1})})(X_i) |     \right].
\end{align*} 
Combining this bound with \eqref{eqn::quantilescore3}, we find
\begin{align*}
 & \mathbb{E}\left[  \theta \circ f_{n}^{(X_{n+1}, Y_{n+1})}(X_{n+1})\{\alpha - 1(\rho_{n}^{(X_{n+1},Y_{n+1})}(X_{n+1}) < S_{n+1}^{(X_{n+1}, Y_{n+1})})\} \right]  \\
 & \hspace{1.5cm} \leq     \frac{1}{n+1}\mathbb{E} \left[  N^{(X_{n+1}, Y_{n+1})}   \max_{i \in [n+1]} |(\theta \circ f_{n}^{(X_{n+1}, Y_{n+1})})(X_i) |   \right].
\end{align*}
By the law of iterated expectations, we then have 
\begin{align*}
  &  \hspace{-0.5cm}\mathbb{E}\left[  \theta \circ f_{n}^{(X_{n+1}, Y_{n+1})}(X_{n+1})\left\{\alpha - \mathbb{P}\left(\rho_{n}^{(X_{n+1},Y_{n+1})}(X_{n+1}) < S_{n+1}^{(X_{n+1}, Y_{n+1})} \mid  f_{n}^{(X_{n+1}, Y_{n+1})}(X_{n+1}) \right) \right \} \right] \\
  & \hspace{1cm}\leq   \frac{1}{n+1}\mathbb{E} \left[ N^{(X_{n+1}, Y_{n+1})}\max_{i \in [n+1]} |(\theta \circ f_{n}^{(X_{n+1}, Y_{n+1})})(X_i) |     \right].
\end{align*}
Next, let $\mathcal{V} \subset \mathbb{R}$ denote the support of the random variable $f_{n}^{(X_{n+1}, Y_{n+1})}(X_{n+1})$. Then, taking $\theta$ to be $ t\mapsto 1(t \in \mathcal{V})\text{sign} \left\{\alpha - \mathbb{P}\left(\rho_{n}^{(X_{n+1},Y_{n+1})}(X_{n+1}) < S_{n+1}^{(X_{n+1}, Y_{n+1})} \mid  f_{n}^{(X_{n+1}, Y_{n+1})}(X_{n+1}) = t \right) \right \}$, which falls almost surely in $\{-1,1\}$, we obtain the mean absolute error bound:
\begin{align*}
\mathbb{E}\left|\alpha - \mathbb{P}\left(\rho_{n}^{(X_{n+1},Y_{n+1})}(X_{n+1}) < S_{n+1}^{(X_{n+1}, Y_{n+1})} \mid  f_{n}^{(X_{n+1}, Y_{n+1})}(X_{n+1}) \right) \right |  \leq  \frac{1}{n+1}\mathbb{E} \left[ N^{(X_{n+1}, Y_{n+1})}  \right].
\end{align*}
Since the event $Y_{n+1} \not \in \widehat{C}_{n+1}(X_{n+1}) = \{y \in \mathcal{Y}: S_{n+1}^{(X_{n+1}, y)} \leq \rho_n^{(X_{n+1}, y)}(X_{n+1})\}$ occurs if, and only if, $\rho_{n}^{(X_{n+1},Y_{n+1})}(X_{n+1}) < S_{n+1}^{(X_{n+1}, Y_{n+1})}$, we conclude that
\begin{align*}
\mathbb{E}\left|\alpha - \mathbb{P}\left(Y_{n+1} \not \in \widehat{C}_{n+1}(X_{n+1})\mid  f_{n}^{(X_{n+1}, Y_{n+1})}(X_{n+1}) \right) \right |  \leq     \frac{\mathbb{E} [ N^{(X_{n+1}, Y_{n+1})}  ]}{n+1} .
\end{align*}
Under  \ref{cond::depth}, $\mathbb{E} [ N^{(X_{n+1}, Y_{n+1})}  ] \leq  n^{1/3}\polylog n$, such that 
 \begin{align*}
\mathbb{E}\left|\alpha - \mathbb{P}\left(Y_{n+1} \not \in \widehat{C}_{n+1}(X_{n+1})\mid  f_{n}^{(X_{n+1}, Y_{n+1})}(X_{n+1}) \right) \right |  \leq     \frac{n^{1/3}\polylog n }{n+1}  \leq     \frac{\polylog n }{n^{2/3}} ,
\end{align*} 
as desired.

 \end{proof}

\begin{proof}[Proof of Theorem \ref{theorem::interaction}]
Let $P_{n+1}$ denote the empirical distribution of $\{(X_i, Y_i)\}_{i=1}^{n+1}$ and let $P_{n}$ denote the empirical distribution of $\{(X_i, Y_i)\}_{i=1}^{n}$. For any function $g: \mathcal{X} \times \mathcal{Y} \rightarrow \mathbb{R}$: we use the following empirical process notation: $P g := \int g(x,y) dP(x,y)$, $P_{n+1} g := \int g(x,y) dP_{n+1}(x,y)$, and $P_{n} g := \int g(x,y) dP_{n}(x,y)$.

Define the risk functions $R_{n}^{(x,y)}(\theta) := \frac{1}{n+1}\sum_{i=1}^{n} \{S_{\theta}(X_i, Y_i)\}^2 + \frac{1}{n+1} \{S_{\theta}(x, y)\}^2 $, $R_{n+1}(\theta) := \frac{1}{n+1}\sum_{i=1}^{n+1} \{S_{\theta}(X_i, Y_i)\}^2$, and $R_{0}(\theta) := \int  \{S_{\theta}(x, y)\}^2 dP(x,y)$. Moreover, define the risk minimizers as $\theta_n^{(x,y)}:= \argmin_{\theta \in \Theta_{iso}} R_{n}^{(x,y)}(f)$ and $\theta_{0} := \argmin_{\theta \in \Theta_{iso}} R_{0}(\theta)$. Observe that $ R_{n}^{(x,y)}(\theta_n^{(x,y)}) - R_{n}^{(x,y)}(\theta_0) \leq 0$ since $f_n$ minimizes $R_n$ over $\Theta_{iso}$. Using this inequality, it follows that
\begin{align*}
   R_{0}(\theta_n^{(x,y)}) - R_{0}(\theta_0) &=  R_{0}(\theta_n^{(x,y)}) - R_{n}^{(x,y)}(\theta_n^{(x,y)})
   \\
  & \quad + R_{n}^{(x,y)}(\theta_n^{(x,y)}) - R_{n}^{(x,y)}(\theta_0) + R_{n}^{(x,y)}(\theta_0) - R_{0}(\theta_0)\\
   & \leq R_{0}(\theta_n^{(x,y)}) - R_{n}^{(x,y)}(\theta_n^{(x,y)}) - \left\{R_{n}^{(x,y)}(\theta_0) - R_{0}(\theta_0) \right\}
   \\
    & \leq R_{0}(\theta_n^{(x,y)}) - R_{n+1}(\theta_n^{(x,y)}) - \left\{R_{n+1}(\theta_0) - R_{0}(\theta_0) \right\}
   \\
   & \quad + R_n^{(x,y)}(\theta_n^{(x,y)}) - R_{n+1}(\theta_n^{(x,y)}) - \left\{R_{n}^{(x,y)}(\theta_0) - R_{n+1}(\theta_0) \right\}.
\end{align*}
The first term on the right-hand side of the above display can written as 
$$R_{0}(\theta_n^{(x,y)}) - R_{n+1}(\theta_n^{(x,y)}) - \left\{R_{n+1}(\theta_0) - R_{0}(\theta_0) \right\} =  (P_{n+1}-P)\left[ \{S_{\theta_n^{(x,y)}}\}^2 - \{S_{\theta_0}\}^2\right].$$
We now bound the second term, $R_n^{(x,y)}(\theta_n^{(x,y)}) - R_{n+1}(\theta_n^{(x,y)}) - \left\{R_{n}^{(x,y)}(\theta_0) - R_{n+1}(\theta_0) \right\}$. For any $\theta \in \Theta_{iso}$, observe that
$$R_n^{(x,y)}(\theta) - R_{n+1}(\theta)   = \frac{1}{n+1}\left[\{y - \theta \circ f(x)\}^2 - \{Y_{n+1} - \theta \circ f(X_{n+1})\}^2 \right].$$
We know that $\theta_n^{(x,y)}$ and $\theta_0$, being defined via isotonic regression, are uniformly bounded by $B := \sup_{y \in \mathcal{Y}} |y|$, which is finite by \ref{cond::bounded}. Therefore,
 $$ \left|R_n^{(x,y)}(\theta_n^{(x,y)}) - R_{n+1}(\theta_n^{(x,y)}) - \left\{R_{n}^{(x,y)}(\theta_0) - R_{n+1}(\theta_0) \right\} \right| \leq \frac{8B^2}{n+1} = O(n^{-1}).$$
Combining the previous displays, we obtain the excess risk bound
\begin{equation}
    R_{0}(\theta_n^{(x,y)}) - R_{0}(\theta_0) \leq (P_{n+1}-P)\left[ \{S_{\theta_n^{(x,y)}}\}^2 - \{S_{\theta_0}\}^2\right]+  O(n^{-1}) .\label{eqn::riskupper}
\end{equation}

Next, we claim that $ R_{0}(\theta_n^{(x,y)}) - R_{0}(\theta_0) \geq \| (\theta_n^{(x,y)}\circ f) - (\theta_0 \circ f)\|_{P}^2 $. To show this, expanding the squares, note, pointwise for each $x \in \mathcal{X}$ and $y \in \mathcal{Y}$, that
\begin{align*}
    \{S_{\theta_n^{(x,y)}}(x, y)\}^2 - \{S_{\theta_0}(x, y)\}^2 &= \{\theta_n^{(x,y)}\circ f(x)\}^2 -  \{\theta_0 \circ f(x)\}^2 - 2 y \left\{ (\theta_n^{(x,y)}\circ f)(x) - (\theta_0 \circ f)(x) \right\}\\
    &= \left\{(\theta_n^{(x,y)}\circ f)(x) + (\theta_0 \circ f)(x) - 2y \right\}\left\{ (\theta_n^{(x,y)}\circ f)(x) - (\theta_0 \circ f)(x)\right\}.
\end{align*} 
Consequently,
\begin{equation}
    R_{0}(\theta_n^{(x,y)}) - R_{0}(\theta_0) = \int \left\{(\theta_n^{(x,y)}\circ f)(x) + (\theta_0 \circ f)(x) - 2y \right\}\left\{ (\theta_n^{(x,y)}\circ f)(x) - (\theta_0 \circ f)(x)\right\} dP(x).\label{eqn::excessrisk1}
\end{equation}

The class $\Theta_{iso}$ consists of all isotonic functions and is, therefore, a convex space. Thus, the first-order derivative equations defining the population minimizer $\theta_0$ imply that
\begin{equation}
 \int \left\{\theta_n^{(x,y)}\circ f(x)  - \theta_0\circ f(x) \right\}\left\{ y - \theta_0\circ f(x)\right\} dP(x,y) \leq 0. \label{eqn::excessrisk2}
\end{equation}
Combining \eqref{eqn::excessrisk1} and \eqref{eqn::excessrisk2}, we find
\begin{align*}
   R_{0}(\theta_n^{(x,y)}) - R_{0}(\theta_0)& =   \int\left\{ (\theta_n^{(x,y)}\circ f)(x) - (\theta_0 \circ f)(x)\right\}^2 dP(x)\\
   & \quad +  2\int \left\{(\theta_0 \circ f)(x) - y \right\}\left\{ (\theta_n^{(x,y)}\circ f)(x) - (\theta_0 \circ f)(x)\right\} dP(x) \\
   & \geq   \int\left\{ (\theta_n^{(x,y)}\circ f)(x) - (\theta_0 \circ f)(x)\right\}^2 dP(x),
\end{align*} 
as desired. Combining this lower bound with \eqref{eqn::riskupper}, we obtain the inequality
\begin{align}
  \int\left\{ (\theta_n^{(x,y)}\circ f)(x) - (\theta_0 \circ f)(x)\right\}^2 dP(x) \leq  R_{0}(\theta_n^{(x,y)}) - R_{0}(\theta_0) \leq (P_n-P)\left[ \{S_{\theta_n^{(x,y)}}\}^2 - \{S_{\theta_0}\}^2\right] + O(1/n)\label{eqn::excessrisk3}
\end{align}
Define $\delta_n := \sqrt{ \int\left\{ (\theta_n^{(x,y)}\circ f)(x) - (\theta_0 \circ f)(x)\right\}^2 dP(x)}$, the bound $B := \sup_{y \in \mathcal{Y}} |y|$, and the function class,
$$\Theta_{1,n} := \left\{(x,y) \mapsto \{(\theta_1 + \theta_2)\circ f - 2y\}\{ (\theta_1 - \theta_2)\circ f \}\right\}.$$
Using this notation, \eqref{eqn::excessrisk3} implies 
\begin{align*}
    \delta_n^2 &\leq \sup_{\theta_1, \theta_2 \in \Theta_{iso}: \norm{\theta_1 - \theta_2} \leq \delta_n} \int   \{(\theta_1 + \theta_2)\circ f(x) - 2y\}\{ (\theta_1 - \theta_2)\circ f(x) \} d(P_n - P)(x,y) + O(1/n) \\
    & \leq \sup_{h \in \Theta_{1,n}: \norm{h} \leq 4B\delta_n} (P_n - P)h + O(1/n)
\end{align*} 
Using the above inequality and \ref{cond::indep}, we will use an argument similar to the proof of Theorem 3 in \cite{van2023causalcal} to establish that $\delta_n = O_p(n^{-1/3})$. This rate then implies the result of the theorem. To see this, note, by the reverse triangle inequality,
\begin{align*}
 \left| S_n^{(x,y)}(y',x') - S_0(x',y') \right| &=  \left||y' - \theta_n^{(x,y)}(x')| - |y' - \theta_0(x')|  \right|\\
 & \leq  \left|\theta_0(x') - \theta_n^{(x,y)}(x')\right|.
\end{align*} 
Squaring and integrating the left- and right-hand sides, we find
\begin{align*}
 \int \left| S_n^{(x,y)}(y',x') - S_0(x',y') \right|^2 dP(x',y') & \leq \int \left|\theta_0(x') - \theta_n^{(x,y)}(x')\right|^2  dP(x')= \delta_n^2,
\end{align*} 
as desired.

We now establish that $\delta_n = O_p(n^{-1/3})$. For a function class $\mathcal{F}$, let $N(\epsilon,\mathcal{F},L_2(P))$ denote the $\epsilon-$covering number \citep{vanderVaartWellner} of $\mathcal{F}$ and define the uniform entropy integral of $\mathcal{F}$ by  
\begin{equation*}
\mathcal{J}(\delta,\mathcal{F}):= \int_{0}^{\delta} \sup_{Q}\sqrt{\log N(\epsilon,\mathcal{F},L_2(Q))}\,d\epsilon\ ,
\end{equation*}
where the supremum is taken over all discrete probability distributions $Q$. We note that
\begin{align*}
\mathcal{J}(\delta, \Theta_{1,n})& =  \int_0^{\delta} \sup_Q \sqrt{N(\varepsilon, \Theta_{1,n}, \norm{\,\cdot\,}_Q)}\,d\varepsilon =  \int_0^{\delta} \sup_Q \sqrt{N(\varepsilon, \Theta_{iso}, \norm{\,\cdot\,}_{Q \circ f^{-1}})}\, d\varepsilon  = \mathcal{J}(\delta, \Theta_{iso})\ ,  
\end{align*}
where $Q \circ f^{-1}$ is the push-forward probability measure for the random variable $f(W)$. Additionally, the covering number bound for bounded monotone functions given in Theorem 2.7.5 of \cite{vanderVaartWellner} implies that 
$$\mathcal{J}(\delta, \Theta_{1,n}) = \mathcal{J}(\delta, \Theta_{iso}) \lesssim \sqrt{\delta}.$$
Recall that $f$ is obtained from an external dataset, say $\mathcal{E}_n$, independent of the calibration data. Noting that $f$ is deterministic conditional on a training dataset $\mathcal{E}_n$. Applying Theorem 2.1 of \cite{van2011local} conditional on $\mathcal{E}_n$, we obtain, for any $\delta > 0$, that
\begin{align*}
\label{eq7:theo1}
E\left[\sup_{h \in \Theta_{1,n}: \norm{h} \leq 4B\delta} (P_n - P)h\,|\, \mathcal{E}_n\right] &\lesssim\ n^{-1/2}\mathcal{J}(\delta, \Theta_{1,n}) \left(1 + \frac{\mathcal{J}(\delta, \Theta_{1,n})}{\sqrt{n} \delta^2} \right). \\
&\lesssim\ n^{-1/2}\mathcal{J}(\delta, \Theta_{iso}) \left(1 + \frac{\mathcal{J}(\delta, \Theta_{iso})}{\sqrt{n} \delta^2} \right).
\end{align*}
Noting that the right-hand side of the above bound is deterministic, we conclude that 
$$E\left[\sup_{h \in \Theta_{1,n}: \norm{h} \leq 4B\delta} (P_n - P)h \right] \lesssim n^{-1/2}\mathcal{J}(\delta, \Theta_{iso}) \left(1 + \frac{\mathcal{J}(\delta, \Theta_{iso})}{\sqrt{n} \delta^2} \right).$$
We use the so-called ``peeling" argument \citep{vanderVaartWellner} to obtain our bound for $\delta_n$. Note
\begin{align*}
    P\left(\delta_n^2 \geq n^{-2/3} 2^M \right) &= \sum_{m = M}^{\infty} P\left(2^{m+1} \geq n^{2/3} \delta_n^2 \geq   2^m \right) \\
    &= \sum_{m=M}^{\infty}P\left(2^{m+1} \geq n^{2/3} \delta_n^2 \geq   2^m \, ,   \delta_n^2  \leq \sup_{h \in \Theta_{1,n}: \norm{h} \leq 4B\delta_n} (P_n - P)h + O(1/n)\right)\\
     &= \sum_{m=M}^{\infty}P\left(2^{m+1} \geq n^{2/3} \delta_n^2 \geq   2^m \, ,   2^{2m}n^{-2/3}  \leq \sup_{h \in \Theta_{1,n}: \norm{h} \leq 4B2^{m+1} n^{-1/3}} (P_n - P)h + O(1/n)\right)\\
      &\leq \sum_{m=M}^{\infty}P\left(2^{2m}n^{-2/3}  \leq \sup_{h \in \Theta_{1,n}: \norm{h} \leq 4B2^{m+1} n^{-1/3}} (P_n - P)h + O(1/n)\right)\\
      & \leq  \sum_{m=M}^{\infty}\frac{E\left[\sup_{h \in \Theta_{1,n}: \norm{h} \leq 4B2^{m+1} n^{-1/3}} (P_n - P)h \right] + O(1/n)}{2^{2m}n^{-2/3}}\\
       & \leq \sum_{m=M}^{\infty} \frac{\mathcal{J}(2^{m+1} n^{-1/3}, \Theta_{iso}) \left(1 + \frac{\mathcal{J}(2^{2m+2} n^{-2/3}, \Theta_{iso})}{\sqrt{n} 2^{m+1} n^{-1/3}} \right)}{\sqrt{n}2^{2m}n^{-2/3}} + \sum_{m=M}^{\infty} \frac{O(1/n)}{2^{2m}n^{-2/3}}\\
        & \leq \sum_{m=M}^{\infty} \frac{2^{(m+1)/2}n^{-1/6}}{2^{2m}n^{-1/6}} +  \sum_{m=M}^{\infty}\frac{o(1)}{2^{2m}}\\
         & \lesssim \sum_{m=M}^{\infty} \frac{2^{(m+1)/2}}{2^{2m}} .
\end{align*} 
Since $ \sum_{m=1}^{\infty} \frac{2^{(m+1)/2}}{2^{2m}} < \infty$, we have that $\sum_{m=M}^{\infty} \frac{2^{(m+1)/2}}{2^{2m}} \rightarrow 0$ as $M \rightarrow \infty$. Therefore, for all $\varepsilon > 0$, we can choose $M > 0$ large enough so that
$$ P\left(\delta_n^2 \geq n^{-2/3} 2^M \right)  \leq \varepsilon.$$
We conclude that $\delta_n = O_p(n^{-1/3})$ as desired.

\end{proof}

\end{document}